\numberwithin{equation}{section}
\newcommand{\e}{\varepsilon}
\newcommand{\diag}{\textnormal{diag}}
\newcommand{\E}{\mathbf{E}}
\DeclareMathOperator{\argmin}{argmin}
\DeclareMathOperator{\rank}{rank}
\DeclareMathOperator{\tr}{tr}
\DeclareMathOperator*{\esssup}{ess\,sup}
\newcommand{\ceil}[1]{\lceil #1 \rceil}
\newcommand{\bigceil}[1]{\left\lceil #1 \right\rceil}
\newcommand{\T}{\mathsf{T}}
\newcommand{\calN}{\mathcal{N}}
\newcommand{\calZ}{\mathcal{Z}}
\newcommand{\calE}{\mathcal{E}}
\newcommand{\scrA}{\mathscr{A}}
\newcommand{\scrF}{\mathscr{F}}
\newcommand{\scrP}{\mathscr{P}}
\newcommand{\sfX}{\mathsf{X}}
\newcommand{\sfY}{\mathsf{Y}}
\newcommand{\sfZ}{\mathsf{Z}}
\newcommand{\sfP}{\mathsf{P}}
\newcommand{\sfM}{\mathsf{M}}
\newcommand{\Px}{\sfP_{X}}
\newcommand{\Pxbar}{\sfP_{\bar{X}}}
\newcommand{\floor}[1]{\lfloor #1 \rfloor}
\newcommand{\dx}{d_{\mathsf{X}}}
\newcommand{\dy}{d_{\mathsf{Y}}}
\newcommand{\dv}{d_{\mathsf{V}}}
\newcommand{\R}{\mathbb{R}}
\newcommand{\N}{\mathbb{N}}
\renewcommand{\Pr}{\mathbf{P}}
\newcommand{\iid}{iid}
\newcommand{\Gammadep}{\Gamma_{\mathsf{dep}}}
\newcommand{\abs}[1]{| #1 |}
\newcommand{\bigabs}[1]{\left| #1 \right|}
\newcommand{\norm}[1]{\lVert #1 \rVert}
\newcommand{\bignorm}[1]{\left\lVert #1 \right\rVert}
\newcommand{\ind}{\mathbf{1}}
\newcommand{\ip}[2]{\ensuremath{\langle #1, #2 \rangle}}
\newcommand{\tvnorm}[1]{\| #1 \|_{\mathsf{TV}}}
\newcommand{\opnorm}[1]{\| #1 \|_{\mathsf{op}}}
\newcommand{\bigopnorm}[1]{\left\| #1 \right\|_{\mathsf{op}}}
\newcommand{\mumin}{\underline{\mu}}
\newcommand{\rmd}{\mathrm{d}}
\newtheorem{definition}{Definition}[section] 
\newtheorem{theorem}{Theorem}[section] 
\newtheorem{proposition}{Proposition}[section] 
\newtheorem{corollary}{Corollary}[section] 
\newtheorem{assumption}{Assumption}[section] 
\newtheorem{lemma}{Lemma}[section]
\title{Learning with little mixing}
\author[1]{Ingvar Ziemann}
\author[2]{Stephen Tu}
\affil[1]{KTH Royal Institute of Technology}
\affil[2]{Google Brain Robotics}
\date{June 16, 2022, Revised: \today}
\begin{document}

\maketitle

\begin{abstract}
We study square loss in a realizable time-series framework with martingale difference noise. Our main result is a fast rate excess risk bound
which shows that whenever a
\emph{trajectory hypercontractivity} condition holds,
the risk of the least-squares estimator on dependent data matches the 
\iid\ rate order-wise after a burn-in time.
In comparison, many existing results in learning from dependent data 
have rates where the effective sample size is deflated by a factor of the mixing-time of the underlying process, even after the burn-in time.
Furthermore, our results allow the covariate process to exhibit long range correlations which are substantially weaker than geometric ergodicity.
We call this phenomenon \emph{learning with little mixing},
and present several examples for when it occurs:
bounded function classes for
which the $L^2$ and $L^{2+\e}$ norms are equivalent, 
ergodic finite state
Markov chains,
various parametric models,
and a broad family of infinite dimensional 
$\ell^2(\mathbb{N})$ ellipsoids. 
By instantiating our main result to system identification
of nonlinear dynamics with generalized linear model 
transitions, we obtain a nearly minimax optimal 
excess risk bound after only a polynomial burn-in time.

%


\end{abstract}
\section{Introduction}

Consider regression in the context of the time-series model:
\begin{align}
    \label{eq:ts}
    Y_t = f_\star(X_t) + W_t, \qquad t=0,1,2,\dots.
\end{align}
Such models are ubiquitous in applications of machine learning, signal processing, econometrics, and control theory. In our setup, the learner is given access to
$T \in \N_+$ pairs $\{(X_t, Y_t)\}_{t=0}^{T-1}$ drawn from the model \eqref{eq:ts}, and is asked to output a hypothesis $\widehat{f}$ from a hypothesis class $\mathscr{F}$
which best approximates the (realizable) regression function $f_\star \in \mathscr{F}$ in terms of square loss.

In this work, we study the least-squares estimator (LSE). This procedure  minimizes the empirical risk associated to the square loss over the class $\mathscr{F}$. When each pair of observations $(X_t,Y_t)$ is drawn \iid\ from some fixed distribution,
this procedure is minimax optimal over a
broad set of hypothesis classes~\citep{tsybakov2009introduction,lecue2013learning,mendelson2014learning,wainwright2019high}. However, 
much less is known about the optimal rate of convergence
for the general time-series model~\eqref{eq:ts},
as correlations across time in the covariates $\{X_t\}$ complicate the analysis.

With this in mind, we seek to extend our understanding of the minimax optimality of the LSE for the time-series model \eqref{eq:ts}. We show that for a broad class of function
spaces and covariate processes,
the effects of data dependency across time enter the LSE excess risk only
as a higher order term, whereas the leading term in the excess risk remains order-wise identical
to that in the \iid\ setting. Hence, after a sufficiently long,
but finite \emph{burn-in time}, the LSE's excess risk scales as if all $T$ samples are independent. 
This behavior applies to processes that
exhibit correlations which decay slower than 
geometrically. We refer to this double phenomenon,
where the mixing-time only enters as a burn-in time, and
where the mixing requirement is mild, as
\emph{learning with little mixing}.

Our result stands in contrast to a long line of work
on learning from dependent data (see e.g.,\ \citep{yu1994mixing,modha1996regression,mohri2008rademachermixing,steinwart2009fastlearningmixing,zou2009mixing,agarwal2012generalization,duchi2012ergodicmd,kuznetsov2017generalization,roy2021dependent,sancetta2021rkhs} and the references within), 
where the blocking technique \citep{yu1994mixing} is used to create independence
amongst the dependent covariates, so that tools to analyze
independent learning can be applied.
While these aforementioned works differ in their specific setups,
the main commonality is that the resulting dependent data rates mimic
the corresponding independent rates, but with the caveat that the
sample size is replaced by an ``effective'' sample size that is \emph{decreased} in some way by the mixing-time, even after any necessary burn-in time. Interestingly,
the results of \citet{ziemann2022single} studying the LSE
on the model \eqref{eq:ts}
also suffer from such sample degradation, but do not rely on the blocking technique.





The model \eqref{eq:ts} captures learning dynamical systems
by setting $Y_{t} = X_{t+1}$, so that the regression function
$f_\star$ describes the dynamics of the state variable $X_t$.
Recent progress in system identification
shows that the lack of ergodicity does not necessarily degrade
learning rates.
Indeed, when the states evolve as a 
linear dynamical system (i.e., the function $f_\star$ is linear),
learning rates are not deflated by any mixing times, and match
existing rates for \iid\ linear regression~\citep{simchowitz2018learning,faradonbeh2018finite,sarkar2019near,tu2022learning}.
%
\citet{kowshik2021near,gao2021simam} extend results of this flavor to parameter
recovery of dynamics driven by a generalized linear model.
The extent to which this phenomenon---less ergodicity not impeding learning---generalizes beyond linear and generalized linear models is a
key motivation for our work.


\paragraph{Contributions}

We consider the realizable setting, where $f_\star$ is assumed to 
be contained in a known function space $\scrF$.
Our results rest on two assumptions regarding both the covariate process
$\{X_t\}$ and the function space $\scrF$.
The first assumption posits that
the process $\{X_t\}$ exhibits some mild form of ergodicity
(that is significantly weaker than the typical geometric 
ergodicity assumption). The second assumption is a hypercontractivity condition that holds uniformly in $\scrF$
along the trajectory $\{X_t\}$, extending contractivity
assumptions for \iid\ learning~\citep{mendelson2014learning} 
to dependent processes.

Informally, our main result (\Cref{thm:themainthm}, presented in \Cref{sec:results}),
shows that under these two assumptions, 
letting $\textnormal{comp}(\scrF)$ denote some (inverse) measure of complexity of $\scrF$,
the LSE $\widehat{f}$ satisfies:
\begin{align}
\label{eq:thm1inbrief}
    \E \|\widehat f -f_\star\|_{L^2}^2 \lesssim \left( \frac{\textnormal{dimensional factors}\times\sigma_W^2}{T} \right)^{\textnormal{comp}(\scrF)} + \textnormal{higher order $o(1/T^{\textnormal{comp}(\scrF)})$ terms}.
\end{align}
The first term in \eqref{eq:thm1inbrief} matches
existing LSE risk bounds for \iid\ learning order-wise, and most importantly,
does not include any dependence on the mixing-time of the process.
Indeed, all mixing-time dependencies enter only in the higher order term.
Since this term scales as $o(1/T^{\textnormal{comp}(\scrF)})$,
it becomes negligible after a finite burn-in time. 
This captures the crux of our results:
on a broad class of problems, given enough data,
the LSE applied to time-series model \eqref{eq:ts} behaves as if all samples are independent.

\Cref{sec:examples} provides several examples for which the trajectory
hypercontractivity assumption holds.
When the covariate process $\{X_t\}$
is generated by a finite-state irreducible and aperiodic Markov chain, 
then any function class $\scrF$ satisfies the requisite condition.
More broadly, the condition is satisfied for any bounded function classes for which the $L^2$ and $L^{2+\e}$ 
norms (along trajectories)
are equivalent. Next, we show that many infinite dimensional function
spaces based on $\ell^2(\N)$ ellipsoids satisfy our hypercontractivity condition, demonstrating that
our results are not inherently limited to finite-dimensional
hypothesis classes.

To demonstrate the broad applicability of our framework, \Cref{sec:results:comparison} instantiates our main result on two
system identification problems that have received recent attention
in the literature: linear dynamical systems (LDS),
and systems with generalized linear model (GLM) transitions. 
For stable LDS, after a polynomial burn-in time, we recover an excess risk
bound that matches the \iid\ rate. A more general form of this result was recently established by   \citet{tu2022learning}. 
For stable GLMs, also after a polynomial burn-in time,
we obtain the first excess risk bound for this problem
which matches the \iid\ rate, up to logarithmic factors in various
problem constants including the mixing-time.
In both of these settings, our excess risk bounds also yield
nearly optimal rates for parameter recovery, matching
known results for LDS~\citep{simchowitz2018learning} and
GLMs~\citep{kowshik2021near} in the stable case.
In \Cref{sec:experiments}, we show experimentally, 
using the stable GLM model, that
the trends predicted by our theory are indeed realized
in practice.


\section{Related work}
\label{sec:related}

While regression is a fundamental problem studied across many
disciplines, our work draws its main inspiration from \citet{mendelson2014learning} and \citet{simchowitz2018learning}.
\citet{mendelson2014learning} shows that for nonparametric \iid\ regression, only minimal assumptions are required for one-sided isometry, and thus learning. \citet{simchowitz2018learning} build on this intuition and provide mixing-free rates for linear regression over trajectories generated by a linear dynamical system. 
We continue this trend, by leveraging one-sided isometry to show that mixing only enters as a higher order term in the rates of the nonparametric LSE. More broadly, the technical developments we follow synthesize techniques from two lines of work: nonparametric regression with \iid\ data, and learning from dependent data. 

\paragraph{Nonparametric regression with \iid\ data}
Beyond the seminal work of \citet{mendelson2014learning}, the works \citep{audibert2011robust,lecue2013learning,mendelson2017aggregation} all study \iid\ regression with square loss under various moment equivalence conditions.  
In addition to moment equivalence, we build on
the notion of offset Rademacher complexity defined by \citet{liang2015learning} in the context of \iid\ regression. Indeed, we show that a martingale analogue of the offset complexity (described in~\citet{ziemann2022single}) characterizes the LSE rate in~\eqref{eq:ts}.
%


\paragraph{Learning from dependent data}
As discussed previously, many existing results for learning
from dependent data reduce the problem to independent
learning via the blocking technique~\citep{yu1994mixing},
at the expense of sample complexity deflation by the mixing-time.
\citet{nagaraj2020least} prove a lower bound for linear regression stating that in a worst case agnostic model,
this deflation is unavoidable.
Moreover, if the linear regression problem is realizable,
\citet{nagaraj2020least} provide upper and lower bounds showing
that the mixing-time only affects
the burn-in time, but not the final risk. We note that their upper bound is 
an algorithmic result that holds only for a
specific modification of SGD.
Our work can be interpreted as 
an upper bound in the more general
nonparametric setting,
where we put forth sufficient conditions to
recover the \iid\ rate after a burn-in time. Our result is algorithm agnostic and directly
applies to the empirical risk minimizer.
\citet{ziemann2022single} also study the model~\eqref{eq:ts}, and provide an information-theoretic analysis of the nonparametric LSE.
However, their approach fundamentally reduces to showing two-sided
concentration---something our work evades---and therefore their bounds incur worst case dependency on the mixing-time.  
%
%
\citet{roy2021dependent} extend the results from \citet{mendelson2014learning} to the dependent data setting. 
While following Mendelson's argument
allows their results to handle
non-realizability and heavy-tailed noise, 
their proof ultimately still 
relies on
two-sided concentration for both the
``version space'' and the ``noise interaction''. Hence,
their rates end up 
degrading for slower mixing processes.
We note that this is actually expected in the non-realizable setting 
in light of the lower bounds in \citet{nagaraj2020least}.


The measure of dependencies we use 
for the process $\{X_t\}$ is due to \citet{samson2000concentration}.
Recently, \citet{dagan2019learning} use a similar measure
to study learning when the covariates have no obvious 
sequential ordering (e.g.,\ a graph structure or Ising model).
However, our results are not directly comparable, other than 
noting that their risk bounds 
degrade as the measure of correlation increases.

Results in linear system identification show that
lack of ergodicity does not degrade parameter recovery rates~\citep{simchowitz2018learning,faradonbeh2018finite,rantzer2018adaptive,sarkar2019near,jedra2020finite,tu2022learning}.
Beyond linear system identification, 
\citet{sattar2020non,foster2020learning,kowshik2021near,gao2021simam}
prove parameter recovery bounds for dynamical systems
driven by a generalized linear model (GLM) transition. Most relevant
are \citet{kowshik2021near} and \citet{gao2021simam}, who again show that the lack of
ergodicity does not hamper rates. Indeed, \citet{gao2021simam} even manage to do so in 
a semiparametric setting with an unknown link function.
As mentioned previously, our main result instantiated to these problems
in the stable case matches existing excess risk and parameter recovery
bounds for linear system identification, and actually provides 
the sharpest known excess risk bound for the GLM setting
(when the link function is known).
A more detailed comparison
to existing LDS results is given in \Cref{sec:results:lds},
and to existing GLM results in \Cref{sec:appendix:glm:comparison}.

\section{Problem formulation}
\label{sec:probform}




The time-series (\ref{eq:ts}) evolves on two subsets of Euclidean space, $\mathsf{X}\subset \mathbb{R}^{\dx}$ and $\mathsf{Y} \subset \mathbb{R}^{\dy}$, with $X_t \in \mathsf{X}$ and $Y_t,W_t \in \mathsf{Y}$. 
Expectation (resp.\ probability) with respect to all the randomness of the underlying
probability space is denoted by $\E$ (resp.\ $\Pr$).
The Euclidean norm on $\mathbb{R}^{d}$ is denoted $\|\cdot\|_2$,
and the unit sphere in $\R^d$ is denoted $\mathbb{S}^{d-1}$.
For a matrix $M \in \R^{d_1 \times d_2}$,
$\opnorm{M}$ denotes the largest singular value,
$\sigma_{\min}(M)$ the smallest non-zero singular value,
and $\mathrm{cond}(M) = \opnorm{M}/\sigma_{\min}(M)$ the condition
number. When the matrix $M$ is symmetric, $\lambda_{\min}(M)$ will be used
to denote its minimum eigenvalue.

We assume there exists a filtration $\{\mathcal{F}_t \}$ such that (a) $\{W_t\}$ is a square integrable martingale difference sequence (MDS) with respect to this filtration, and (b) $\{X_t\}$ is adapted to $\{\mathcal{F}_{t-1}\}$. Further tail conditions on this MDS will be imposed as necessary later on. 

Let $\scrF$ be a hypothesis space of functions mapping
$\R^{\dx}$ to $\R^{\dy}$. We assume that the true regression function is an element of $\mathscr{F}$
(i.e.,\ $f_\star \in \scrF$), and that $\scrF$ is known to the learner.
Given two compatible function spaces $\scrF_1, \scrF_2$, let
$\scrF_1 - \scrF_2 \triangleq \{ f_1 - f_2 \mid f_1 \in \scrF_1, f_2 \in \scrF_2 \}$. A key quantity in our analysis is the shifted function class
$\scrF_\star \triangleq \scrF - \{ f_\star \}$. Our results will be stated under
the assumption that $\scrF_\star$ is \emph{star-shaped},\footnote{A function class $\scrF$ is star-shaped if for any $\alpha \in [0, 1]$, $f \in \scrF$ implies $\alpha f \in \scrF$.
}
although we will see that this is not too restrictive.
For any function $f : \sfX \rightarrow \R^{\dy}$, we define
$\norm{f}_\infty \triangleq \sup_{x \in \sfX} \norm{f}_2$.
A function $f$ is $B$-bounded if $\| f\|_{\infty} \leq B$. Similarly, a hypothesis class is $B$-bounded if each of its elements is $B$-bounded.
For a bounded class $\scrF$ and
resolution $\varepsilon > 0$, the quantity
$\calN_\infty(\scrF, \varepsilon)$ denotes the size of the minimal
$\varepsilon$-cover of $\scrF$ (contained in $\scrF$) in the $\norm{\cdot}_\infty$-norm.

We fix a $T \in \N_+$, indicating the number of labeled observations $\{(X_t, Y_t)\}_{t=0}^{T-1}$ from the
time-series \eqref{eq:ts} that are available to the learner.
The joint distribution of $X_{0:T-1}\triangleq (X_0,\dots,X_{T-1})$
is denoted $\Px$. For $p \geq 1$, we endow $\mathscr{F}-\mathscr{F}$ with $L^p(\mathsf{P}_X)$ norms: $\|f-g\|_{L^p}^p\triangleq \frac{1}{T} \sum_{t=0}^{T-1}\E \|f(X_t)-g(X_t)\|_2^p$, where expectation is taken with respect to $\mathsf{P}_X$. 
We will mostly be interested in $L^2(\mathsf{P}_X)$, hereafter often just referred to as $L^2$.  This is the $L^2$ space associated to the law of the uniform mixture over $X_{0:T-1}$ and thus, for \iid\ data, coincides with the standard $L^2$ space often considered in \iid\ regression. 
For a radius $r > 0$, we let 
$B(r)$ denote the closed ball of $\scrF_\star$ with radius $r$ in $L^2$,
and we let $\partial B(r)$ denote its boundary:
\begin{align*}
    B(r) \triangleq \left\{ f \in \scrF_\star \,\bigg|\, \frac{1}{T} \sum_{t=0}^{T-1} \E\norm{f(X_t)}_2^2 \leq r^2 \right\}, \:\: \partial B(r) \triangleq \left\{ f \in \scrF_\star \,\bigg|\, \frac{1}{T} \sum_{t=0}^{T-1} \E\norm{f(X_t)}_2^2 = r^2 \right\}.
\end{align*}

The learning task is to produce an estimate $\widehat f$ of $f_\star$, which renders the excess risk $\|\widehat f-f_\star\|_{L^2}^2$ as small as possible. We emphasize that $\|\widehat f-f_\star\|_{L^2}^2 = \frac{1}{T} \sum_{t=0}^{T-1}\E_{\tilde X_{0:T-1}} \|\widehat f(\tilde X_t)-f_\star(\tilde X_t)\|_2^2$ where $\tilde X_{0:T-1}$ is a fresh, statistically independent, sample with the same law $\mathsf{P}_X$ as $X_{0:T-1}$. Namely, $\|\widehat f-f_\star\|_{L^2}^2$ is a random quantity, still depending on the internal randomness of the learner
and that of the sample $X_{0:T-1}$ used to generate $\widehat f$. We
study the performance of the least-squares estimator (LSE) defined as
$\widehat f \in \argmin_{f\in \mathscr{F}} \left\{\frac{1}{T}\sum_{t=0}^{T-1} \| Y_t -f(X_t)\|_2^2\right\}$,
and measure the excess risk
$\E \norm{\widehat{f} - f_\star}_{L^2}^2$.

\section{Results}
\label{sec:results}

This section presents our main result.
We first detail the definitions behind our main assumptions
in \Cref{sec:prels}.
The main result and two corollaries are then presented in~\Cref{sec:littlemix}. 

\subsection{Hypercontractivity and the dependency matrix}
\label{sec:prels}

\paragraph{Hypercontractivity} We first 
state our main trajectory hypercontractivity condition,
which we will use to establish lower isometry.
The following definition is heavily inspired by recent work on learning without concentration~\citep{mendelson2014learning,mendelson2017aggregation}.

\begin{definition}[Trajectory $(C,\alpha)$-hypercontractivity]
\label{def:hypconrelax}
Fix constants $C > 0$ and $\alpha \in [1,2]$. We say that the tuple $(\mathscr{F},\mathsf{P}_X)$ satisfies the \emph{trajectory $(C,\alpha)$-hypercontractivity} condition if
\begin{align}
\label{cond:hyperconttrajrelax}
  \E\left[ \frac{1}{T}\sum_{t=0}^{T-1} \| f(X_t)\|^4_2\right] \leq C \left(\E \left[ \frac{1}{T}\sum_{t=0}^{T-1}  \| f(X_t)\|_2^2\right]\right)^\alpha \textnormal{for all $f\in\mathscr{F}$}.
\end{align}
Here, the expectation is with respect to $\mathsf{P}_X$, the joint law of $X_{0:T-1}$. 
\end{definition}

Condition (\ref{cond:hyperconttrajrelax}) 
interpolates between boundedness and small-ball behavior. Indeed, if the class $\mathscr{F}$ is $B$-bounded, then it satisfies trajectory $(B^2, 1)$-hypercontractivity trivially. 
On the other hand, for $\alpha=2$,  \eqref{cond:hyperconttrajrelax} asks that 
$\| f\|_{L^4} \leq C^{1/4} \|f\|_{L^2}$ for trajectory-wise $L^p$-norms; by the Paley-Zygmund inequality, this 
implies that
a small-ball condition holds. Moreover, if for some $\e \in (0, 2)$, the trajectory-wise $L^2$ and $L^{2+\e}$ norms are equivalent on $\mathscr{F}$, then \Cref{prop:momeqv} 
(\Cref{sec:examples}) shows that the condition holds for a nontrivial $\alpha=1+\e/2 \in (1, 2)$. 
More examples are given in \Cref{sec:examples}.

Our main results assume that $(\scrF_\star, \Px)$
(or a particular subset of $\scrF_\star$) satisfies the trajectory
$(C, \alpha)$-hypercontractivity condition with
$\alpha > 1$, which
we refer to as the \emph{hypercontractive} regime.
The condition $\alpha > 1$ is required in our analysis
for the lower order excess risk term 
to not depend on the mixing-time.
Our results instantiated for the $\alpha=1$ case
directly correspond to existing work by \citet{ziemann2022single}, and
exhibit a lower order term that depends on the mixing-time.



\paragraph{Ergodicity via the dependency matrix}

We now state the main definition we use to measure 
the stochastic dependency of a process.
Recall that for two measures $\mu, \nu$ on the same measurable space
with $\sigma$-algebra $\mathcal{A}$,
the total-variation norm is defined as $\tvnorm{\mu - \nu} \triangleq \sup_{A \in \mathcal{A}} | \mu(A) - \nu(A) |$. 
%
%
\begin{definition}[{Dependency matrix, \citet[Section 2]{samson2000concentration}}]
\label{def:depmat}
The \emph{dependency matrix}
of a process $\{Z_t\}_{t=0}^{T-1}$ with distribution $\mathsf{P}_{Z}$ is the (upper-triangular) matrix $\Gamma_{\mathsf{dep}}(\mathsf{P}_{Z})=\{\Gamma_{ij}\}_{i,j=0}^{T-1} \in \mathbb{R}^{T\times T}$ defined as follows.
Let $\mathcal{Z}_{0:i}$ denote the $\sigma$-algebra 
generated by $\{Z_t\}_{t=0}^{i}$.
For indices $i < j$, let 
\begin{align}
    \Gamma_{ij} = \sqrt{ 2 \sup_{A \in \mathcal{Z}_{0:i}} \tvnorm{ \sfP_{Z_{j:T-1}}(\cdot \mid A) - \sfP_{Z_{j:T-1}} } }. \label{eq:dep_mat_coeffs}
\end{align}
For the remaining indices $i \geq j$, let
$\Gamma_{ii} =1$ and $\Gamma_{ij}=0$ when $i > j$ (below the diagonal).
\end{definition}
%
%
%
Given the dependency matrix from \Cref{def:depmat}, 
we measure the dependency of 
the process $\Px$ by the quantity 
$\opnorm{\Gammadep(\Px)}$. Notice that this quantity always satisfies
$1 \leq \opnorm{\Gammadep(\Px)} \lesssim T$.
The lower bound indicates that the process $\Px$ is independent
across time. The upper bound indicates that the process is fully
dependent, e.g., $X_{t+1} = X_t$ for all $t \in \N$.

Our results apply to cases where $\opnorm{\Gammadep(\Px)}^2$ 
grows sub-linearly in $T$-- the exact requirement depends on  the specific function class $\scrF$.
If the process $\{X_t\}$ is geometrically $\phi$-mixing, then $\opnorm{\Gammadep(\Px)}^2$ 
is upper bounded by a constant that depends on the 
mixing-time of the process, and is
independent of $T$ \citep[Section 2]{samson2000concentration}. 
Other examples, such as processes satisfying Doeblin's condition~\citep{meyn1993markov}, are given in \citet[Section 2]{samson2000concentration}. %
When $\{X_t\}$ is a stationary time-homogenous Markov chain
with invariant distribution $\pi$, the coefficients $\Gamma_{ij}$ simplify to
$\Gamma_{ij}^2 = 2 \sup_{A \in \mathcal{X}_\infty} \tvnorm{\sfP_{X_{j-i}}(\cdot \mid A) - \pi}$
for indices $j > i$,
where $\mathcal{X}_\infty$ is the $\sigma$-algebra generated
by $X_\infty \sim \pi$ (cf.~\Cref{prop:tv_reduction_markov}).
Hence, the requirement
$\opnorm{\Gammadep(\Px)}^2 \lesssim T^\beta$ for $\beta \in (0, 1)$
then corresponds to
$\sup_{A\in\mathcal{X}_\infty}\tvnorm{\sfP_{X_{t}}(\cdot \mid A) - \pi} \lesssim 1/t^{1-\beta}$ for $t \in \N_+$.
\citet{jarner2002polynomial} give various examples and conditions to check polynomial convergence rates for Markov chains. We also provide further means to verify $\opnorm{\Gamma_{\mathsf{dep}}(\mathsf{P}_X)} = O(1)$ in \Cref{sec:basictoolsdepmat} and \Cref{sec:appendix:truncated_gaussian}.




\subsection{Learning with little mixing}
\label{sec:littlemix}

A key quantity appearing in our bounds is
a martingale variant of the notion of Gaussian complexity.
\begin{definition}[{Martingale offset complexity, cf.\ \citet{liang2015learning}, \ \citet{ziemann2022single}}]
\label{eq:martingale_complexity}
For the regression problem (\ref{eq:ts}), the martingale offset complexity of a function space $\mathscr{F}$ is given by:
\begin{align}
\label{eq:thecomplexitymeasure}
\mathsf{M}_T(\mathscr{F}) \triangleq   \sup_{f\in \mathscr{F}} \left\{\frac{1}{T}\sum_{t=0}^{T-1} 4 \langle W_t,  f(X_t) \rangle - \|f(X_t)\|_2^2 \right\}.
\end{align}
\end{definition}


Recall that $\scrF_\star = \scrF - \{f_\star\}$ is the centered
function class
and $\partial B(r) = \{ f \in \scrF_\star \mid \norm{f}_{L^2} = r \}$ is the boundary of the $L^2$ ball $B(r)$.
The following theorem is the main result of this paper.
\begin{theorem}
\label{thm:themainthm}
Fix $B > 0$, 
$C : (0, B] \to \mathbb{R}_+$,
$\alpha \in [1,2]$, and
$r \in (0, B]$. 
Suppose that $\mathscr{F}_\star$ is star-shaped and $B$-bounded.
Let $\scrF_r \subset \mathscr{F}_\star$ be a $r/\sqrt{8}$-net of $\partial B(r)$ in the supremum norm $\norm{\cdot}_\infty$, and suppose that $(\mathscr{F}_r,\mathsf{P}_X)$ satisfies the trajectory $(C(r),\alpha)$-hypercontractivity condition  (cf.~\Cref{def:hypconrelax}). Then:
\begin{align}
    \E \|\widehat f - f_\star\|_{L_2}^2 \leq 8 \E \mathsf{M}_T(\mathscr{F}_\star)   +r^2+ B^2|\mathscr{F}_r| \exp \left( \frac{-T r^{4-2\alpha} }{8C(r) \opnorm{ \Gamma_{\mathsf{dep}}(\mathsf{P}_X)}^2 } \right). \label{eq:mainthm}
\end{align}
\end{theorem}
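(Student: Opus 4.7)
The plan is to combine three ingredients: (i) the LSE basic inequality, which bounds the empirical excess risk of $\hat h := \widehat f - f_\star$ by the martingale offset complexity; (ii) a lower isometry argument that transfers empirical to population norms on $\partial B(r)$ by covering with $\scrF_r$ and rescaling via the star-shape of $\scrF_\star$; and (iii) a Bernstein-type concentration inequality for $\hat E(f) := \frac{1}{T}\sum_t \|f(X_t)\|_2^2$ on dependent data, where hypercontractivity furnishes the variance proxy and $\opnorm{\Gammadep(\Px)}$ quantifies the dependence.

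For the first ingredient, optimality of $\widehat f$ together with $Y_t = f_\star(X_t)+W_t$ yields
\[
\hat E(\hat h) \leq \frac{2}{T}\sum_{t=0}^{T-1}\langle W_t, \hat h(X_t)\rangle.
\]
Multiplying by $2$ and subtracting $\hat E(\hat h)$ from both sides gives $\hat E(\hat h) \leq \frac{4}{T}\sum \langle W_t, \hat h(X_t)\rangle - \hat E(\hat h) \leq \mathsf{M}_T(\scrF_\star)$, since $\hat h \in \scrF_\star$.

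Next, I introduce the ``good event''
\[
\mathcal{E} := \left\{ \hat E(f) \geq \tfrac{r^2}{2} \text{ for every } f \in \scrF_r\right\}.
\]
On $\mathcal{E}$, the lower isometry extends from the net $\scrF_r$ to all of $\partial B(r)$ via the triangle inequality for the empirical seminorm $\sqrt{\hat E(\cdot)}$ and the $r/\sqrt 8$-cover property: any $g \in \partial B(r)$ has a sup-norm neighbor $g' \in \scrF_r$ with $\sqrt{\hat E(g')} \geq r/\sqrt 2$ and $\sqrt{\hat E(g-g')} \leq \|g-g'\|_\infty \leq r/\sqrt 8$, so $\sqrt{\hat E(g)} \geq r/\sqrt 8$. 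The star-shape of $\scrF_\star$ then propagates this bound to all of $\scrF_\star$: whenever $\|\hat h\|_{L^2} > r$, the rescaling $g := (r/\|\hat h\|_{L^2})\hat h \in \partial B(r)$ satisfies $\hat E(\hat h) = (\|\hat h\|_{L^2}/r)^2 \hat E(g) \geq \|\hat h\|_{L^2}^2/8$. Combining with the basic inequality,
\[
\|\hat h\|_{L^2}^2 \leq r^2 + 8\,\mathsf{M}_T(\scrF_\star) \quad \text{on } \mathcal{E}.
\]

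Finally, I would bound $\Pr(\mathcal{E}^c)$ by a union bound over $\scrF_r$ combined with a Bernstein-type lower-tail inequality for dependent data. Each $f \in \scrF_r \subset \partial B(r)$ has $\E\hat E(f) = r^2$, and hypercontractivity supplies the variance proxy $\frac{1}{T}\sum_t \E \|f(X_t)\|_2^4 \leq C(r) r^{2\alpha}$. A Samson-style Bernstein inequality, in which $\opnorm{\Gammadep(\Px)}^2$ replaces the usual independence factor, then produces
\[
\Pr\!\left(\hat E(f) \leq r^2/2\right) \leq \exp\!\left(-\frac{T r^{4-2\alpha}}{8\, C(r) \opnorm{\Gammadep(\Px)}^2}\right).
\]
Splitting $\E\|\hat h\|_{L^2}^2 = \E[\|\hat h\|_{L^2}^2 \mathbf{1}_{\mathcal{E}}] + \E[\|\hat h\|_{L^2}^2 \mathbf{1}_{\mathcal{E}^c}]$ and using $\|\hat h\|_{L^2}^2 \leq B^2$ on $\mathcal{E}^c$ (from $B$-boundedness) delivers the stated inequality. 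The principal obstacle is precisely this Bernstein-type step: a naive Hoeffding estimate yields only an $\exp(-Tr^4/B^4)$ tail, failing to leverage hypercontractivity; securing a variance-based inequality that simultaneously exploits the tight proxy $C(r)r^{2\alpha}$ and pays only $\opnorm{\Gammadep(\Px)}^2$ for dependence is what makes the $\alpha > 1$ regime yield a genuine burn-in rather than sample-size deflation.
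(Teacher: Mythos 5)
Your proposal is correct and follows essentially the same route as the paper: the basic inequality bounding the empirical excess risk by $\mathsf{M}_T(\scrF_\star)$, lower isometry on $\partial B(r)$ via the net plus star-shaped rescaling (your triangle-inequality step for the empirical seminorm gives the same $r^2/8$ constant as the paper's parallelogram argument), and a union bound over $\scrF_r$ with the event decomposition using $B$-boundedness. The ``Samson-style Bernstein inequality'' you flag as the principal obstacle is exactly what the paper imports as a known result (\citet[Theorem 2]{samson2000concentration}, stated as \Cref{prop:samsonexpineq}), combined with a Chernoff argument in \Cref{lem:lowertailmix}.
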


The assumption that $\scrF_\star$ is star-shaped
in \Cref{thm:themainthm} is not particularly restrictive. 
Indeed, \Cref{thm:themainthm} still holds if we replace $\scrF_\star$ by its star-hull $\mathrm{star}(\scrF_\star) \triangleq \{ \gamma f \mid \gamma \in [0, 1], \:\: f \in \scrF_\star \}$, and $\partial B(r)$ with the boundary of
the $r$-sphere of $\mathrm{star}(\scrF_\star)$.
In this case, we note that (a) 
the metric entropy of $\mathrm{star}(\scrF_\star)$ 
is well controlled by the metric entropy of $\scrF_\star$,\footnote{Specifically, $\log\calN_\infty(\mathrm{star}(\scrF_\star), \e) \leq \log(2B/\e) + \log\calN_\infty(\scrF_\star, \e/2)$~\citep[Lemma 4.5]{mendelson2002improving}.} and (b) the trajectory hypercontractivity conditions over a class $\scrF_\star$ and its star-hull $\mathrm{star}(\scrF_\star)$ are equivalent. Hence, at least whenever we are able to verify hypercontractivity over the entire class $\scrF_\star$, little generality is lost. While most of our examples are star-shaped, we will need the observations above when we work with generalized linear model dynamics in \Cref{sec:results:glm}.



To understand \Cref{thm:themainthm}, we will proceed in a series of steps. We first need to understand the martingale complexity term
$\E \mathsf{M}_T(\scrF_\star)$.
Since $\mathscr{F}_\star$ is $B$-bounded,
if one further imposes the tail conditions that the noise process $\{W_t$\} is a $\sigma_W^2$-sub-Gaussian MDS,\footnote{
That is, for any $u \in \mathbb{S}^{\dy-1}$,
$\lambda \in \R$,
and $t \in \N$,
we have $\E[\exp(\lambda \langle W_t, u\rangle) \mid \mathcal{F}_{t-1} ]\leq \exp(\lambda^2\sigma_W^2/2)$. 
} a chaining argument detailed in \citet[Lemma 4]{ziemann2022single} shows that:
\begin{equation}
\label{eq:theformofthebound}
\begin{aligned}
     \E \mathsf{M}_T(\mathscr{F}_\star) \lesssim \inf_{\substack{\gamma>0,\\ \delta \in [0, \gamma]}} \Bigg\{\gamma^2 + \frac{\sigma^2_W\log \mathcal{N}_\infty(\scrF_\star,\gamma)}{T}
     + \sigma_W \sqrt{\dy} \delta
     + \frac{\sigma_W } {\sqrt{T}} \int_{\delta}^\gamma \sqrt{\log \mathcal{N}_\infty(\scrF_\star,s)} \,\rmd s  \Bigg\}.
\end{aligned}
\end{equation}
In particular, this bound only depends on $\scrF_\star$ and is \emph{independent} of 
$\opnorm{\Gammadep(\Px)}^2$. Furthermore, \eqref{eq:theformofthebound} coincides with the corresponding risk bound for the LSE with \iid\ covariates \citep{liang2015learning}.


Given that $\E \mathsf{M}_T(\scrF_\star)$ corresponds to the
rate of learning from $T$ \iid\ covariates, the form of 
\eqref{eq:mainthm} suggests that we choose $r^2 \lesssim \E \sfM_T(\scrF_\star)$, so that the
dominant term in \eqref{eq:mainthm} is equal to $\E \sfM_T(\scrF_\star)$ in scale.
Given that $r$ has been set, the only remaining degree of freedom
in \eqref{eq:mainthm} is to set $T$ large enough (the burn-in time) 
so that the third term is dominated by $r^2$. 
Thus, it is this third term in \eqref{eq:mainthm}
that captures the interplay between
the function class $\scrF_\star$ and the dependency measure
$\opnorm{\Gammadep(\Px)}$.
We will now consider specific examples to illustrate how
the burn-in time can be set.

Our first example supposes that
(a) $\scrF_\star$ satisfies the trajectory $(C,2)$-hypercontractivity
condition, and that (b)  $\scrF_\star$
is nonparametric, but not too large:
\begin{align}
    \exists\, p > 0,\, q \in (0, 2) \textnormal{ s.t. } \log{\calN_{\infty}(\scrF_\star, \varepsilon)} \leq p \left(\frac{1}{\varepsilon}\right)^q \textnormal{ for all } \varepsilon \in (0, 1). \label{eq:nonparametric_simple}
\end{align}
Covering numbers of the form \eqref{eq:nonparametric_simple}
are typical for sufficiently smooth function classes,
e.g.\ the space of $k$-times continuously differentiable
functions mapping $\sfX \to \sfY$ for any $k \geq \ceil{\dx/2}$~\citep{tikhomirov1993entropy}.
If condition \eqref{eq:nonparametric_simple} holds and the noise process $\{W_t\}$ is a sub-Gaussian MDS, inequality~(\ref{eq:theformofthebound}) yields
$\E \sfM_T(\scrF_\star) \lesssim T^{-\frac{2}{2+q}}$,
and hence we want to set $r^2 = o(T^{-\frac{2}{2+q}})$. Carrying out this program yields the following corollary.
\begin{corollary}
\label{corr:Cconst_v2}
Fix $B \geq 1$, 
$C > 0$, 
$p > 0$,
$q \in (0, 2)$, 
and $\gamma \in (0, \frac{q}{2+q})$.
Suppose that $\scrF_\star$
is star-shaped, $B$-bounded, satisfies \eqref{eq:nonparametric_simple},
and $(\scrF_\star, \Px)$ satisfies the trajectory $(C, 2)$-hypercontractivity condition. Suppose that $T$ satisfies:
\begin{align}
    T \geq \max\left\{\left[8 (32p+1)C \opnorm{\Gammadep(\Px)}^2 \right]^{\frac{1}{1-\frac{q}{2}\left(\frac{2}{2+q} + \gamma\right)}}, \left[\log{B} + \frac{4}{q} \log\left(\frac{8}{q}\right)\right]^{\frac{1}{\frac{q}{2}\left(\frac{2}{2+q} + \gamma\right)}} \right\}. \label{eq:Cconst_burnin_time}
\end{align}
Then, we have that:
\begin{align}
\label{eq:stmtnonparcorr}
    \E \norm{\widehat{f}-f_\star}_{L^2}^2 \leq 8 \E \sfM_T(\scrF_\star) + 2T^{-\left(\frac{2}{2+q}+\gamma\right)}.
\end{align}
\end{corollary}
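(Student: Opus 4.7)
The plan is to apply Theorem 3.1 with $\alpha = 2$ and constant hypercontractivity function $C(r) \equiv C$, then choose $r$ to match the claimed $T^{-a}$ residual with $a = 2/(2+q) + \gamma$. The crucial structural observation is that taking $\alpha = 2$ collapses the exponent $r^{4 - 2\alpha}$ inside the third term of \eqref{eq:mainthm} to $1$, so this term reduces to $B^2 |\scrF_r|\exp(-T/(8 C \opnorm{\Gammadep(\Px)}^2))$, a pure exponential decay in $T$ with no $r$ inside the exponent. This decoupling is exactly what allows us to take $r$ as aggressive as $r^2 = T^{-a}$ with $a > 2/(2+q)$.

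With $r^2 = T^{-a}$, the requirement $r \leq B$ holds for all $T \geq 1$ since $B \geq 1$. The hypercontractivity of $(\scrF_\star, \Px)$ automatically passes to any subclass $\scrF_r$ with the same constant $C$. Taking $\scrF_r$ to be a minimal $r/\sqrt{8}$-net of $\partial B(r)$ in $\norm{\cdot}_\infty$ and invoking \eqref{eq:nonparametric_simple} yields $\log |\scrF_r| \leq p \cdot 8^{q/2} \cdot T^{qa/2}$ (applicable since $r/\sqrt{8} < 1$ for $T \geq 1$). Substituting into \eqref{eq:mainthm} and taking logarithms, what remains is to show
\begin{align*}
\frac{T}{8 C \opnorm{\Gammadep(\Px)}^2} \geq p \cdot 8^{q/2}\, T^{qa/2} + 2\log B + a \log T.
\end{align*}

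The main obstacle is the arithmetic bookkeeping of constants needed to show that the two burn-in conditions in \eqref{eq:Cconst_burnin_time} imply this inequality. The first condition rearranges to $T/(8 C \opnorm{\Gammadep(\Px)}^2) \geq (32p+1) T^{qa/2}$, and the bound $8^{q/2} \leq 8$ (valid since $q \leq 2$) absorbs the polynomial term, leaving slack at least $(24p+1) T^{qa/2}$. The second condition $T^{qa/2} \geq \log B + (4/q)\log(8/q)$ then allows this slack to dominate $2 \log B + a \log T$, using the elementary estimate $\log T \leq (2/(qa)) T^{qa/2}$ for $T \geq 1$ together with $a < 1$. The assumption $\gamma < q/(2+q)$ is precisely what keeps $qa/2 < q/2 < 1$, so the exponent $1 - qa/2$ in the first burn-in threshold is strictly positive and the burn-in time is finite. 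No conceptual step beyond Theorem 3.1 is required; everything reduces to verifying the numerical inequality above.
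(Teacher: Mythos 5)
Your overall strategy is exactly the paper's: invoke \Cref{thm:themainthm} with $\alpha=2$ so that $r^{4-2\alpha}=1$, set $r^2=T^{-a}$ with $a=\frac{2}{2+q}+\gamma$, bound $\log|\scrF_r|$ via \eqref{eq:nonparametric_simple}, and show the two burn-in conditions force the exponential term below $T^{-a}$. Two points in your arithmetic need attention, one cosmetic and one substantive.

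The cosmetic one: \eqref{eq:nonparametric_simple} controls covers of $\scrF_\star$, while \Cref{thm:themainthm} needs an \emph{internal} net of the subset $\partial B(r)$. Passing from the former to the latter costs a halving of the resolution, so the correct bound is $\log|\scrF_r|\leq p(2\sqrt{8}/r)^q\leq 32p\,T^{qa/2}$, not $p\,8^{q/2}T^{qa/2}\leq 8p\,T^{qa/2}$. This does not break anything, since the stated burn-in is calibrated to $32p+1$; it just means your ``slack'' after absorbing the entropy term is $T^{qa/2}$ rather than $(24p+1)T^{qa/2}$.

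The substantive one: your mechanism for absorbing $2\log B + a\log T$ into the slack does not close. The estimate $\log T\leq \frac{2}{qa}T^{qa/2}$ (from $\log x\leq x$ applied to $x=T^{qa/2}$) gives $a\log T\leq \frac{2}{q}T^{qa/2}$, and the prefactor $\frac{2}{q}$ is unbounded as $q\to 0$; a slack of $(24p+1)T^{qa/2}$ (let alone the correct $T^{qa/2}$) cannot dominate $\frac{2}{q}T^{qa/2}$ when $p$ and $q$ are small. This is precisely why the second burn-in threshold contains the term $\frac{4}{q}\log(\frac{8}{q})$: the paper uses the self-bounding inequality (\citet[Lemma A.4]{simchowitz2018learning}, of the form ``$x\geq 2c\log(4c)$ implies $x\geq c\log x$'' with $c=2/q$ and $x=T^{qa/2}$) to conclude $a\log T=\frac{2}{q}\log(T^{qa/2})\leq T^{qa/2}$, i.e.\ a \emph{single} multiple of $T^{qa/2}$ with no $1/q$ blow-up. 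Replacing your crude logarithm bound with this lemma (and using $T^{qa/2}\geq\log B$ from the same threshold to handle the $\log B$ term) repairs the step and reproduces the paper's argument; as written, the final numerical verification fails for small $q$.
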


The rate \eqref{eq:stmtnonparcorr} of \Cref{corr:Cconst_v2} highlights the fact that the first order term of the excess risk is bounded by the martingale offset complexity $\E\mathsf{M}_T(\mathscr{F}_\star)$. This behavior arises since the dependency matrix $\Gammadep(\Px)$
only appears as the burn-in requirement \eqref{eq:Cconst_burnin_time}. Here, the value of $q$  constrains how fast $\opnorm{\Gammadep(\Px)}^2$ is allowed to grow.
In particular, condition \eqref{eq:Cconst_burnin_time} requires that
$\opnorm{\Gammadep(\Px)}^2 = o(T^{1 - \frac{q}{2+q}})$,
otherwise the burn-in condition cannot be satisfied 
for any $\gamma \in (0, \frac{q}{2+q})$.

In our next example, we consider both
a variable hypercontractivity parameter $C(r)$
that varies with the covering radius $r$,
and also allow $\alpha \in (1,2]$ to vary.
Since our focus is on the interaction of
the parameters in the hypercontractivity definition,
we will consider smaller function classes with logarithmic metric entropy. This includes parametric classes but also bounded subsets of certain reproducing kernel Hilbert spaces. For such function spaces,
one expects $\E \sfM_T(\scrF_\star) \leq \tilde{O}(T^{-1})$,
and hence we set $r^2 = o(T^{-1})$.
%
\begin{corollary}
\label{corr:gammagrowandeps}
Fix $B \geq 1$, $C:(0,1] \to \mathbb{R}_+$, $\alpha \in (1,2]$, 
$b_1 \in [0, 1)$, 
$b_2 \in [0, 2)$, 
$\gamma \in (0, 1)$,
and $p,q \geq 1$.
Suppose that $\mathscr{F}_\star$ is star-shaped and $B$-bounded, and that for every $r \in (0, 1)$,
there exists a $r$-net $\scrF_r$ of $\partial B(r)$ in the $\norm{\cdot}_\infty$-norm such that
(a) $\log |\mathscr{F}_r| \leq p\log^q\left(\frac{1}{r}\right)$ and
(b) $(\scrF_r, \Px)$ satisfies the
trajectory $(C(r), \alpha)$-hypercontractivity
condition.
%
%
Next, suppose the growth conditions hold:
\begin{align*}
   \opnorm{\Gamma_{\mathsf{dep}}(\Px)}^2 \leq T^{b_1}, \:\: C(r) \leq (1/r)^{b_2} \, \forall r \in (0,1).
\end{align*}
As long as the constants $\alpha$, $b_1$, $b_2$, and $\gamma$ satisfy:
\begin{align}
    \psi := 1 - b_1 - \frac{(1+\gamma)(4-2\alpha+b_2)}{2} > 0, \label{eq:psi_constraint}
\end{align}
then for any $T \geq \mathsf{poly}_{\frac{q}{\psi}}\left( p, \log{B}, \frac{q}{\psi} \right)$,
we have:
\begin{align*}
    \E \|\widehat f - f_\star\|_{L_2}^2 \leq 8 \E \mathsf{M}_T(\mathscr{F}_\star)  +2\left(\frac{1}{T}\right)^{1+\gamma}.
\end{align*}
\end{corollary}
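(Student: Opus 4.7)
The plan is to invoke \Cref{thm:themainthm} at a single well-chosen radius that matches the target rate $T^{-(1+\gamma)}$, and then argue that under the stated polynomial growth conditions the ``burn-in'' exponential term in \eqref{eq:mainthm} collapses to something smaller than $r^2$ as soon as $T$ is polynomially large. Concretely, I set $r^2 := T^{-(1+\gamma)}$, so that $r = T^{-(1+\gamma)/2} \in (0,1]$ (using $\gamma \in (0,1)$ and $T \geq 1$), and apply the Corollary hypothesis at radius $r/\sqrt{8}$ to obtain an $r/\sqrt{8}$-net $\scrF_{r/\sqrt{8}}$ of $\partial B(r)$ satisfying trajectory $(C(r/\sqrt{8}),\alpha)$-hypercontractivity. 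Absorbing the fixed constant $\sqrt{8}$ into $p$ and $b_2$ (it only inflates the log-polylog prefactor and $C$ by constants), \Cref{thm:themainthm} gives
\begin{align*}
\E\|\widehat{f}-f_\star\|_{L^2}^2 \leq 8\E\sfM_T(\scrF_\star) + T^{-(1+\gamma)} + B^2|\scrF_{r/\sqrt{8}}|\exp\!\left(-\frac{Tr^{4-2\alpha}}{8 C(r/\sqrt{8})\opnorm{\Gammadep(\Px)}^2}\right),
\end{align*}
so that the task reduces to bounding the last term by $T^{-(1+\gamma)}$.

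Next I would lower bound the exponent using the two growth assumptions. Since $\opnorm{\Gammadep(\Px)}^2 \leq T^{b_1}$ and $C(r) \leq r^{-b_2}$,
\begin{align*}
\frac{Tr^{4-2\alpha}}{8 C(r)\opnorm{\Gammadep(\Px)}^2} \;\geq\; \frac{1}{8}\,T^{\,1-b_1}\,r^{\,4-2\alpha+b_2} \;=\; \frac{1}{8}\,T^{\,1-b_1-(1+\gamma)(4-2\alpha+b_2)/2} \;=\; \frac{T^\psi}{8},
\end{align*}
where $\psi > 0$ by \eqref{eq:psi_constraint}. The covering hypothesis gives $\log|\scrF_{r/\sqrt{8}}| \leq p\log^q(\sqrt{8}/r) \lesssim p\log^q T$ after absorbing the constant shift.

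The remaining step is the standard polynomial-beats-polylogarithm comparison: it suffices to choose $T$ so that
\begin{align*}
2\log B + p\log^q T + (1+\gamma)\log T \;\leq\; \frac{T^\psi}{8},
\end{align*}
which then yields the final bound by adding $r^2 = T^{-(1+\gamma)}$ to the exponential contribution. A routine computation shows that this holds whenever $T \geq \mathsf{poly}_{q/\psi}(p,\log B, q/\psi)$, matching the burn-in requirement in the statement. The only ``obstacle,'' and it is essentially cosmetic, is the bookkeeping that tracks how $p$, $\log B$, and the key ratio $q/\psi$ enter this threshold: the worst dependence comes from inverting $T^\psi \gtrsim p\log^q T$, which forces $T \gtrsim (p\,q/\psi)^{\Theta(q/\psi)}$. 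No new probabilistic or functional-analytic ingredient is required beyond \Cref{thm:themainthm}; the corollary is really a rate calculation built on top of the master bound.
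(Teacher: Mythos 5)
Your proposal is correct and follows essentially the same route as the paper's proof: set $r^2 = T^{-(1+\gamma)}$, observe that the growth conditions collapse the exponent in \eqref{eq:mainthm} to $T^{\psi}$ up to a constant (the paper tracks the $8^{1+b_2/2}\le 64$ factor you absorb), and then resolve the polynomial-versus-polylogarithm comparison to obtain the stated burn-in of degree $O(q/\psi)$. The paper merely splits your single inequality $2\log B + p\log^q T + (1+\gamma)\log T \le T^\psi/8$ into two separate requirements handled via a standard self-bounding lemma, which is a bookkeeping difference only.
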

Here $\mathsf{poly}_{q/\psi}$ denotes
a polynomial of degree $O(q/\psi)$ in its arguments--
the exact expression is given in the proof.
\Cref{prop:infdimex} in \Cref{sec:examples} 
gives an example of an $\ell^2(\N)$ ellipsoid which satisfies the assumptions in \Cref{corr:gammagrowandeps}.
\Cref{corr:gammagrowandeps} illustrates the
interplay between the function class $\scrF_\star$,
the data dependence of the covariate process
$\{X_t\}$, and the hypercontractivity
constant $\alpha$.
Let us consider a few cases.
First, let us suppose that
the process $\{X_t\}$ is geometrically ergodic
and that $C(r)$ is a constant, so that we can set
$b_1$ and $b_2$ arbitrarily close to zero (at the expense of a longer burn-in time). Then, inequality
\eqref{eq:psi_constraint} simplifies to
$\alpha > 2 - \frac{1}{1+\gamma}$.
This illustrates that in
the hypercontractivity regime ($\alpha > 1$),
there exists a valid setting of $(b_1,b_2,\gamma)$ that
satisfies inequality \eqref{eq:psi_constraint}.
Next, let us consider the case where $C(r)$ is again a constant, but $\{X_t\}$ is not geometrically ergodic. Setting $b_2$ and $\gamma$ arbitrarily close to zero, we have that \eqref{eq:psi_constraint} simplifies to
$b_1 < \alpha - 1$.
Compared to \Cref{corr:Cconst_v2}, we see that
in the case when $\alpha=2$, 
the parametric nature of $\scrF_\star$
allows the dependency requirement to be
less strict:
$o(T)$ in the parametric case versus $o(T^{1-\frac{q}{2+q}})$ in the nonparametric case.
%




We conclude with noting that when $\alpha=1$, 
it is not possible to remove the dependence
on $\opnorm{\Gammadep(\Px)}^2$ in the lowest
order term. In this situation, our results
recover existing risk bounds 
from \citet{ziemann2022single}. We refer to \Cref{sec:appendix:bdd} 
for details. 

\section{Proof techniques}
\label{sec:proofs}

In this section, we highlight the main ideas behind the proof of \Cref{thm:themainthm}. The full details
can be found in the appendix.
We start with a key insight from \citet{mendelson2014learning}: establishing a one-sided inequality between the empirical versus
true risk is substantially easier than the corresponding two-sided inequality.
Recall that the closed ball of radius $r$ is
$B(r) = \left\{ f \in \mathscr{F}_\star \,\Big|\, \frac{1}{T}\sum_{t=0}^{T-1} \E\| f(X_t) \|_2^2 \leq r^2\right\}$.
We identify conditions which depend mildly on $\opnorm{\Gammadep(\Px)}$, so that with high probability we have:
\begin{equation}
    \forall f \in \scrF_\star \setminus B(r), \:\: \E\left[\frac{1}{T} \sum_{t=0}^{T-1} \norm{f(X_t)}_2^2\right] \lesssim \frac{1}{T} \sum_{t=0}^{T-1} \norm{f(X_t)}_2^2. \label{eq:lower_isometry_simple}
\end{equation}
Once this \emph{lower isometry} condition \eqref{eq:lower_isometry_simple} holds, we bound the empirical excess risk, the RHS of inequality (\ref{eq:lower_isometry_simple}), by a version of the basic inequality of least squares \citep{liang2015learning,ziemann2022single}. This leads to an upper bound of the empirical excess risk by the 
martingale complexity term (\Cref{eq:martingale_complexity}). 
As our innovation mainly lies in establishing the lower isometry
condition (\ref{eq:lower_isometry_simple}), we focus on this component for the remainder of the proof outline.

\paragraph{Lower isometry}
The key tool we use is
the following exponential inequality, which  
controls the lower tail of sums of non-negative 
dependent random variables via the dependency matrix $\Gamma_{\mathsf{dep}}(\mathsf{P}_X)$. 
\begin{theorem}[{\citet[Theorem 2]{samson2000concentration}}]
\label{prop:samsonexpineq}
Let $g : \sfX \to \R$ be non-negative. Then for any $\lambda > 0$:
\begin{align}
\label{eq:samsonexpineq}
\E \exp \left(- \lambda \sum_{t=0}^{T-1} g(X_t) \right) \leq \exp\left(- \lambda \sum_{t=0}^{T-1} \E g(X_t)+ \frac{\lambda^2 \opnorm{\Gamma_{\mathsf{dep}}(\Px)}^2 \sum_{t=0}^{T-1} \E g^2(X_t) }{2}  \right).
\end{align}
\end{theorem}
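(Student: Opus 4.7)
Plan: The inequality is Samson's exponential bound for dependent random variables; I would prove it via the entropy method. Set $F := -\sum_{t=0}^{T-1} g(X_t)$ and $\psi(\lambda) := \log \E e^{\lambda F}$. The target inequality is equivalent to $\psi(\lambda) \leq \lambda \E F + \tfrac{\lambda^2}{2}\opnorm{\Gammadep(\Px)}^2 \sum_t \E g^2(X_t)$ for all $\lambda > 0$. The argument has three parts: Herbst's reduction, tensorization of entropy, and a quadratic-form estimate producing the operator norm.

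For the first two parts: using the standard identity $\lambda \psi'(\lambda) - \psi(\lambda) = \mathrm{Ent}(e^{\lambda F})/\E e^{\lambda F}$, an entropy bound of the form $\mathrm{Ent}(e^{\lambda F}) \leq \tfrac{\lambda^2}{2}\opnorm{\Gammadep(\Px)}^2 \E\bigl[\bigl(\sum_t g^2(X_t)\bigr)e^{\lambda F}\bigr]$ would yield a differential inequality on $\psi(\lambda)/\lambda$ that integrates to the claimed bound on $\psi(\lambda)$. To establish this entropy bound, I would tensorize along the sequence: $\mathrm{Ent}(e^{\lambda F}) \leq \sum_t \E\,\mathrm{Ent}_t(e^{\lambda F})$, where $\mathrm{Ent}_t$ denotes entropy under the conditional law of $X_t$ given $X_{0:t-1}$. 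Each local entropy can then be handled by a conditional Bernstein-type estimate — using $e^{-x} \leq 1 - x + x^2/2$ for $x \geq 0$ applied to $x = \lambda g(X_t)$ — which relates $\mathrm{Ent}_t(e^{\lambda F})$ to a local variance of $F$ under the conditional measure.

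The main obstacle is the third part: bounding the aggregated local variances by $\opnorm{\Gammadep(\Px)}^2 \sum_t \E g^2(X_t)$, rather than by a weaker norm of $\Gammadep(\Px)$. The issue is that $F - \E[F \mid X_{0:t-1}]$ contains not only the local term $g(X_t) - \E[g(X_t)\mid X_{0:t-1}]$ but also the forward sum $\sum_{s>t}\bigl(g(X_s) - \E[g(X_s)\mid X_{0:t-1}]\bigr)$, whose variance cannot be reduced to marginal quantities without dependency corrections. Samson's key idea is a coupling-based decoupling: replace each $X_s$ (for $s > t$) by an independent resample, with the coupling cost controlled by the total-variation factors entering the $\Gamma_{ts}$. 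Aggregating these contributions produces a bilinear form $\sum_{t,s}\Gamma_{ts}^2\, g(X_t)g(X_s)$, which is at most $\opnorm{\Gammadep(\Px)}^2 \sum_t g^2(X_t)$ by the elementary estimate $u^\T \Gammadep(\Px) \Gammadep(\Px)^\T u \leq \opnorm{\Gammadep(\Px)}^2 \norm{u}_2^2$ with $u_t = g(X_t)$. The delicacy lies in constructing the coupling so that precisely this quadratic form — and hence the operator norm, rather than say a Frobenius or row-sum norm — emerges; this is the technical heart of Samson's original argument.
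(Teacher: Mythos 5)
The paper does not prove this statement: it is quoted verbatim as \citet[Theorem 2]{samson2000concentration} and used as a black box, with the explicit remark that it is a byproduct of a stronger Talagrand-style uniform concentration inequality established in that reference. There is therefore no internal proof to compare against; the only meaningful question is whether your sketch would stand on its own, and it does not yet. Your outline of the entropy method --- Herbst reduction, tensorization of entropy along the filtration, a conditional Bernstein estimate via $e^{-x} \leq 1 - x + x^2/2$ for $x \geq 0$, and the closing bound $\sum_{t,s}\Gamma_{ts}^2\,g(X_t)g(X_s) \leq \opnorm{\Gammadep(\Px)}^2\sum_t g^2(X_t)$ --- is a faithful description of the architecture of Samson's argument, and you correctly identify that the operator norm (rather than a row-sum or Frobenius norm) emerges from a Cauchy--Schwarz estimate on a bilinear form in the vector $u_t = g(X_t)$. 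But the decisive step is exactly the one you do not supply: the coupling-based decoupling that converts the forward-looking conditional variances $\mathbf{V}\bigl(\sum_{s>t} g(X_s) \mid X_{0:t}\bigr)$ into total-variation factors and hence into the coefficients $\Gamma_{ts}$. You name this as ``the technical heart of Samson's original argument'' and defer its construction, so the chain from the local entropies to the quadratic form is asserted rather than proved. As written, the proposal is a plan, not a proof.

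A second, more concrete gap sits in the Herbst step as you state it: your intermediate entropy bound carries the random weight $\E\bigl[\bigl(\sum_t g^2(X_t)\bigr)e^{\lambda F}\bigr]$, whereas the target \eqref{eq:samsonexpineq} has the deterministic proxy $\sum_t \E g^2(X_t)$. Passing from one to the other requires $\E\bigl[\bigl(\sum_t g^2(X_t)\bigr)e^{\lambda F}\bigr] \leq \bigl(\sum_t \E g^2(X_t)\bigr)\E e^{\lambda F}$, i.e., a negative-correlation inequality between a coordinatewise increasing and a coordinatewise decreasing functional of the trajectory. This is immediate for a single random variable but is not automatic for a general dependent process $\Px$. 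Either the differential inequality must be set up with the deterministic variance proxy from the outset (which is what applying $e^{-x} \leq 1 - x + x^2/2$ conditionally, term by term, naturally yields) or this monotonicity step must be justified separately; the write-up does neither.
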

We note that 
\citet[Theorem 2]{samson2000concentration} actually proves a 
much stronger statement than \Cref{prop:samsonexpineq} (a Talagrand-style 
uniform concentration inequality),
from which \Cref{prop:samsonexpineq} is a byproduct.
With \Cref{prop:samsonexpineq} in hand, 
the following \namecref{lem:lowertailmix} allows us to relate
hypercontractivity to lower isometry.
\begin{proposition}
\label{lem:lowertailmix}
Fix $C > 0$ and $\alpha \in [1,2]$.
Let $g : \sfX \rightarrow \R$ be a non-negative function satisfying
\begin{align}
  \E \left[\frac{1}{T}\sum_{t=0}^{T-1} g^2(X_t)\right] \leq C\left(\E \left[\frac{1}{T}\sum_{t=0}^{T-1}  g(X_t)\right]\right)^\alpha. \label{eq:contractivity_g}
\end{align}
Then we have:
\begin{align*}
    \mathbf{P} \left( \sum_{t=0}^{T-1} g(X_t)  \leq \frac{1}{2}\sum_{t=0}^{T-1} \E g(X_t)   \right) \leq  \exp \left( -\frac{T}{8 C \opnorm{\Gamma_{\mathsf{dep}}(\Px)}^2}  \left(\frac{1}{T}\sum_{t=0}^{T-1}\E g(X_t)\right)^{2-\alpha}\right).
\end{align*}
\end{proposition}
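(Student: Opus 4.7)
The natural route is a Chernoff-style argument powered by Samson's exponential inequality (\Cref{prop:samsonexpineq}). Write $\mu \triangleq \sum_{t=0}^{T-1} \E g(X_t)$ and $S \triangleq \sum_{t=0}^{T-1} g(X_t)$. For any $\lambda > 0$, Markov's inequality applied to the non-negative random variable $\exp(-\lambda S)$ yields
\begin{align*}
\Pr\!\left(S \leq \tfrac{\mu}{2}\right) = \Pr\!\left(e^{-\lambda S} \geq e^{-\lambda \mu/2}\right) \leq e^{\lambda \mu /2}\, \E e^{-\lambda S}.
\end{align*}
Plugging in the bound from \Cref{prop:samsonexpineq}, the right-hand side is at most $\exp\!\big(-\tfrac{\lambda \mu}{2} + \tfrac{\lambda^2}{2} \opnorm{\Gamma_{\mathsf{dep}}(\Px)}^2 \sum_{t=0}^{T-1} \E g^2(X_t)\big)$.

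Next I would optimize the bound over $\lambda$. Writing $V \triangleq \opnorm{\Gamma_{\mathsf{dep}}(\Px)}^2 \sum_{t=0}^{T-1}\E g^2(X_t)$, the exponent is a quadratic in $\lambda$ that is minimized at $\lambda^* = \mu/(2V)$, producing the sub-Gaussian-type estimate
\begin{align*}
\Pr\!\left(S \leq \tfrac{\mu}{2}\right) \leq \exp\!\left( -\frac{\mu^2}{8 \opnorm{\Gamma_{\mathsf{dep}}(\Px)}^2 \sum_{t=0}^{T-1}\E g^2(X_t)}\right).
\end{align*}
At this point the hypercontractivity hypothesis \eqref{eq:contractivity_g} enters: it precisely tells us that $\sum_{t=0}^{T-1}\E g^2(X_t) \leq C\, T\, (\mu/T)^\alpha$, which we substitute into the denominator. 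Rearranging using the identity $\mu^{2-\alpha} T^{\alpha - 1} = T (\mu/T)^{2-\alpha}$ recovers exactly the exponent $-\tfrac{T}{8C\opnorm{\Gamma_{\mathsf{dep}}(\Px)}^2}\big(\tfrac{1}{T}\sum_{t=0}^{T-1}\E g(X_t)\big)^{2-\alpha}$ claimed in the statement.

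There is no real obstacle here beyond bookkeeping; the proof is really a two-line combination of Samson's inequality with the contractivity hypothesis, followed by the right choice of $\lambda$. The only subtle point worth spelling out is that (i) $g$ is non-negative so that \Cref{prop:samsonexpineq} applies, and (ii) applying the contractivity condition $\eqref{eq:contractivity_g}$ at the $\alpha$-th power is what allows the exponent to scale correctly with $\alpha$, converting the second-moment dependence $\sum_t \E g^2(X_t)$ (which would otherwise produce a mixing-dependent sub-exponential rate) into the first-moment-controlled quantity $(\mu/T)^{2-\alpha}$. This exchange is precisely where the hypercontractive regime $\alpha > 1$ pays off, reducing the effective influence of the dependency matrix.
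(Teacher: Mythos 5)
Your proposal is correct and follows exactly the paper's own argument: a Chernoff bound combined with Samson's exponential inequality (\Cref{prop:samsonexpineq}), optimized at $\lambda^* = \mu/(2\opnorm{\Gamma_{\mathsf{dep}}(\Px)}^2\sum_t \E g^2(X_t))$, followed by substitution of the hypercontractivity hypothesis \eqref{eq:contractivity_g}. The algebra in the final step, including the identity $\mu^{2-\alpha}T^{\alpha-1} = T(\mu/T)^{2-\alpha}$, checks out.
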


Now fix an $f \in \scrF_\star \setminus B(r)$,
and put $g(x) = \norm{f(x)}_2^2$.
Substituting $g$ into \eqref{eq:contractivity_g}
yields the trajectory hypercontractivity condition \eqref{cond:hyperconttrajrelax}
in \Cref{def:hypconrelax}.
Thus, \Cref{lem:lowertailmix} establishes
the lower isometry condition \eqref{eq:lower_isometry_simple} for any fixed function. Hence, it remains to take a union bound
over a supremum norm cover of $\scrF_\star \setminus B(r)$ at resolution $r$.
It turns out that it suffices to instead cover the boundary
$\partial B(r)$ since $\scrF_\star$ is star-shaped.
Carrying out these details leads
to the main lower isometry result.


\begin{theorem}
\label{thm:lucemthm}
Fix constants $\alpha \in [1,2]$ and $C,r > 0$. Let $\mathscr{F}_\star$ be star-shaped, and suppose that there exists a
$r/\sqrt{8}$-net $\mathscr{F}_r$ of $\partial B(r)$ in the $\norm{\cdot}_\infty$-norm 
such that $(\mathscr{F}_r,\mathsf{P}_X)$  satisfies the trajectory $(C,\alpha)$-hypercontractivity condition. Then the following lower isometry holds:
\begin{align*}
     \mathbf{P} \left( \exists f \in \mathscr{F}_\star\setminus B(r) \:\Bigg \vert\: \frac{1}{T} \sum_{t=0}^{T-1} \| f(X_t) \|_2^2 \leq  \E  \frac{1}{8T} \sum_{t=0}^{T-1} \| f(X_t) \|_2^2 \right) \leq|\mathscr{F}_r |\exp \left( \frac{-T r^{4-2\alpha} }{8C \opnorm{ \Gamma_{\mathsf{dep}}(\mathsf{P}_X)}^2 } \right).
\end{align*}
%
\end{theorem}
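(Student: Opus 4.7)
The plan is to reduce the uniform lower-isometry claim from $\mathscr{F}_\star \setminus B(r)$ to the sphere $\partial B(r)$ using star-shape and quadratic homogeneity, apply \Cref{lem:lowertailmix} pointwise to each element of a proper $r/\sqrt{8}$-cover $\mathscr{F}_r \subset \partial B(r)$ and union bound, and finally lift the resulting inequality from the cover back to all of $\partial B(r)$ through the triangle inequality.

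The reduction uses that any $f \in \mathscr{F}_\star \setminus B(r)$ has $\|f\|_{L^2} > r$, so star-shape makes $\tilde f := (r/\|f\|_{L^2})\,f$ an element of $\mathscr{F}_\star$ with $\|\tilde f\|_{L^2} = r$, and by quadratic homogeneity the inequality $(1/T)\sum_t \|f(X_t)\|_2^2 > (1/8)\|f\|_{L^2}^2$ is equivalent to $(1/T)\sum_t \|\tilde f(X_t)\|_2^2 > r^2/8$ for $\tilde f \in \partial B(r)$. For the pointwise concentration, I would apply \Cref{lem:lowertailmix} with $g(x) := \|f_r(x)\|_2^2$ to each $f_r \in \mathscr{F}_r$; the hypothesis \eqref{eq:contractivity_g} on $g$ is exactly the trajectory $(C,\alpha)$-hypercontractivity evaluated at $f_r$, and since $\|f_r\|_{L^2}^2 = r^2$ this yields
\[
    \mathbf{P}\!\left(\frac{1}{T}\sum_{t=0}^{T-1}\|f_r(X_t)\|_2^2 \leq \frac{r^2}{2}\right) \leq \exp\!\left(-\frac{T r^{4-2\alpha}}{8C\opnorm{\Gamma_{\mathsf{dep}}(\mathsf{P}_X)}^2}\right).
\]
A union bound over $\mathscr{F}_r$ then gives an event $\mathcal{E}$ of probability at least $1-|\mathscr{F}_r|\exp(\cdots)$ on which $(1/T)\sum_t \|f_r(X_t)\|_2^2 > r^2/2$ simultaneously for every $f_r \in \mathscr{F}_r$.

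To lift back to $\partial B(r)$, for any $\tilde f \in \partial B(r)$ I would pick $f_r \in \mathscr{F}_r$ with $\|\tilde f - f_r\|_\infty \leq r/\sqrt{8}$ and invoke $\|a+b\|_2^2 \leq 2\|a\|_2^2 + 2\|b\|_2^2$ (with $a = \tilde f(X_t)$, $b = f_r(X_t) - \tilde f(X_t)$), rearranged as $\|\tilde f(X_t)\|_2^2 \geq (1/2)\|f_r(X_t)\|_2^2 - \|\tilde f(X_t) - f_r(X_t)\|_2^2$; averaging over $t$, the second term is bounded deterministically by $\|\tilde f - f_r\|_\infty^2 \leq r^2/8$, while the first strictly exceeds $r^2/4$ on $\mathcal{E}$, leaving $(1/T)\sum_t \|\tilde f(X_t)\|_2^2 > r^2/8$. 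Combined with the reduction, this forces the supremum in the theorem to be strictly positive on $\mathcal{E}$, so $\{\sup \leq 0\} \subseteq \mathcal{E}^c$ and the claimed bound follows. The main obstacle is precisely this final constant-chase: the net resolution $r/\sqrt{8}$ is calibrated so that the approximation error $r^2/8$ sits exactly below the $r^2/4$ margin provided by the $1/2$-concentration of \Cref{lem:lowertailmix}, leaving exactly $r^2/8$ of slack and matching the $1/8$ factor in the theorem. Any coarser cover, or a concentration bound weaker than the $1/2$-threshold, would force a smaller isometry constant or break the argument outright.
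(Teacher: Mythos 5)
Your proposal is correct and follows essentially the same route as the paper's proof: reduce to $\partial B(r)$ via the star-shaped hypothesis, apply \Cref{lem:lowertailmix} with $g = \norm{f_r(\cdot)}_2^2$ to each net element and union bound, then transfer to arbitrary $f \in \partial B(r)$ via the inequality $\norm{f(X_t)}_2^2 \geq \tfrac{1}{2}\norm{f_r(X_t)}_2^2 - \norm{f(X_t)-f_r(X_t)}_2^2$, with the same $r^2/4 - r^2/8 = r^2/8$ constant accounting. Your closing remark about the calibration of the net resolution against the $1/2$-concentration threshold accurately explains why the constants are what they are.
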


\subsection{Handling unbounded trajectories}
\label{sec:proofs:unbounded}
Our main result \Cref{thm:themainthm} requires boundedness
of both the hypothesis class $\scrF$ and the covariate
process $\{X_t\}$ to hold. However, when this does not hold,
\Cref{thm:themainthm} can often still be applied via a 
careful truncation argument.
In this section, we outline the key ideas of this
argument, with the full details given in  \Cref{sec:appendix:truncated_gaussian}.

For concreteness, let us consider a Markovian process driven by Gaussian noise.
Let $\{W_t\}_{t \geq 0}$ and $\{W'_t\}_{t \geq 0}$ be sequences of \iid\ $N(0, I)$ vectors in $\R^{\dx}$.
Fix a dynamics function $f : \R^{\dx} \to \R^{\dx}$
and truncation radius $R > 0$. Define the truncated noise process $\{\bar{W}_t\}_{t \geq 0}$ as $\bar{W}_t \triangleq W'_t \ind\{\norm{W'_t}_2 \leq R \}$, and
denote the original process and its truncated process by:
\begin{subequations}
\begin{align}
    X_{t+1} &= f(X_t) + HW_t, \:\: X_0 = HW_0, &&\text{\textcolor{gray}{(original process)}} \label{eq:dynamic_process_gaussian_main_text} \\
    \bar{X}_{t+1} &= f(\bar{X}_t) + H\bar{W}_t, \:\: \bar{X}_0 = H\bar{W}_0. &&\text{\textcolor{gray}{(truncated process)}} \label{eq:dynamic_process_trunc_main_text}
\end{align}
\end{subequations}
Setting $R$ appropriately, it is clear that 
the original process \eqref{eq:dynamic_process_gaussian_main_text}
coincides with the truncated process \eqref{eq:dynamic_process_trunc_main_text}
with high probability by standard Gaussian concentration inequalities.
Furthermore, the truncated noise process $\{H \bar{W}_t\}$ remains a martingale difference sequence due to the symmetry of the truncation.
Additionally, since $\{H \bar{W}_t\}$ is bounded,
if $f$ is appropriately Lypaunov stable then the process $\{\bar{X}_t\}$ becomes bounded.
In turn any class $\scrF$ containing continuous functions is bounded as well on \eqref{eq:dynamic_process_trunc_main_text}.
Hence, the LSE $\hat{f}$ on \eqref{eq:dynamic_process_gaussian_main_text}
can be controlled by the LSE $\bar{f}$ on \eqref{eq:dynamic_process_trunc_main_text}.

So far, this is a straightforward reduction. However, a subtle point arises in applying \Cref{thm:themainthm} to the LSE $\bar{f}$ on \eqref{eq:dynamic_process_trunc_main_text}: the dependency 
matrix $\Gammadep$ now involves the truncated process \eqref{eq:dynamic_process_trunc_main_text}
instead of the original process \eqref{eq:dynamic_process_gaussian_main_text}. 
This is actually \emph{necessary} for 
this strategy to work, as the supremum in the dependency matrix coefficients \eqref{eq:dep_mat_coeffs}
is now over the truncated process $\{\bar{X}_t\}$, instead of the original process $\{X_t\}$ which is unbounded.
However, there is a trade-off, as bounding the coefficients 
for $\{\bar{X}_t\}$ is generally more complex
than for $\{X_t\}$.\footnote{The clearest
example of this is when the dynamics function $f$ is linear:
in this case, $\{X_t\}$ is jointly Gaussian (and hence \eqref{eq:dep_mat_coeffs} can
be bounded by closed-form expressions), whereas $\{\bar{X}_t\}$ is not due 
to the truncation operator.} 
Nevertheless, a coupling argument
allows us to switch back to bounding the dependency matrix coefficients for $\{X_t\}$, but crucially keep the supremum over the truncated $\{\bar{X}_t\}$. This reduction
substantially broadens the scope of \Cref{thm:themainthm} without any modification to the proof.
\section{Examples of trajectory hypercontractivity}
\label{sec:examples}

In this section, we detail a few examples of
trajectory hypercontractivity.
Let us begin by considering the simplest possible example: a finite hypothesis class. Let $|\mathscr{F}| < \infty$. Define for any fixed $f \in \mathscr{F}_\star$ the 
constant
$c_f \triangleq \E\left[ \frac{1}{T}\sum_{t=0}^{T-1} \| f(X_t)\|^4_2\right]/\left(\E \left[ \frac{1}{T}\sum_{t=0}^{T-1}  \| f(X_t)\|_2^2\right]\right)^2$,
where the ratio $0/0$ is taken to be $1$.
Then the class $\mathscr{F}_\star$ is trajectory $(\max_{f \in \scrF_\star} c_f, 2)$-hypercontractive.

Similarly, processes evolving on a finite state space can
also be verified to be hypercontractive.
\begin{proposition}
\label{prop:mcmoments}
Fix a $\mumin > 0$.
Let $\{\mu_t\}_{t=0}^{T-1}$ denote the marginal distributions of $\Px$.
Suppose that the $\mu_t$'s all share a common support
of a finite set of atoms
$\{\psi_1, \dots, \psi_K\} \subset \R^{\dx}$,
and that $\min_{0 \leq t \leq T-1} \min_{1 \leq k \leq K} \mu_t(\psi_k) \geq \mumin$.
For any class of functions $\scrF$ mapping
$\{\psi_1,\dots,\psi_K\} \to \R^{\dy}$, we have that
$\scrF$ satisfies the trajectory $(1/\mumin, 2)$-hypercontractivity condition.
\end{proposition}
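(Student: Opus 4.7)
The strategy is to reduce the trajectory hypercontractivity inequality, which ranges over time and randomness, to a pointwise inequality on the finite atom set, after which the claim follows from a simple extremal argument using the lower bound $\mumin$ on the marginals.

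First I would reparameterize the quantities appearing in \Cref{cond:hyperconttrajrelax}. Fix any $f \in \scrF$ and set $a_k \triangleq \|f(\psi_k)\|_2^2 \geq 0$ for $k = 1,\dots,K$, together with the time-averaged marginal weights $p_k \triangleq \frac{1}{T}\sum_{t=0}^{T-1} \mu_t(\psi_k)$. Since the $X_t$ are supported on the atoms $\{\psi_1,\dots,\psi_K\}$, the expectations in \Cref{def:hypconrelax} can be written as
\begin{align*}
    \E\left[\frac{1}{T}\sum_{t=0}^{T-1} \|f(X_t)\|_2^2\right] = \sum_{k=1}^{K} p_k a_k, \qquad \E\left[\frac{1}{T}\sum_{t=0}^{T-1} \|f(X_t)\|_2^4\right] = \sum_{k=1}^{K} p_k a_k^2.
\end{align*}
So it suffices to show that for every non-negative sequence $(a_k)$ and every probability-like weight sequence $(p_k)$ with $p_k \geq \mumin$ (which follows from the uniform lower bound $\mu_t(\psi_k) \geq \mumin$), we have $\sum_k p_k a_k^2 \leq \mumin^{-1} (\sum_k p_k a_k)^2$.

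For this pointwise inequality I would argue as follows. Bound the fourth moment by pulling out the maximum: $\sum_k p_k a_k^2 \leq (\max_k a_k) \sum_k p_k a_k$. Next let $k^\star \in \argmax_k a_k$; since $p_{k^\star} \geq \mumin$, we get $\sum_k p_k a_k \geq p_{k^\star} a_{k^\star} \geq \mumin \max_k a_k$, i.e.\ $\max_k a_k \leq \mumin^{-1} \sum_k p_k a_k$. Combining the two displays gives
\begin{align*}
    \sum_{k} p_k a_k^2 \;\leq\; \frac{1}{\mumin}\left(\sum_{k} p_k a_k\right)^2,
\end{align*}
which is exactly the desired $(1/\mumin, 2)$-hypercontractivity.

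\textbf{Main obstacle.} There really is no serious obstacle; the whole argument is the two-line extremal bound above. The only mild subtlety is the degenerate case $f \equiv 0$ on $\{\psi_1,\dots,\psi_K\}$, in which both sides of \eqref{cond:hyperconttrajrelax} vanish and the inequality holds trivially (matching the convention $0/0 = 1$ in the preceding finite-class example). Everything else --- star-shapedness, the structure of $\scrF_\star$, or the precise dynamics of $\{X_t\}$ --- is irrelevant here, which is why the statement holds for \emph{any} class $\scrF$ of functions on the finite support.
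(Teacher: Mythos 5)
Your proof is correct, and it reaches the same core fact as the paper --- that a minimum atom mass of $\mumin$ forces $L^4$--$L^2$ equivalence with constant $1/\mumin$ --- but by a slightly different route. The paper expands $\bigl(\frac{1}{T}\sum_t \E\|f(X_t)\|_2^2\bigr)^2$ as a double sum over time pairs $(t_1,t_2)$ and atom pairs $(k_1,k_2)$, drops the off-diagonal atom terms, and lower-bounds the surviving factor $\mu_{t_2}(k_1)$ by $\mumin$. You instead collapse everything onto the time-averaged marginal weights $p_k = \frac{1}{T}\sum_t \mu_t(\psi_k)$ and run the chain $\sum_k p_k a_k^2 \leq (\max_k a_k)\sum_k p_k a_k$ together with $\max_k a_k \leq \mumin^{-1}\sum_k p_k a_k$, i.e.\ $\|f\|_{L^4}^4 \leq \|f\|_\infty^2 \|f\|_{L^2}^2$ and $\|f\|_\infty^2 \leq \mumin^{-1}\|f\|_{L^2}^2$. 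Your version is marginally cleaner and makes explicit that only the averaged marginal needs mass at least $\mumin$ on each atom (a nominally weaker hypothesis than the pointwise-in-$t$ bound the statement assumes), and your handling of the degenerate $f\equiv 0$ case is fine. Both arguments are complete; there is no gap.
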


We remark that when
$\Px$ is an aperiodic and irreducible Markov chain over
a finite state space, the condition $\mumin > 0$ is always
valid even as $T \to \infty$ \citep{levin2008markov}.
%
In this case, our findings are related to \citet[Theorem 3.1]{wolfer2021statistical}, who show that in the high accuracy regime (i.e.,\ after a burn-in time), the minimax rate of estimating the transition probabilities of such a chain is not affected by the mixing time (in their case the pseudo-spectral gap).

The examples considered thus far rely on the fact that under a certain degree of finiteness, the fourth and second moment can be made uniformly equivalent. The next proposition relaxes this assumption. 
Namely, if for some $\e\in(0,2]$ the $L^{2}$ and $L^{2+\e}$  norms are equivalent on a bounded class $\mathscr{F}$, this class then satisfies a nontrivial hypercontractivity constant, $\alpha>1$ (cf. \citet{mendelson2017aggregation}). 

\begin{proposition}
\label{prop:momeqv}
Fix $\e \in (0,2]$ and $c>0$. Suppose that $\scrF$ is $B$-bounded and that $\|f\|_{L^{2+\e}} \leq c \|f\|_{L^2}$ for all $f \in \scrF$. Then $\scrF$ satisfies the trajectory $(B^{2-\e}c^{2+\e},1+\e/2)$-hypercontractivity condition.
\end{proposition}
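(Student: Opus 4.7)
\textbf{Proof plan for Proposition~\ref{prop:momeqv}.} The plan is to interpolate the $L^4$ norm between $L^{2+\e}$ and $L^{\infty}$ via boundedness, then exchange $L^{2+\e}$ for $L^2$ using the assumed norm equivalence. Concretely, I would proceed in three short steps.

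First, I would invoke boundedness pointwise. Since every $f\in\scrF$ satisfies $\|f(x)\|_2\le B$ for all $x\in\sfX$, and since $4=(2-\e)+(2+\e)$ (using $\e\in(0,2]$, so $2-\e\ge 0$), we have
\begin{align*}
  \|f(x)\|_2^4 \;=\; \|f(x)\|_2^{2-\e}\cdot\|f(x)\|_2^{2+\e}\;\le\;B^{2-\e}\,\|f(x)\|_2^{2+\e}.
\end{align*}
Averaging this inequality over $t=0,\ldots,T-1$ and taking expectations against $\Px$ yields
\begin{align*}
  \E\!\left[\frac{1}{T}\sum_{t=0}^{T-1}\|f(X_t)\|_2^4\right] \;\le\; B^{2-\e}\,\E\!\left[\frac{1}{T}\sum_{t=0}^{T-1}\|f(X_t)\|_2^{2+\e}\right] \;=\; B^{2-\e}\,\|f\|_{L^{2+\e}}^{2+\e}.
\end{align*}

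Second, I would apply the hypothesized moment equivalence $\|f\|_{L^{2+\e}}\le c\,\|f\|_{L^2}$, raised to the power $2+\e$, to obtain
\begin{align*}
  \|f\|_{L^{2+\e}}^{2+\e} \;\le\; c^{2+\e}\,\|f\|_{L^2}^{2+\e} \;=\; c^{2+\e}\left(\|f\|_{L^2}^{2}\right)^{1+\e/2}.
\end{align*}
By definition $\|f\|_{L^2}^2 = \E[\frac{1}{T}\sum_{t=0}^{T-1}\|f(X_t)\|_2^2]$, so combining the two displays gives
\begin{align*}
  \E\!\left[\frac{1}{T}\sum_{t=0}^{T-1}\|f(X_t)\|_2^4\right] \;\le\; B^{2-\e}c^{2+\e}\left(\E\!\left[\frac{1}{T}\sum_{t=0}^{T-1}\|f(X_t)\|_2^2\right]\right)^{1+\e/2},
\end{align*}
which is precisely the trajectory $(B^{2-\e}c^{2+\e},\,1+\e/2)$-hypercontractivity condition as required. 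No obstacle is anticipated; the only thing to double-check is the boundary case $\e=2$, where the $B^{2-\e}$ factor collapses to $1$ and the argument reduces to the familiar $L^2$--$L^4$ equivalence with $\alpha=2$, consistent with the discussion after Definition~\ref{def:hypconrelax}.
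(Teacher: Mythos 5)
Your proposal is correct and follows essentially the same argument as the paper: factor $\|f(x)\|_2^4 = \|f(x)\|_2^{2-\e}\|f(x)\|_2^{2+\e}$, bound the first factor by $B^{2-\e}$ using boundedness, and then apply the $L^2$--$L^{2+\e}$ norm equivalence raised to the power $2+\e$. No differences of substance.
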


Next, we show that for processes $\{X_t\}$ which converge fast enough to a stationary distribution, it suffices to verify the hypercontractivity condition
only over the stationary distribution.
This mimics existing results in \iid\ learning,
where hypercontractivity is assumed over the covariate distribution \citep{liang2015learning,mendelson2014learning}.
We first recall the definition of the $\chi^2$ divergence between two measures.
Let $\mu$ and $\nu$ be two measures over the same probability space,
and suppose that $\mu$ is absolutely continuous w.r.t.\ $\nu$.
The $\chi^2(\mu, \nu)$ divergence is defined as
$\chi^2(\mu, \nu) \triangleq \E_{\nu}\left[ \left( \frac{\rmd\mu}{\rmd\nu} - 1\right)^2\right]$.

\begin{proposition}
\label{prop:eqvfromstat_v2}
Fix positive $r$, $C_{\chi^2}$, $C_{\mathsf{TV}}$, and $C_{8 \to 2}$.
Suppose that the process $\{X_t\}$ has 
a stationary distribution $\pi$.
Let $\{\mu_t\}$ denote the marginal distributions of
$\{X_t\}$, and suppose that the marginals $\{\mu_t\}$ are absolutely continuous w.r.t.\ $\pi$.
Assume the process is ergodic in the sense that:
\begin{align}
    \sup_{t \in \N} \chi^2(\mu_t, \pi) \leq C_{\chi^2}, \:\:  \frac{1}{T} \sum_{t=0}^{T-1} \tvnorm{\mu_t - \pi} \leq C_{\mathsf{TV}} r^2. \label{eq:chisq_ergodicity_v2}
\end{align}
Suppose also that for all $f \in \scrF_\star$:
\begin{align}
    \E_{\pi} \norm{f(X)}_2^8 \leq C_{8 \rightarrow 2} (\E_{\pi} \norm{f(X)}_2^2)^4.\label{eq:L8_L2_v2}
\end{align}
The set $\partial B(r)$
satisfies 
$(C, 2)$-trajectory hypercontractivity
with $C = (1 + \sqrt{C_{\chi^2}}) \sqrt{C_{8 \to 2}} (1+ C_{\mathsf{TV}} B^2)^2$.
\end{proposition}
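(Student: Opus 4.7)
The goal is to verify, for every $f$ with $\|f\|_{L^2}^2 = r^2$, that
$\E[\tfrac{1}{T}\sum_t \|f(X_t)\|_2^4] \le C\, r^4$
for the advertised constant $C$. My plan is to pass from the trajectory marginals $\{\mu_t\}$ to the stationary distribution $\pi$ via the $\chi^2$ assumption, then apply the pointwise $L^8$--$L^2$ equivalence under $\pi$, and finally return from $\pi$ to the trajectory via the averaged total variation bound. Since the inequality is purely in expectation, no concentration argument is needed: the entire proof is a chain of Cauchy--Schwarz and mean comparison steps.

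First I would change measure: for each $t$, use Cauchy--Schwarz with respect to $\pi$ to write
\begin{align*}
\E_{\mu_t}\|f(X)\|_2^4 \;=\; \E_\pi\!\left[\|f(X)\|_2^4\cdot\tfrac{d\mu_t}{d\pi}(X)\right] \;\le\; \bigl(\E_\pi\|f(X)\|_2^8\bigr)^{1/2}\,\bigl(\E_\pi(d\mu_t/d\pi)^2\bigr)^{1/2}.
\end{align*}
The second factor is $\sqrt{1+\chi^2(\mu_t,\pi)}\le \sqrt{1+C_{\chi^2}}\le 1+\sqrt{C_{\chi^2}}$ by the sub-additivity of $\sqrt{\cdot}$, and this is independent of $t$. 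For the first factor, the hypercontractivity assumption \eqref{eq:L8_L2_v2} under $\pi$ gives $(\E_\pi\|f\|_2^8)^{1/2}\le \sqrt{C_{8\to 2}}\,(\E_\pi\|f\|_2^2)^2$. Averaging the resulting inequality over $t$ (its right-hand side no longer depends on $t$) yields
\begin{align*}
\E\!\left[\tfrac{1}{T}\sum_{t=0}^{T-1}\|f(X_t)\|_2^4\right] \;\le\; (1+\sqrt{C_{\chi^2}})\sqrt{C_{8\to 2}}\,\bigl(\E_\pi\|f(X)\|_2^2\bigr)^2.
\end{align*}

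Next I convert $\E_\pi\|f\|_2^2$ back to the trajectory $L^2$-norm. Because $h:=\|f\|_2^2$ is nonnegative and bounded by $B^2$, the one-sided TV inequality
\begin{align*}
\E_\pi h - \E_{\mu_t} h \;=\; \int h\,(d\pi - d\mu_t) \;\le\; B^2 \,\tvnorm{\mu_t - \pi}
\end{align*}
follows by integrating $h$ only on the set where $d\pi > d\mu_t$ (this is where the factor of $1$ rather than $2$ enters, and is precisely what matches the constant in the statement). Averaging this bound over $t$ and using both pieces of \eqref{eq:chisq_ergodicity_v2} along with $f \in \partial B(r)$,
\begin{align*}
\E_\pi\|f\|_2^2 \;\le\; \tfrac{1}{T}\sum_{t=0}^{T-1}\E_{\mu_t}\|f\|_2^2 + B^2\cdot\tfrac{1}{T}\sum_{t=0}^{T-1}\tvnorm{\mu_t-\pi} \;\le\; r^2 + B^2 C_{\mathsf{TV}} r^2 \;=\; r^2(1+C_{\mathsf{TV}} B^2).
\end{align*}
Squaring and substituting into the previous display yields exactly $(1+\sqrt{C_{\chi^2}})\sqrt{C_{8\to 2}}(1+C_{\mathsf{TV}} B^2)^2\,r^4$, which is $C\,(\|f\|_{L^2}^2)^2$ as required.

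There is no serious obstacle here; the only subtlety worth flagging is the one-sided TV step, which depends on $\|f\|_2^2\ge 0$ and is what saves a factor of two. The $\chi^2$ assumption is used in its strong uniform-in-$t$ form (a supremum, not an average) because Cauchy--Schwarz requires controlling $\|d\mu_t/d\pi\|_{L^2(\pi)}$ for each individual $t$, while the TV assumption only needs to hold on average because it appears additively and we can swap sum with expectation.
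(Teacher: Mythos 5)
Your proof is correct and follows essentially the same route as the paper's: a $\chi^2$-based change of measure to control the trajectory fourth moment by the stationary one, the $L^8$--$L^2$ equivalence under $\pi$, and the one-sided TV comparison (with $0\le\|f\|_2^2\le B^2$) to bring $\E_\pi\|f\|_2^2$ back to $r^2$. The only cosmetic difference is that you apply Cauchy--Schwarz directly to $\E_\pi[\|f\|_2^4\,d\mu_t/d\pi]$ and use $\sqrt{1+C_{\chi^2}}\le 1+\sqrt{C_{\chi^2}}$, whereas the paper splits off $\E_\pi\|f\|_2^4$ and bounds the correction term separately; both yield the identical constant.
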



Let us discuss the ergodicity conditions in \Cref{prop:eqvfromstat_v2}.
The condition $\sup_{t \in \N} \chi^2(\mu_t, \pi) < \infty$ from
\eqref{eq:chisq_ergodicity_v2} is quite mild.
To illustrate this point, suppose that $\{X_t\}$ are regularly spaced samples in time
from the It{\^{o}} stochastic differential equation:
\begin{align*}
    \rmd Z_t = f(Z_t)\, \rmd t + \sqrt{2}\, \rmd B_t, 
\end{align*}
where $(B_t)$ is standard Brownian motion in $\R^{\dx}$.
Assume the process $(Z_t)$ admits a stationary distribution $\pi$, and
let $\rho_t$ denote the measure of $Z_t$ at time $t$.
A standard calculation \citep[Theorem 4.2.5]{bakry2014book} shows that $\frac{\rmd}{\rmd t} \chi^2(\rho_t, \pi) = -2\E_\pi \bignorm{\nabla\left(\frac{\rho_t}{\pi}\right)}^2_2 \leq 0$, and hence $\sup_{t \geq 0} \chi^2(\rho_t, \pi) \leq \chi^2(\rho_0, \pi)$.
Thus, as long as the initial measure $\rho_0$ has finite divergence with $\pi$, then 
this condition holds.
One caveat is that $\chi^2(\rho_0, \pi)$ can scale as
$e^{\dx}$, resulting in a hypercontractivity constant that scales 
exponentially in dimension. This however only affects the burn-in time and not the final rate.

The second condition in \eqref{eq:chisq_ergodicity_v2} is
$\frac{1}{T} \sum_{t=0}^{T-1} \tvnorm{\mu_t - \pi} \lesssim r^2$.
A typical setting is
$r^2 \asymp 1/T^{\beta}$ for some
$\beta \in (0, 1]$, where $\beta$ is dictated by the
function class $\scrF$. Hence, this requirement reads:
\begin{align*}
    \frac{1}{T} \sum_{t=0}^{T-1} \tvnorm{\mu_t - \pi} \lesssim \frac{1}{T^\beta} \Longleftrightarrow \sum_{t=0}^{T-1} \tvnorm{\mu_t - \pi} \lesssim T^{1-\beta}.
\end{align*}
Therefore, the setting of $\beta$ determines the level of ergodicty
required. For example, if $\beta=1$ (which corresponds to the parametric function case), then this condition necessitates
geometric ergodicity, since it requires that
$\sum_{t=0}^{T-1} \tvnorm{\mu_t - \pi} = O(1)$.
On the other hand, suppose that $\beta \in (0, 1)$.
Then this condition is satisfied if
$\tvnorm{\mu_t-\pi} \lesssim 1/t^\beta$, allowing for slower mixing rates.


\subsection{Ellipsoids in $\ell^2(\mathbb{N})$}


Given that equivalence of norms is typically a finite-dimensional phenomenon, one may wonder whether examples of
hypercontractivity exist in an infinite-dimensional setting. 
Here we show that such examples are actually rather abundant. The key is that hypercontractivity need only be satisfied on an $\e$-cover of $\mathscr{F}_\star$. 
As discussed above, every finite hypothesis class (and thus every finite cover) is automatically $(C,2)$-hypercontractive for some $C>0$. The issue is to ensure that this constant does not grow too fast as one refines the cover. The next result shows that the growth can be controlled for  $\ell^2(\mathbb{N})$ ellipsoids of orthogonal expansions. By Mercer's theorem, these ellipsoids correspond to unit balls in reproducing kernel Hilbert spaces~\citep[Corollary 12.26]{wainwright2019high}.
\begin{proposition}
\label{prop:infdimex}
Fix positive constants $\beta$, $B$, $K$, and $q$. Fix a base measure $\lambda$ on $\mathsf{X}$ and suppose that $\{\phi_n\}_{n\in \mathbb{N}_+}$ is an orthonormal system in $L^2(\lambda)$ satisfying $\|\phi_n \|_{\infty} \leq B n^q,\, \forall n \in \mathbb{N}$. Suppose $\mu_j \leq e^{-2 \beta j}$ and define the ellipsoid:
\begin{align*}
\mathscr{P} \triangleq \left\{ f= \sum_{j=1}^\infty \theta_j \phi_j \Bigg| \sum_{j=1}^\infty \frac{\theta_j^2}{\mu_j}\leq 1\right\}.
\end{align*}
%
Fix $\e > 0$, and let $m_\e$ denote the smallest positive integer solution to $m \geq \frac{2}{\beta} \left| \log \left(\frac{8B}{\beta \e}\right) \right|$ subject to $\frac{m}{\log m} \geq \frac{q}{\beta}$.
Let $P \subset \scrP$ be an arbitrary subset.
There exists an $\e$-cover $P_\e$ of $P$ in the $\norm{\cdot}_\infty$-norm satisfying:
\begin{align*}
    \log{|P_\e|} \leq m_\e \log\left(1+\frac{8Bm^{q}_\e}{\e}\right).
\end{align*}
%
%
Furthermore, let $\{\mu_t\}_{t=0}^{T-1}$ be the marginal distributions of $\Px$, and suppose that
the $\mu_t$'s and $\lambda$ are all mutually absolutely continuous. Assume that 
$\max_{0 \leq t \leq T-1} \max\left\{ \frac{\rmd \mu_t}{\rmd\lambda}, \frac{\rmd\lambda}{\rmd\mu_t} \right\} \leq K$.
Then, as long as $\e \leq \inf_{f \in P} \norm{f}_{L^2(\Px)}$,
then
$(P_\e,\Px)$ is trajectory $(C_\e,2)$-hypercontractive with 
$C_\e = (1 + 7K^3 B^4 m_\varepsilon^{4q+2})$.
\end{proposition}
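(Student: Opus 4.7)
The proof naturally splits into two parts: constructing the cover with the stated cardinality, then verifying hypercontractivity on it.

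For the cover construction, the plan is to approximate each $f = \sum_j \theta_j \phi_j \in \scrP$ by its projection $f_{(m)} = \sum_{j \leq m} \theta_j \phi_j$ onto the first $m$ modes. The ellipsoid constraint forces the coordinate-wise bound $|\theta_j| \leq \sqrt{\mu_j} \leq e^{-\beta j}$; combining with $\|\phi_j\|_\infty \leq Bj^q$ and a standard tail estimate for $\sum_{j > m} e^{-\beta j} j^q$ gives $\|f - f_{(m)}\|_\infty \lesssim (B/\beta)\, e^{-\beta m} m^q$, and the integer $m_\varepsilon$ in the statement is calibrated so that this truncation error is at most $\varepsilon/2$. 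To cover the truncated ellipsoid in $\R^{m_\varepsilon}$, I would use an \emph{ellipsoid-adapted} rectangular grid: discretize the $j$-th coordinate at spacing $\eta \sqrt{\mu_j}$ for a dimensionless $\eta > 0$. This grid has at most $(1 + 2/\eta)^{m_\varepsilon}$ points and covers the bounding box $\prod_j [-\sqrt{\mu_j}, \sqrt{\mu_j}]$ that contains the ellipsoid. The induced function-space error of rounding is
\[
\Bigl\|\sum_{j \leq m_\varepsilon} \delta_j \phi_j\Bigr\|_\infty \;\leq\; B m_\varepsilon^q\, \eta\, \sum_{j=1}^{m_\varepsilon} \sqrt{\mu_j} \;\lesssim\; B m_\varepsilon^q\, \eta,
\]
where the summability $\sum_j \sqrt{\mu_j} < \infty$ (from geometric decay of $\mu_j$) is what lets the $m^q$ factor appear rather than $m^{q+1}$ or $m^{q+1/2}$. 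Setting $\eta = \varepsilon/(4 B m_\varepsilon^q)$ caps the grid error at $\varepsilon/2$, and a triangle inequality with the truncation error gives an $\varepsilon$-cover with the claimed log-cardinality. If desired, one can then map each grid point to a nearby element of $P$ at the cost of an absorbable constant.

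For the hypercontractivity claim, fix $g \in P_\varepsilon \subseteq P$ and again write $g = g_{(m)} + g_\perp$ for $m = m_\varepsilon$. Pointwise Cauchy--Schwarz yields the Nikolskii-type bound
\[
\|g_{(m)}\|_\infty \;\leq\; \|\theta\|_2\, \sqrt{\sum_{j \leq m_\varepsilon} \|\phi_j\|_\infty^2} \;\leq\; B m_\varepsilon^{q+1/2}\, \|g_{(m)}\|_{L^2(\lambda)},
\]
using $\|\theta\|_2 = \|g_{(m)}\|_{L^2(\lambda)}$ by orthonormality. The two-sided density bound $K$ translates this into $\|g_{(m)}\|_{L^2(\lambda)} \leq \sqrt{K}\, \|g_{(m)}\|_{L^2(\sfP_X)}$, and the assumption $\|g\|_{L^2(\sfP_X)} \geq \varepsilon \geq 2\,\|g_\perp\|_{L^2(\sfP_X)}$ together with the triangle inequality gives $\|g_{(m)}\|_{L^2(\sfP_X)} \leq \tfrac{3}{2}\,\|g\|_{L^2(\sfP_X)}$. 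Absorbing the tail $\|g_\perp\|_\infty \leq \varepsilon/2 \leq \tfrac{1}{2}\|g\|_{L^2(\sfP_X)}$, I arrive at $\|g\|_\infty \lesssim \sqrt{K}\, B m_\varepsilon^{q+1/2}\, \|g\|_{L^2(\sfP_X)}$. Finally, the crude bound $\E\bigl[\tfrac{1}{T}\sum_t \|g(X_t)\|_2^4\bigr] \leq K\, \|g\|_{L^4(\lambda)}^4 \leq K\, \|g\|_\infty^4$ yields
\[
\E\left[\frac{1}{T}\sum_{t=0}^{T-1} \|g(X_t)\|_2^4\right] \;\lesssim\; K^3 B^4 m_\varepsilon^{4q+2}\, \|g\|_{L^2(\sfP_X)}^4,
\]
which after absorbing constants is the stated $(C_\varepsilon, 2)$-hypercontractivity.

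The main obstacle is obtaining the precise $m^q$ exponent in the cover's log-cardinality rather than $m^{q+1/2}$ or $m^{q+1}$: this forces the ellipsoid-adapted grid, whose effectiveness rests crucially on $\sum_j \sqrt{\mu_j} < \infty$. A uniform grid on the bounding box or a rotation-invariant $\ell^2$-ball cover would both lose an additional polynomial factor in $m$. Everything else amounts to a careful Nikolskii-then-density-change calculation with standard bookkeeping of constants.
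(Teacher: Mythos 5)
Your proposal is correct and follows essentially the same route as the paper's proof: truncate to the first $m_\e$ modes so the tail is $O(\e)$ in $\norm{\cdot}_\infty$, cover the resulting finite-dimensional ellipsoid with an adapted grid of log-cardinality $m_\e \log(1+O(Bm_\e^q/\e))$ and convert the exterior cover to a proper one, then establish hypercontractivity by combining the coefficient Cauchy--Schwarz bound with the change of measure between the $\mu_t$'s and $\lambda$ and absorbing the truncation tail via $\e \leq \inf_{f \in P}\norm{f}_{L^2(\Px)}$. The only cosmetic difference is that you route the moment comparison through an explicit $L^\infty$--$L^2$ Nikolskii-type bound and then use $\norm{g}_{L^4} \leq \norm{g}_\infty$, whereas the paper raises the same coefficient inequality directly to the fourth power on the truncated ellipsoid and transfers to $P$ via $(a+b)^4 \leq 8(a^4+b^4)$; both yield a constant of order $K^3 B^4 m_\e^{4q+2}$ (your sup-norm route, if instead combined with $\norm{g}_{L^4}^4 \leq \norm{g}_\infty^2\norm{g}_{L^2}^2$, would even improve the exponent on $m_\e$, at the cost of not matching the stated constant verbatim).
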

%
\Cref{prop:infdimex} states that 
when $\scrF_\star \subseteq \scrP$, then
$(\partial B(r),\mathsf{P}_X)$ is $(C(r),2)$-hypercontractive where $C(r)=C_r$ only grows \emph{poly-logarithmically} in $1/r$. 
In particular, this example verifies the assumptions of \Cref{corr:gammagrowandeps}. 





\section{System identification in parametric classes}
\label{sec:results:comparison}

To demonstrate the sharpness of 
our main result, we instantiate \Cref{thm:themainthm} on two 
parametric system identification problems which have received recent
attention in the literature:
linear dynamical systems (LDS) and generalized linear model (GLM) 
dynamics.


\subsection{Linear dynamical systems}
\label{sec:results:lds}

Consider the setting where the process $\{X_t\}_{t \geq 0}$ is
described by a linear dynamical system:
\begin{align}
    X_{t+1} = A_\star X_t + H V_t, \:\: X_0 = H V_0, \:\: V_t \in \R^{\dv}, \:\: V_t \sim N(0, I), \:\: V_t \perp V_{t'} \:\forall\, t \neq t'. \label{eq:LDS_dynamics}
\end{align}
In this setting, the system identification problem is to
recover the dynamics matrix $A_\star$ from 
$\{X_t\}_{t=0}^{T-1}$ evolving according to \eqref{eq:LDS_dynamics}.
We derive rates for recovering $A_\star$ by first deriving an
excess risk bound on the least-squares estimator via \Cref{thm:themainthm}, and then
converting the risk bound to a parameter error bound.
Since \Cref{thm:themainthm} relies on the process being ergodic, we consider the case when $A_\star$ is stable.
We start by stating a few standard definitions.
\begin{definition}
\label{def:k_step_controllable}
Fix a $k \in \{1, \dots, \dx\}$.
The pair $(A, H)$ is \emph{$k$-step controllable} if 
\begin{align*}
    \rank \left( \begin{bmatrix} H & AH & A^{2}H & \dots &A^{k -1}H \end{bmatrix}\right)= \dx.
\end{align*}
\end{definition}

For $t \in \N$, let the $t$-step \emph{controllability gramian} be defined as 
$\Gamma_t \triangleq \sum_{k=0}^{t} A^k HH^\T (A^k)^\T$.
Since the noise in \eqref{eq:LDS_dynamics}
serves as the ``control'' in this setting, the
controllability gramian also coincides with the covariance at time $t$, i.e., $\E[X_tX_t^\T] = \Gamma_t$.

\begin{definition}
\label{def:lds_strongly_stable}
Fix a $\tau \geq 1$ and $\rho \in (0, 1)$.
A matrix $A$ is called \emph{$(\tau,\rho)$-stable}
if for all $k \in \N$ we have $\opnorm{A^k} \leq \tau \rho^k$.
\end{definition}
With these definitions in place, we now state our result for
linear dynamical system.
\begin{theorem}
\label{prop:ldsprop}
Suppose that the matrix $A_\star$ in \eqref{eq:LDS_dynamics} is $(\tau, \rho)$-stable (cf.~\Cref{def:lds_strongly_stable}), and that the pair
$(A_\star,H)$ is $\kappa$-step controllable (cf.~\Cref{def:k_step_controllable}).
Suppose also that $\norm{A_\star}_F \leq B$
for some $B \geq 1$.
Consider the linear hypothesis class and true regression function:
\begin{align}
\mathscr{F} \triangleq \{ f(x) = Ax \mid A \in \mathbb{R}^{\dx \times \dx}, \:\: \|A\|_F \leq B\}, \:\: f_\star(x) = A_\star x. \label{eq:LDS_function_class}
\end{align}
Suppose that model \eqref{eq:ts} 
follows the process described in \eqref{eq:LDS_dynamics}
with $Y_{t} = X_{t+1}$.
There exists $T_0$ such that the LSE with hypothesis class $\mathscr{F}$ achieves for all $T \geq T_0$:
\begin{align}
    \E\| \widehat f- f_\star\|_{L^2}^2 \leq 8\E \mathsf{M}_T(\mathscr{F}_\star) + \frac{4\opnorm{H}^2 \dx^2}{T}. \label{eq:LDS_final_rate}
\end{align}
Furthermore, for a universal positive constant $c_0$, $T_0$ may be chosen as:
\begin{align}
    T_0 = c_0 \frac{\tau^4 \opnorm{H}^4 \dx^2}{(1-\rho)^2\lambda_{\min}(\Gamma_{\kappa-1})^2}\left[ \kappa^2 \vee \frac{1}{(1-\rho)^2}\right] \mathrm{polylog}\left( B, \dx, \tau, \opnorm{H}, \frac{1}{\lambda_{\min}(\Gamma_{\kappa-1})}, \frac{1}{1-\rho},\right). \label{eq:LDS_T0_burnin}
\end{align}
\end{theorem}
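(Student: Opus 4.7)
I would prove \Cref{prop:ldsprop} by applying \Cref{thm:themainthm} to the star-shaped superset $\mathscr{F}_\star = \{x \mapsto Mx : \|M\|_F \leq 2B\}$, using the truncation framework of \Cref{sec:proofs:unbounded} to handle the unbounded Gaussian noise. Concretely, truncate $V_t$ to $\|V_t\|_2 \leq R_V := c_0\sqrt{\dv \log T}$, producing a bounded process $\{\bar X_t\}$ that agrees with $\{X_t\}$ with probability $\geq 1 - 1/\mathrm{poly}(T)$. By $(\tau,\rho)$-stability, $\|\bar X_t\|_2 \leq R_X := \tau\opnorm{H} R_V/(1-\rho)$ uniformly, so the linear class is uniformly bounded on the truncated trajectory, and on the no-truncation event the LSE is unchanged.

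\textbf{Verifying the three ingredients of \Cref{thm:themainthm}.} First, $\alpha=2$ trajectory hypercontractivity: marginally $X_t \sim N(0, \Gamma_t)$, and the Gaussian fourth moment identity gives $\E\|MX_t\|_2^4 \leq 3(\E\|MX_t\|_2^2)^2$ for every $t$. The sandwich $\Gamma_{\kappa-1} \preceq \Gamma_t \preceq \Gamma_\infty$ for $t \geq \kappa-1$ (from $\kappa$-step controllability and $(\tau,\rho)$-stability, with $\opnorm{\Gamma_\infty} \leq \tau^2\opnorm{H}^2/(1-\rho^2)$) bounds the max-to-average ratio of $\E\|MX_t\|_2^2 = \tr(M^\top M \Gamma_t)$ by $\tau^2\opnorm{H}^2/[(1-\rho^2)\lambda_{\min}(\Gamma_{\kappa-1})]$, giving the trajectory $(C,2)$-hypercontractivity constant on the Gaussian process; this transfers to $\{\bar X_t\}$ up to absolute constants through the coupling. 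Second, covering: $\log\calN_\infty(\mathscr{F}_\star, \gamma) \lesssim \dx^2 \log(R_X B/\gamma)$, since a sup-norm cover on $\{\|x\|_2 \leq R_X\}$ reduces to an operator-norm cover of $\dx \times \dx$ matrices of bounded Frobenius norm. Third, dependency matrix: by the coupling argument of \Cref{sec:proofs:unbounded}, $\opnorm{\Gammadep(\Pxbar)}^2$ can be bounded via TV distances along the original Gaussian LDS, which decay geometrically by $(\tau,\rho)$-stability, yielding a bound independent of $T$ and polynomial in $\tau,\,\opnorm{H},\,1/(1-\rho),\,1/\lambda_{\min}(\Gamma_{\kappa-1})$.

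\textbf{Balancing and burn-in.} The noise $W_t = HV_t$ is $\opnorm{H}$-sub-Gaussian, so \eqref{eq:theformofthebound} with the covering bound above yields $\E\sfM_T(\mathscr{F}_\star) \lesssim \opnorm{H}^2 \dx^2/T$. Choose $r^2 = 2\opnorm{H}^2 \dx^2/T$; since $\alpha=2$, the exponential term of \Cref{thm:themainthm} reads $B^2|\mathscr{F}_r|\exp(-T/[8C\opnorm{\Gammadep(\Pxbar)}^2])$. Requiring this be dominated by $r^2$, with $\log|\mathscr{F}_r| \lesssim \dx^2 \log(R_X B T)$, and substituting the bounds on $C$ and $\opnorm{\Gammadep(\Pxbar)}^2$, yields the burn-in \eqref{eq:LDS_T0_burnin} after routine algebra; polylog factors in \eqref{eq:LDS_T0_burnin} absorb the $\log T$ contributions from $R_V$ and the covering number. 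The low-probability truncation event contributes $O(B^2/\mathrm{poly}(T))$ in expectation, dominated by $\opnorm{H}^2\dx^2/T$ for sufficiently large $T$.

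\textbf{Main obstacle.} The hardest step is the tension flagged at the end of \Cref{sec:proofs:unbounded}: hypercontractivity must be verified on $\{\bar X_t\}$ (not $\{X_t\}$), yet the dependency matrix bound is cleanest when passed back to the original Gaussian LDS through a coupling. The coupling must be deployed so that (i) the hypercontractivity constant $C$ inherits the correct controllability scaling $1/\lambda_{\min}(\Gamma_{\kappa-1})$, (ii) this does not inflate $\opnorm{\Gammadep(\Pxbar)}^2$, and (iii) the final burn-in matches the specific polynomial form of \eqref{eq:LDS_T0_burnin}. Teasing out whether the bottleneck in $T_0$ comes from controllability (driving $C$) or from mixing (driving $\opnorm{\Gammadep}$) is what produces the $\max\{\kappa^2, 1/(1-\rho)^2\}$ factor.
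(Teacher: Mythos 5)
Your plan follows essentially the same route as the paper's proof: truncate the Gaussian noise so that \Cref{thm:themainthm} applies, verify $(C,2)$-trajectory hypercontractivity from the Gaussian fourth-moment identity together with the sandwich $\Gamma_{\kappa-1} \preccurlyeq \Gamma_t \preccurlyeq \Gamma_\infty$, cover the matrix ball volumetrically, bound $\opnorm{\Gammadep(\Pxbar)}$ by passing the TV coefficients back to the original jointly Gaussian process, and set $r^2 = \Theta(\opnorm{H}^2\dx^2/T)$. Two remarks on where you diverge from (or fall short of) the paper. First, a concrete missing step: applying \Cref{thm:themainthm} to the truncated process yields $8\,\E \bar{\sfM}_T(\scrF_\star)$, the offset complexity taken along $\{\bar{X}_t\}$ with noise $\{H\bar{V}_t\}$, whereas the stated bound \eqref{eq:LDS_final_rate} is in terms of $\E\sfM_T(\scrF_\star)$ for the original process. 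Your proposal never bridges these. The paper does so by writing $\E\bar{\sfM}_T = \E\bar{\sfM}_T\ind\{\calE\} + \E\bar{\sfM}_T\ind\{\calE^c\}$, using $\sfM_T \geq 0$ on the good event, and controlling the bad event via the deterministic self-normalized bound $\bar{\sfM}_T(\scrF_\star) \leq \tfrac{4}{T}\sum_t \norm{H\bar{V}_t}_2^2$; without some such argument the proof of \eqref{eq:LDS_final_rate} as stated is incomplete (note also that you do not actually need the chaining bound \eqref{eq:theformofthebound} here, since the theorem leaves $\E\sfM_T$ symbolic; and chaining would in any case give an extra logarithm, as the paper points out).

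Second, your transfer of hypercontractivity to $\{\bar{X}_t\}$ ``through the coupling'' is not how the paper does it, and is the more delicate of the two options: a coupling gives only additive errors of size $\Pr(\calE^c)\cdot\mathrm{poly}(B_{\bar{X}})$ on both the fourth and second moments, which must then be shown negligible relative to $r^4 \asymp 1/T^2$ on $\partial B(r)$. The paper instead uses a direct moment-domination argument (\Cref{prop:gaussian_truncate_fourth_moment} for the upper bound on fourth moments of quadratic forms in the truncated noise, and \Cref{prop:trunc_gauss_approximate_isotropic} for the lower bound on second moments), which gives clean multiplicative constants with no dependence on $r$. Your version is salvageable but needs the $\beta$ in the truncation radius tuned against $r$. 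On the plus side, your max-to-average organization of the hypercontractivity constant ($C \lesssim \tau^2\opnorm{H}^2/[(1-\rho)\lambda_{\min}(\Gamma_{\kappa-1})]$, first power of the condition-number-like ratio) is slightly sharper than the paper's $C_{\mathsf{LDS}} \propto \tau^4\opnorm{H}^4/[(1-\rho)^2\lambda_{\min}(\Gamma_{\kappa-1})^2]$, which would only shrink $T_0$. One small attribution slip: the $\kappa^2 \vee (1-\rho)^{-2}$ factor in \eqref{eq:LDS_T0_burnin} comes entirely from $\opnorm{\Gammadep(\Pxbar)}^2$ (the first $\kappa-1$ conditional laws are degenerate so their TV contribution is trivial, contributing $O(\kappa)$ to the sum, while the geometric tail contributes $O((1-\rho)^{-1}\log(\cdot))$), not from the hypercontractivity constant $C$.
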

We first discuss the rate \eqref{eq:LDS_final_rate} prescribed by
\Cref{prop:ldsprop}.
A simple computation shows that the martingale complexity $\E \sfM_T(\scrF_\star)$ can be upper bounded by 
$1/T$ times the self-normalized martingale term which typically
appears in the analysis of least-squares~\citep{abbasiyadkori2011selfnormalized}.
Specifically, when the empirical covariance matrix $\sum_{t=0}^{T-1} X_tX_t^\T $ is invertible:
\begin{align*}
    \E \sfM_T(\scrF_\star) \leq \frac{4}{T} \E\bignorm{\left(\sum_{t=0}^{T-1} X_tX_t^\T \right)^{-1/2} \sum_{t=0}^{T-1} X_t V_t^\T H^\T}_F^2.
\end{align*}
A sharp analysis of this self-normalized martingale term \citep[Lemma 4.1]{tu2022learning} shows that $\E \sfM_T(\scrF_\star) \lesssim \frac{\opnorm{H}^2 \dx^2}{T}$, and hence \eqref{eq:LDS_final_rate} yields the minimax optimal rate
up to constant factors after a polynomial burn-in time.\footnote{
While the burn-in time is polynomial in the problem constants listed in \eqref{eq:LDS_T0_burnin}, \citet{tsiamis2021linear}
show that these constants (specifically $1/\lambda_{\min}(\Gamma_{\kappa-1})$) can scale
exponentially in $\kappa$, the controllability index of the system.}
This is unlike the chaining bound \eqref{eq:theformofthebound}
which yields extra logarithmic factors~\citep[see e.g.][Lemma 4]{ziemann2022single}.
Note that the burn-in time of $\tilde{O}(\dx^2)$ 
given by our result is sub-optimal by a factor of $\dx$. This extra factor
comes from the union bound over a Frobenius norm ball of
$\dx \times \dx$ matrices in \Cref{thm:themainthm}.



To convert \eqref{eq:LDS_final_rate}
into a parameter recovery bound, we simply lower bound the excess risk:
\begin{align}
    \E \norm{\hat{f} - f_\star}_{L^2}^2 \geq \E \norm{\hat{A}-A_\star}_F^2 \lambda_{\min}(\bar{\Gamma}_T) \Longrightarrow \E\norm{\hat{A}-A_\star}_F^2 \lesssim \frac{\opnorm{H}^2 \dx^2}{T \lambda_{\min}(\bar{\Gamma}_T)}, \label{eq:LDS_parameter_rate}
\end{align}
where $\bar{\Gamma}_T \triangleq \frac{1}{T} \sum_{t=0}^{T-1} \E[X_tX_t^\T]$ is the average covariance matrix.
The rate \eqref{eq:LDS_parameter_rate} recovers, after the polynomial burn-in time, existing results~\citep{simchowitz2018learning,sarkar2019near,jedra2020finite,tu2022learning} for stable systems, with a few caveats.
First, most of the existing
results are given in operator instead of Frobenius norm.
We ignore this issue, since the only difference is the extra
unavoidable factor of $\dx$ in the rate for the Frobenius norm
compared to the operator norm rate.
Second, 
since \Cref{thm:themainthm} ultimately relies on some degree of ergodicity for the covariate process $\{X_t\}_{t \geq 0}$, we cannot handle the marginally stable case (where $A_\star$ is allowed to have spectral radius equal to one) as in \citet{simchowitz2018learning,sarkar2019near,tu2022learning},
nor the unstable case as in \citet{faradonbeh2018finite,sarkar2019near}.

We conclude with a short discussion on the proof of \Cref{prop:ldsprop}.
As the LDS process \eqref{eq:LDS_dynamics} is unbounded, we
use the truncation argument outlined in \Cref{sec:proofs:unbounded}
so that \Cref{thm:themainthm} still applies.
Furthermore, since the process \eqref{eq:LDS_dynamics} is jointly Gaussian,
the dependency matrix coefficients are simple to bound, resulting
in polynomial rates \eqref{eq:LDS_T0_burnin} for the burn-in time.
A much wider variety of non-Gaussian noise distributions
can be handled via ergodic theory for Markov chains~\citep[see e.g.][Chapter 15]{meyn1993markov}. While these results typically do not
offer explicit expressions for the mixing coefficients,
both \citet{douc2004rates} and \citet{hairer2011ergodic} 
provide a path forward for deriving explicit bounds. 
We however omit these calculations in the interest of simplicity.

\subsection{Generalized linear models}
\label{sec:results:glm}

We next consider the following non-linear dynamical system:
\begin{align}
    X_{t+1} = \sigma(A_\star X_t) + H V_t, \:\: X_0 = H V_0, \:\: V_t \in \R^{\dx}, \:\: V_t \sim N(0, I), \:\: V_t \perp V_{t'} \:\forall\,t \neq t' \label{eq:GLM_dynamics}.
\end{align}
Here, $A_\star \in \R^{\dx \times \dx}$ is the dynamics matrix and
$\sigma : \R^{\dx} \to \R^{\dx}$ is a coordinate wise link function.
The notation $\sigma$ will also be overloaded to refer to the individual coordinate function mapping $\R \to \R$.
We study the system identification problem where the link function $\sigma$ is assumed to be known, but the dynamics matrix $A_\star$ 
is unknown and to be recovered from $\{X_t\}_{t=0}^{T-1}$.
We will apply \Cref{thm:themainthm} to derive a nearly optimal excess risk bound for the LSE on this problem in the stable case.

We start by stating a few assumptions that are again standard in the literature.
\begin{assumption}
\label{assumption:glm}
Suppose that $A_\star$, $H$, and $\sigma$ from the GLM process \eqref{eq:GLM_dynamics} satisfy:
\begin{enumerate}
    \item (One-step controllability). The matrix $H \in \R^{\dx \times \dx}$ is full rank.
    \item (Link function regularity). The link function $\sigma : \R \to \R$ 
    is $1$-Lipschitz, satisfies $\phi(0) = 0$,
    and there exists a $\zeta \in (0, 1]$ such that
    $|\sigma(x) - \sigma(y)| \geq \zeta |x-y|$ for all $x,y\in \R$.
    \item (Lyapunov stability). There exists a positive definite diagonal matrix $P_\star \in \R^{\dx \times \dx}$ satisfying $P_\star \succcurlyeq I$ and a $\rho \in (0, 1)$ such that $A_\star^\T P_\star A_\star \preccurlyeq \rho P_\star$.
\end{enumerate}
\end{assumption}

Several remarks on \Cref{assumption:glm} are in order.
First, the rank condition on $H$ ensures that the noise process $\{H V_t\}_{t \geq 0}$ is non-degenerate. Viewing \eqref{eq:GLM_dynamics}
as a control system mapping
$\{V_t\}_{t \geq 0} \mapsto \{X_t\}_{t \geq 0}$, this condition ensures that this system is one-step controllable.
%
Next, the link function assumption is standard in the literature (see e.g.~\citet{sattar2020non,foster2020learning,kowshik2021near}).
The expansiveness condition $|\sigma(x) - \sigma(y)| \geq \zeta |x-y|$ ensures that the link function is increasing at a uniform
rate.
For efficient parameter recovery, some extra assumption other than
Lipschitzness and monotonicity is needed~\citep[Theorem 4]{kowshik2021near}, and expansiveness yields a sufficient
condition. However, for excess risk, it is unclear if any extra requirements are necessary. We leave resolving this issue
to future work.
%
%
%
Finally, the Lyapunov stability condition is due to 
\citet[Proposition 2]{foster2020learning}, and yields a certificate
for global exponential stability (GES) to the origin.
It is weaker than requiring that $\opnorm{A_\star} < 1$, which amounts to taking $P_\star = I$.
The assumption $P_\star \succcurlyeq I$ is without loss of generality by rescaling $P_\star$.

With our assumptions in place, we are ready to state our main result concerning the excess risk for the LSE applied to
the process \eqref{eq:GLM_dynamics}.
\begin{theorem}
\label{thm:glmthm}
Suppose the model \eqref{eq:ts} follows the process
described in \eqref{eq:GLM_dynamics} with $Y_t = X_{t+1}$.
Assume that the process \eqref{eq:GLM_dynamics} satisfies
\Cref{assumption:glm}. Fix a $B \geq 1$, and suppose that
$\norm{A_\star}_F \leq B$.
Consider the hypothesis class and true regression function:
\begin{align}
\mathscr{F} \triangleq \{ f(x) = \sigma(Ax) \mid A \in \mathbb{R}^{\dx \times \dx}, \:\: \|A\|_F \leq B\}, \:\: f_\star(x) = \sigma(A_\star x). \label{eq:GLM_function_class}
\end{align}
There exists a $T_0$
and a universal positive constant $c_0$ 
such that the LSE with hypothesis class $\mathscr{F}$ achieves for all $T \geq T_0$:
\begin{align}
    \E\| \widehat f- f_\star\|_{L^2}^2 \leq c_0  \frac{\opnorm{H}^2 \dx^2}{T} \log\left(\max\left\{T, B, \dx, \opnorm{P_\star}, \opnorm{H}, \frac{1}{1-\rho}\right\}\right). \label{eq:GLM_final_rate}
\end{align}
Furthermore, for a universal constant $c_1>0$, we may choose $T_0$ as:
\begin{align}
    T_0 = c_1\max \frac{\opnorm{P_\star}^{2} \mathrm{cond}(H)^4  \dx^4}{\zeta^4 (1-\rho)^6} \mathrm{polylog}\left( B, \dx, \opnorm{P_\star}, \mathrm{cond}(H), \frac{1}{\zeta}, \frac{1}{1-\rho} \right).
    \label{eq:glm_T0_burnin}
\end{align}
\end{theorem}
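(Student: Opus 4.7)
The plan is to invoke \Cref{thm:themainthm} on a bounded truncation of the GLM process and on the star-hull of the shifted hypothesis class. Since \eqref{eq:GLM_dynamics} is Gaussian-driven and unbounded, I would first apply the reduction in \Cref{sec:proofs:unbounded}: truncate the innovation $V_t$ at radius $R = \Theta(\sqrt{\dx \log T})$ to obtain $\{\bar X_t\}$ which coincides with $\{X_t\}$ with probability at least $1 - 1/T^2$. Under \Cref{assumption:glm}, the Lyapunov condition and the $1$-Lipschitz, $\sigma(0)=0$ link function force $\|\bar X_t\|_2 \le R' := O\bigl(R\,\opnorm{H}\sqrt{\opnorm{P_\star}/(1-\rho)}\bigr)$ almost surely, so $\mathscr{F}$ restricted to the truncated trajectory is $B'$-bounded with $B' = O(R' B)$, and the excess risk of the LSE on the true process is controlled by the excess risk on the truncated process plus a $O(1/T)$ residue.

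Next, since $\mathscr{F}_\star$ is not star-shaped I pass to $\mathrm{star}(\mathscr{F}_\star)$, using that scaling $f$ by $\gamma \in [0,1]$ is homogeneous in both sides of \eqref{cond:hyperconttrajrelax}, so it suffices to verify trajectory $(C,2)$-hypercontractivity on $\mathscr{F}_\star$ itself. I would then invoke \Cref{prop:eqvfromstat_v2}: global exponential stability of \eqref{eq:GLM_dynamics} gives a stationary measure $\pi$ with $\sup_t \chi^2(\mu_t, \pi) \le \exp(O(\dx))$, which only worsens the burn-in. For the $L^8$--$L^2$ equivalence at $\pi$, the bi-Lipschitz property of $\sigma$ yields
\begin{align*}
    \zeta \|(A - A_\star)x\|_2 \le \|\sigma(Ax) - \sigma(A_\star x)\|_2 \le \|(A - A_\star)x\|_2,
\end{align*}
reducing the problem to moment equivalence for linear functionals of $X \sim \pi$; since $H$ is full rank, the stationary distribution has sub-Gaussian-like marginals of scale $O(\opnorm{H}\sqrt{\opnorm{P_\star}/(1-\rho)})$, so $L^8 \le c\, L^2$ holds with $c$ polynomial in $\mathrm{cond}(H)$, $\opnorm{P_\star}$, $1/\zeta$, and $1/(1-\rho)$.

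For the dependency matrix, Lyapunov stability together with $1$-Lipschitz $\sigma$ gives a synchronous coupling: two trajectories driven by the same innovations contract at rate $\rho^{t/2}$ in the $P_\star$-norm. Combined with a one-step Gaussian minorization this yields geometric TV-decay, hence $\opnorm{\Gammadep(\Px)}^2 = O(1/(1-\rho))$, independent of $T$; the coupling device in \Cref{sec:proofs:unbounded} lets me compute this bound on the untruncated process even though the supremum in \eqref{eq:dep_mat_coeffs} is formally over the truncated one. For the martingale offset complexity, the parametric covering estimate $\log \calN_\infty(\mathscr{F}_\star, \varepsilon) \lesssim \dx^2 \log(R' B/\varepsilon)$ plugged into the chaining bound \eqref{eq:theformofthebound} gives $\E \mathsf{M}_T(\mathscr{F}_\star) \lesssim \opnorm{H}^2 \dx^2 \log T / T$.

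Finally, plugging these ingredients into \Cref{thm:themainthm} with $\alpha = 2$, $|\mathscr{F}_r| \le (cR'B/r)^{\dx^2}$, and the choice $r^2 = \Theta(1/T)$, the exponential term in \eqref{eq:mainthm} is $o(1/T)$ precisely when $T$ exceeds the polynomial burn-in \eqref{eq:glm_T0_burnin}, and the leading term matches \eqref{eq:GLM_final_rate}. The main obstacle will be the second step: tracking how $\mathrm{cond}(H), \opnorm{P_\star}, 1/\zeta$, and $1/(1-\rho)$ enter the hypercontractivity constant via a careful comparison of the stationary distribution with a Gaussian, and uniformly controlling the $\chi^2$-divergence of the marginals, so that these constants only affect the burn-in \eqref{eq:glm_T0_burnin} rather than the leading excess risk rate \eqref{eq:GLM_final_rate}.
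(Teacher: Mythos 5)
Your overall architecture (truncate the Gaussian-driven process, verify trajectory hypercontractivity, bound the dependency matrix via a synchronous coupling, cover the parametric class, and invoke \Cref{thm:themainthm} with $r^2\asymp \opnorm{H}^2\dx^2/T$) matches the paper's, and your truncation, dependency-matrix, covering, and chaining steps are essentially the ones used there. The genuine gap is in your hypercontractivity step. You propose to verify $(C,2)$-hypercontractivity via \Cref{prop:eqvfromstat_v2}, which requires a uniform bound $\sup_t\chi^2(\mu_t,\pi)\le C_{\chi^2}$ and yields a constant $C$ containing $\sqrt{C_{\chi^2}}$. For the GLM chain the best available control is $\sup_t\chi^2(\mu_t,\pi)\le\chi^2(\mu_0,\pi)$ by data processing, with $\mu_0=N(0,HH^\T)$; comparing this Gaussian to the non-Gaussian stationary mixture gives a bound that is exponential in $\dx$ (and in $B^2\,\mathrm{cond}(H)^2\opnorm{P_\star}/(1-\rho)^2$) --- the paper itself warns, right after \Cref{prop:eqvfromstat_v2}, that $\chi^2(\rho_0,\pi)$ can scale as $e^{\dx}$. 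Since $C$ enters the burn-in through the exponential term of \eqref{eq:mainthm}, this route cannot deliver the polynomial $T_0$ claimed in \eqref{eq:glm_T0_burnin}; it proves the rate only with an exponential burn-in. Two secondary issues: \Cref{thm:themainthm} must be applied to the truncated process $\Pxbar$, so the $\chi^2$ and TV conditions in \eqref{eq:chisq_ergodicity_v2} would have to be re-derived for the marginals of $\{\bar{X}_t\}$ relative to \emph{its} stationary behaviour, not the $\pi$ of the original chain; and the second condition of \eqref{eq:chisq_ergodicity_v2} with $r^2\asymp 1/T$ demands $\sum_t\tvnorm{\mu_t-\pi}=O(\opnorm{H}^2\dx^2)$, which needs to be checked rather than assumed.

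The paper sidesteps all of this with a more elementary argument (\Cref{prop:glm_traj_hyp}): on the truncated process the states are almost surely bounded by $B_{\bar{X}}$ and the covariances are sandwiched as $\tfrac12 HH^\T\preccurlyeq\bar{\Gamma}_t\preccurlyeq B_{\bar{X}}^2 I$ (full-rank $H$ plus \Cref{prop:trunc_gauss_approximate_isotropic} give the lower bound), whence for $M=\Delta^\T\Delta$ one gets $\E\|\Delta\bar{X}_t\|_2^4\le B_{\bar{X}}^4\tr(M)^2\le (B_{\bar{X}}^4/\sigma_{\min}(H)^4)\tr(MHH^\T)^2$ while $\E\|\Delta\bar{X}_t\|_2^2\ge\tfrac12\tr(MHH^\T)$, yielding $C_{\mathsf{GLM}}=4B_{\bar{X}}^4/(\sigma_{\min}(H)^4\zeta^4)$ --- polynomial in all problem constants up to $\log T$ factors. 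Replace your stationary-distribution comparison with this covariance sandwich and the rest of your outline goes through. (Minor: the appendix proof does not pass to the star-hull; it covers $\scrF_\star\setminus B(r)$ directly and notes that \Cref{thm:lucemthm} survives that modification. Either fix is acceptable.)
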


\Cref{thm:glmthm} states that after a polynomial burn-in time 
(which scales quite sub-optimally as $\tilde{O}(\dx^4)$
in the dimension), the excess risk 
scales as the minimax rate $\opnorm{H}^2 \dx^2/T$ times a logarithmic factor
of various problem constants.
To the best of our knowledge, this is the sharpest excess risk bound for this problem
in the literature and is nearly minimax optimal.
As noted previously, the logarithmic factor enters via the chaining inequality \eqref{eq:theformofthebound} when bounding the martingale offset complexity. 
We leave to future work
a more refined analysis that removes this logarithmic dependence,
and also improves the polynomial dependence of $T_0$ on $\dx$.
The extra $\dx^2$ factor in \eqref{eq:glm_T0_burnin} 
over the LDS burn-in time \eqref{eq:LDS_T0_burnin} comes from our 
analysis of the trajectory hypercontractivity constant
for this problem, and should be removable.


Much like in \eqref{eq:LDS_parameter_rate}, we can use the 
link function expansiveness in \Cref{assumption:glm} to 
convert the excess risk bound
\eqref{eq:GLM_final_rate} to a parameter recovery rate:
\begin{align}
    \E\norm{\hat{A} - A_\star}_F^2 \leq \tilde{O}(1) \frac{\opnorm{H}^2 \dx^2}{\zeta^2 T \lambda_{\min}(\bar{\Gamma}_T)}, \label{eq:GLM_parameter_rate}
\end{align}
where again $\bar{\Gamma}_T \triangleq \frac{1}{T} \sum_{t=0}^{T-1} \E[X_tX_t^\T]$ is the average covariance matrix of the GLM process \eqref{eq:GLM_dynamics}.
Note that the one-step controllability assumption in \Cref{assumption:glm} ensures
that the covariance matrix $\E[ X_tX_t^\T ] \succcurlyeq HH^\T$ is invertible for every $t \in \N$.
A detailed comparison of the excess risk rate \eqref{eq:GLM_final_rate} and parameter recovery rate \eqref{eq:GLM_parameter_rate}
with existing bounds in the literature is given in
\Cref{sec:appendix:glm:comparison}.

Let us briefly discuss the proof of \Cref{thm:glmthm}.
As in the LDS case, we use the truncation argument
described in \Cref{sec:proofs:unbounded} that allows us 
apply \Cref{thm:themainthm} while still
using bounds on the dependency matrix coefficients of the original unbounded process 
\eqref{eq:GLM_dynamics}.
%
%
However, an additional complication arises compared to the LDS case, as the covariates are not jointly Gaussian due to the presence
of the link function. 
While at this point we could appeal to ergodic theory,
we instead develop an alternative approach that still allows us to
compute explicit constants.
Building on the work of \citet{chae2020wasserstein}, we use the smoothness of the Gaussian transition kernel to
upper bound the TV distance by the $1$-Wasserstein distance.
This argument is where our analysis crucially relies on 
the non-degeneracy of $H$ in \Cref{assumption:glm}, 
as the transition kernel corresponding
to multiple steps of \eqref{eq:GLM_dynamics} is no longer Gaussian.
The $1$-Wasserstein distance is then controlled by using the \emph{incremental stability} \citep{tran2016incremental} properties of
the deterministic dynamics $x_+ = \sigma(A_\star x)$.
Since this technique only depends on the GLM dynamics through  incremental stability, it is of independent interest as it
applies much more broadly.

\section{Numerical experiments}
\label{sec:experiments}

We conduct a simple numerical simulation to illustrate
the phenomenon of learning with little mixing empirically.
We consider system identification of the GLM dynamics described
in \Cref{sec:results:glm}.

We first describe how the covariate process $\{X_t\}$ is generated.
We set $\dx=25$.
The true dynamic matrix $A_\star$ is
randomly sampled from the distribution described in 
Section~7 of \citet{kowshik2021near}.
Specifically, $A_\star = U \Sigma U^\T$, where 
$U$ is uniform from the Haar measure on the space of 
orthonormal $\dx \times \dx$ matrices,
and $\Sigma = \diag( \underbrace{\rho, \dots, \rho}_{\floor{\dx/2} \textrm{ times}}, \rho/3, \dots, \rho/3 )$.
We vary $\rho \in \{0.9, 0.99\}$ for this experiment.
Next, we set the activation function 
$\sigma$ to be the LeakyReLU with slope $0.5$, i.e.,
$\sigma(x) = 0.5 x \ind\{ x < 0 \} + x \ind\{x \geq 0\}$.
Observe that these dynamics satisfy
\Cref{assumption:glm} with $\zeta = 0.5$, and
where the Lyapunov matrix $P$ can be taken
to be identity, since $\opnorm{A_\star} = \rho < 1$.
Next, we generate $X_0 \sim N(0, I_{\dx})$,
and $X_{t+1} = \sigma(A_\star X_t) + W_t$ with $W_t \sim N(0, 0.01 I_{\dx})$ and $W_t \perp W_{t'}$ for $t \neq t'$.
From this trajectory, the labelled dataset is 
$\{(X_t, Y_t)\}_{t=0}^{T-1}$ with
$Y_t = X_{t+1}$.

To study the effects of the correlation
from a single trajectory $\{X_t\}$
for learning, we consider the following 
\emph{independent baseline}
motivated by the Ind-Seq-LS baseline 
described in \citet{tu2022learning}.
Let $\mu_t$ denote the marginal distribution of $X_t$.
We sample $\bar{X}_t \sim \mu_t$ independently across time,
and sample $\bar{Y_t} \mid \bar{X}_t$ from the conditional distribution
$N(\sigma(A_\star \bar{X}_t), 0.01 I_{\dx})$;
the labelled dataset is $\{(\bar{X}_t, \bar{Y}_t)\}_{t=0}^{T-1}$.
This ensures that the $L^2$ risk of a fixed hypothesis
$f(x) = \sigma(Ax)$ is the same under both the independent
baseline and the single trajectory distribution, so that our 
experiment singles out the effect of learning
from correlated data.
In practice, each $\bar{X}_t$ is sampled from a new
independent rollout up to time $t$.

Given a dataset $\{(X_t,Y_t)\}_{t=0}^{T-1}$, we 
search for the empirical risk minimizer (ERM) of the loss
\begin{align}
    \hat{A} = \argmin_{A \in \R^{\dx \times \dx}} \left\{\frac{1}{T}\sum_{t=0}^{T-1} \norm{\sigma(A X_t) - Y_t}_2^2\right\} \label{eq:experiment_ERM}
\end{align}
by running \texttt{scipy.optimize.minimize} with the 
\texttt{L-BFGS-B} method, using the default linesearch
and termination criteria options.
To calculate the $L^2$ excess risk
$\frac{1}{T}\sum_{t=0}^{T-1} \E\norm{\sigma(\hat{A} X_t) - \sigma(A_\star X_t)}_2^2$ of a hypothesis $\hat{A}$, 
we draw $1000$ new trajectories
and average the excess risk over these trajectories.
The experimental code is implemented with
\texttt{jax}~\citep{jax2018github}, and run using the CPU backend
with \texttt{float64} precision on a single machine.\footnote{Code available at: \href{https://github.com/google-research/google-research/tree/master/learning_with_little_mixing}{https://github.com/google-research/google-research/tree/master/learning{\_}with{\_}little{\_}mixing}}

\begin{figure}[htb]
    \centering
    \includegraphics[width=0.85\columnwidth]{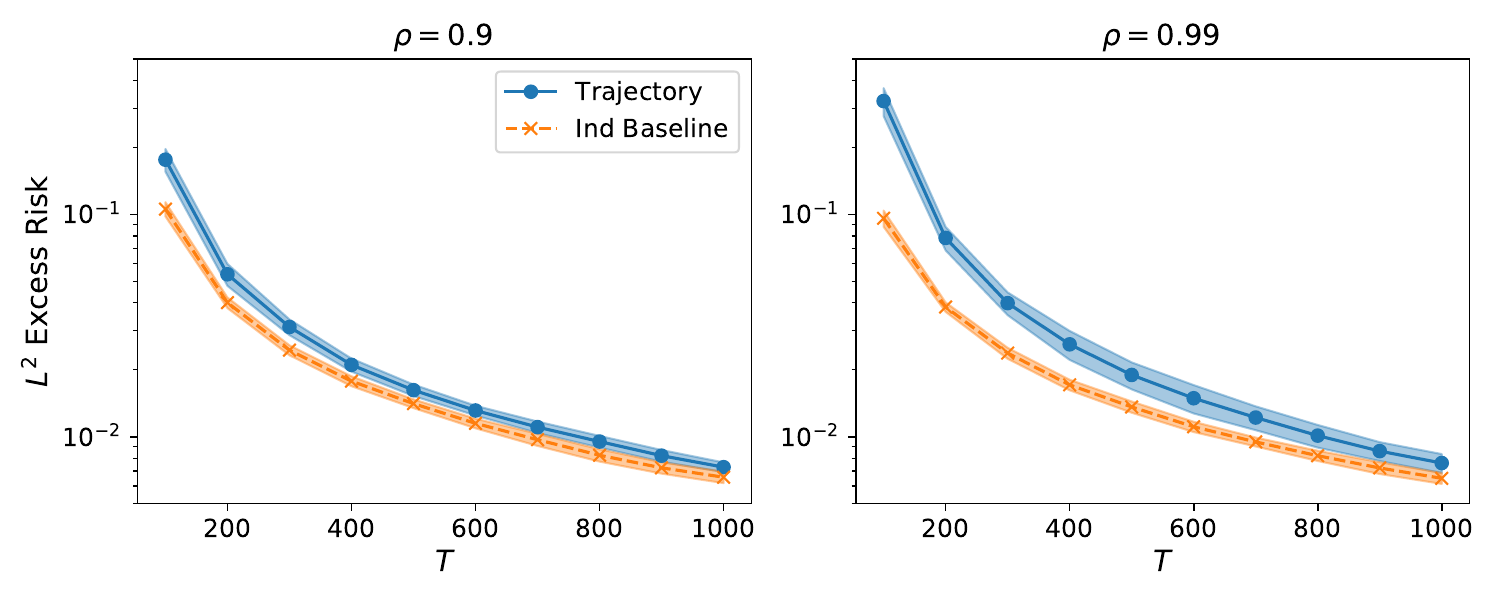}
    \caption{$L^2$ excess risk as a function of dataset length $T$ of the empirical risk minimizer on the single trajectory (Trajectory) dataset versus the independent baseline (Ind Baseline) dataset.}
    \label{fig:glm_little_mixing_risks}
\end{figure}
\begin{figure}[htb]
    \centering
    \includegraphics[width=0.85\columnwidth]{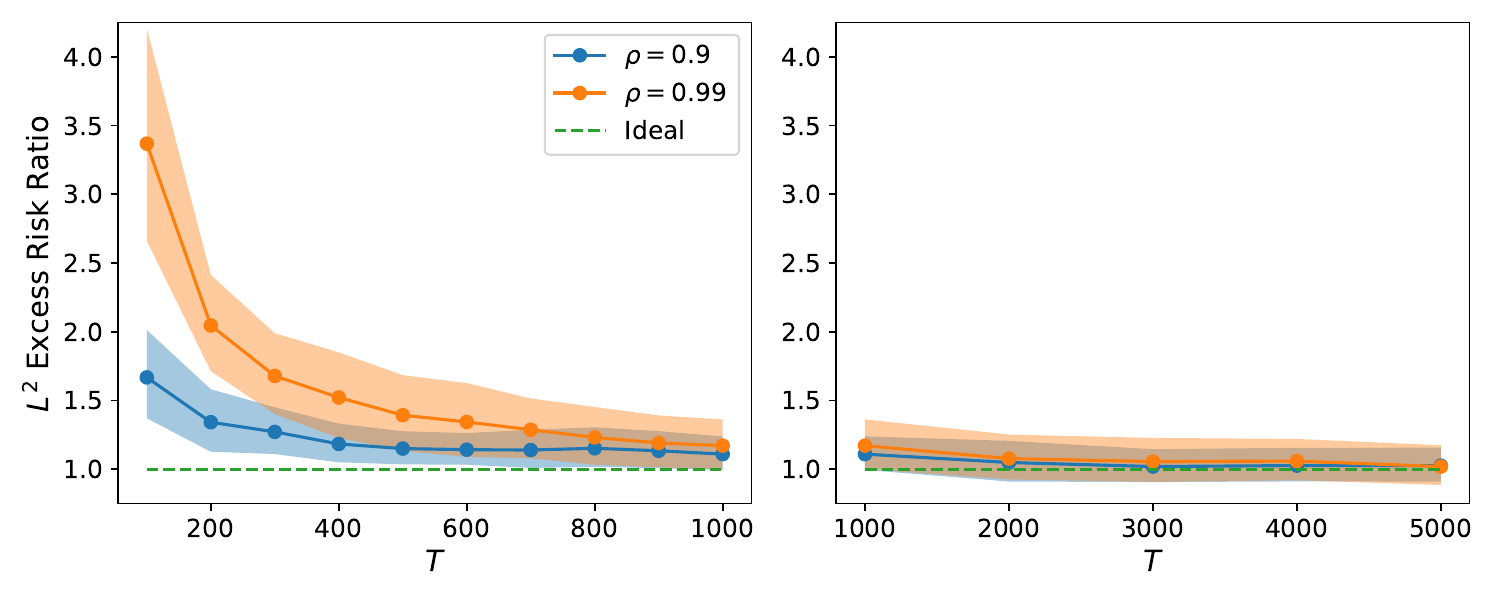}
    \caption{Ratio of the $L^2$ excess risk as a function of dataset
    length $T$ of the empirical risk minimizer (ERM) on the single trajectory dataset over the ERM on the independent baseline dataset.
    The dashed green curve (Ideal) marks a ratio of exactly one.}
    \label{fig:glm_little_mixing_ratio}
\end{figure}

The results of this experiment are shown in
\Cref{fig:glm_little_mixing_risks} and \Cref{fig:glm_little_mixing_ratio}. 
In \Cref{fig:glm_little_mixing_risks}, we plot the $L^2$ excess risk
of the ERM $\hat{A}$ from \eqref{eq:experiment_ERM}
on both the trajectory dataset $\{(X_t, Y_t)\}$ and
the independent baseline dataset $\{(\bar{X}_t, \bar{Y}_t)\}$,
varying $\rho \in \{0.9, 0.99\}$. The shaded region indicates
$\pm$ one standard deviation from the mean over $20$ training datasets.
In \Cref{fig:glm_little_mixing_ratio}, we plot the $L^2$
excess risk \emph{ratio} of the estimator $\hat{A}$
from the single trajectory dataset
over the estimator $\hat{A}$ from the independent baseline
trajectory, again varying $\rho \in \{0.9, 0.99\}$.
Here, the shaded region is constructed using $\pm$ one standard deviation
of the numerator and denominator taken over $20$ training datasets.

\Cref{fig:glm_little_mixing_risks} and \Cref{fig:glm_little_mixing_ratio} illustrate two different trends, which are both predicted by our theory. 
First, for a fixed $\rho$, as
$T$ increases, the $L^2$ excess risk of the ERM on the
trajectory dataset approaches that of the ERM on the independent
dataset. This illustrates the learning with little mixing
phenomenon, where despite correlations in the covariates $\{X_t\}$
of the trajectory dataset across time, the statistical behavior
of the ERM approaches that of the ERM on the independent
dataset where the covariates $\{\bar{X}_t\}$ are independent across 
time.
Next, for a fixed $T$, the burn-in time increases as $\rho$ approaches one. That is, systems that mix slower have longer burn-in times. 

\section{Conclusion}
\label{sec:conclusion}
We developed a framework for showing
when the mixing-time of the covariates plays a relatively small role in the rate of convergence of the least-squares estimator. 
In many situations, after a finite burn-in time, this learning procedure exhibits an excess risk that scales as if all the samples were independent (\Cref{thm:themainthm}). 
As a byproduct of our framework, by instantiating our results to system
identification for dynamics with generalized linear model transitions (\Cref{sec:results:glm}), 
we derived the sharpest known excess risk rate for this problem; our rates are nearly minimax optimal after only a polynomial burn-in time. 


To arrive at \Cref{thm:themainthm}, we leveraged insights from \citet{mendelson2014learning} via a one-sided concentration inequality (\Cref{thm:lucemthm}). As mentioned in \Cref{sec:prels}, 
hypercontractivity is closely related to the small-ball condition~\citep{mendelson2014learning}. Such conditions can be understood as quantitative identifiability conditions by providing control of the ``version space'' (cf.~\citet{mendelson2014learning}). Given that identifiability conditions also play a key role in linear system identification---a setting in which a similar phenomenon as studied here had already been reported---this suggests an interesting direction for future work: are such conditions actually necessary for learning with little mixing?


\section*{Acknowledgements}
We thank Dheeraj Nagaraj for helpful discussions regarding the results in \citet{nagaraj2020least}, and Abhishek Roy for clarifying the results in \citet{roy2021dependent}.
We also thank Mahdi Soltanolkotabi for an informative discussion about the
computational aspects of empirical risk minimization for generalized linear models
under the square loss.

\bibliographystyle{unsrtnat}
\bibliography{main.bib}

\newpage
\appendix

\tableofcontents

\newpage

\section{Proof of Theorem~\ref{thm:themainthm}}

\subsection{Proof of \Cref{lem:lowertailmix}}

Let us abbreviate $\Gamma = \Gamma_{\mathsf{dep}}(\Px)$.
A Chernoff argument yields
\begin{align*}
    &\mathbf{P} \left( \sum_{t=0}^{T-1} g(X_t)  \leq \frac{1}{2}\sum_{t=0}^{T-1} \E g(X_t)   \right)\\
    &\leq \inf_{\lambda \geq 0} \E\exp\left(  \frac{\lambda}{2}\sum_{t=0}^{T-1} \E g(X_t)-\lambda\sum_{t=0}^{T-1} g(X_t)   \right)&&\textnormal{(Chernoff)}\\
    &\leq  \inf_{\lambda \geq 0} \exp\left(  -\frac{\lambda}{2}\sum_{t=0}^{T-1} \E g(X_t) +\frac{\lambda^2\opnorm{\Gamma}^2 \sum_{t=0}^{T-1} \E g^2(X_t) }{2} \right) && (\textnormal{\Cref{prop:samsonexpineq}})\\
    &=\exp \left( -\frac{\left(\sum_{t=0}^{T-1}\E g(X_t) \right)^2}{8 \opnorm{\Gamma}^2\sum_{t=0}^{T-1} \E g^2(X_t)} \right) && \left(\lambda = \frac{ \sum_{t=0}^{T-1} \E g(X_t)}{2 \opnorm{\Gamma}^2 \sum_{t=0}^{T-1} \E g^2(X_t) } \right)\\
    &\leq \exp \left( -\frac{T}{8 C \opnorm{\Gamma}^2} \times \left(\frac{1}{T}\sum_{t=0}^{T-1}\E g(X_t)\right)^{2-\alpha}  \right), && \textnormal{(Using \eqref{eq:contractivity_g})}
\end{align*}
as per requirement. 
\hfill $\blacksquare$

\subsection{Proof of Theorem~\ref{thm:lucemthm}}

The hypothesis that $\mathscr{F}_\star$ is star-shaped allows us to rescale, so it suffices to prove the result for $f \in \partial B(r)$. Namely, if $f\in \mathscr{F}_\star \setminus B(r)$ then $\frac{r}{\|f\|_{L^2}}<1$ and so $r f / \|f\|_{L^2} \in \partial B(r)$ by the star-shaped hypothesis.
%
%
Recall that $\scrF_r \subset \partial B(r)$ is a $r/\sqrt{8}$-net of $\partial B(r)$ in the supremum norm.
Hence,
by construction and parallellogram, for every $f \in \partial B(r)$, there exists $f_i \in \scrF_r$ such that:
\begin{align}
\label{eq:discretizepartial}
  \frac{1}{T} \sum_{t=0}^{T-1} \| f(X_t) \|_2^2 \geq \frac{1}{2T} \sum_{t=0}^{T-1} \| f_i(X_t) \|_2^2 - \frac{r^2}{8}.
\end{align}
%
Define the event:
\begin{align*}
  \mathcal{E} \triangleq \bigcup_{f \in \scrF_r} \left\{  \frac{1}{T} \sum_{t=0}^{T-1} \| f(X_t) \|_2^2 \leq   \E \frac{1}{2T} \sum_{t=0}^{T-1} \| f(X_t) \|_2^2  \right\}.
\end{align*}
Invoking Lemma~\ref{lem:lowertailmix} with $g(x) = \| f(x) \|_2^2$ for $f \in \scrF_r$, 
by a union bound it clear that
\begin{align}
\label{eq:mixinglowerisometryunionbound}
    \mathbf{P}(\mathcal{E}) \leq |\scrF_r| \exp \left( \frac{- Tr^{4-2\alpha} }{8C \opnorm{\Gammadep(\Px)}^2} \right).
\end{align}
Fix now arbitrary $f \in \partial B(r)$. On the complement $\mathcal{E}^c$ it is true that
\begin{equation*}
    \begin{aligned}
     \frac{1}{T} \sum_{t=0}^{T-1} \| f(X_t) \|_2^2 &\geq \frac{1}{2T} \sum_{t=0}^{T-1} \| f_i(x_t) \|_2^2 - \frac{r^2}{8} && \textnormal{(we may find such an $f_i$ by observation (\ref{eq:discretizepartial}))}\\
     & \geq  \E \frac{1}{4T} \sum_{t=0}^{T-1} \| f_i(X_t) \|_2^2 - \frac{r^2}{8} && \textnormal{(by definition of $ \mathcal{E}$)}\\
     & = \frac{r^2}{4}- \frac{r^2}{8} && (f_i \in \partial B(r)) \\
     &\geq \frac{r^2}{8}.
    \end{aligned}
\end{equation*}
Since $f \in \partial B(r)$ was arbitrary, by virtue of the estimate (\ref{eq:mixinglowerisometryunionbound}) we have that:
\begin{align*}
     \mathbf{P} \left( \exists f \in \mathscr{F}_\star\cap \partial B(r) \:\Bigg \vert\: \frac{1}{T} \sum_{t=0}^{T-1} \| f(X_t) \|_2^2  \leq \frac{r^2}{8}\right) \leq|\mathscr{F}_r |\exp \left( \frac{-T r^{4-2\alpha} }{8C \opnorm{ \Gamma_{\mathsf{dep}}(\mathsf{P}_X)}^2 } \right).
\end{align*}
The result follows by rescaling.
\hfill $\blacksquare$

\subsection{Proof of Theorem~\ref{thm:themainthm}}

Define the event:
\begin{align*}
    \mathcal{B}_r\triangleq  \mathbf{P} \left( \exists f \in \mathscr{F}_\star\setminus B(r) \:\Bigg \vert\: \frac{1}{T} \sum_{t=0}^{T-1} \| f(X_t) \|_2^2 \leq  \E  \frac{1}{8T} \sum_{t=0}^{T-1} \| f(X_t) \|_2^2 \right)
\end{align*}
By definition, on the complement of $\mathcal{B}_r$ we have that:
\begin{align}
\label{eq:lowerisometryused}
    \|\widehat f - f_\star\|_{L^2}^2 \leq r^2 \vee \frac{8}{T}\sum_{t=0}^{T-1} \|\widehat f(X_t)-f_\star(X_t)\|_2^2 \leq r^2 + \frac{8}{T}\sum_{t=0}^{T-1} \|\widehat f(X_t)-f_\star(X_t)\|_2^2  .
\end{align}
Therefore, we can decompose $\E \| \widehat f - f_\star \|_{L^2}^2$ as
follows:
\begin{equation}
\label{eq:decompforthm}
\begin{aligned}
    \E \|\widehat f - f_\star\|_{L^2}^2 &= \E \mathbf{1}_{\mathcal{B}_r}\|\widehat f - f_\star\|_{L^2}^2+\E \mathbf{1}_{\mathcal{B}_r^c}\|\widehat f - f_\star\|_{L^2}^2\\
    &\leq B^2 \mathbf{P}(\mathcal{B}_r) +r^2+8 \E \left[\frac{1}{T}\sum_{t=0}^{T-1} \|\widehat f(X_t)-f_\star(X_t)\|_2^2\right]. && (\textnormal{$B$-bdd \& ineq. (\ref{eq:lowerisometryused})})
\end{aligned}
\end{equation}
\Cref{thm:lucemthm} informs us that:
\begin{align}
\label{eq:crudebd}
    \mathbf{P}(\mathcal{B}_r) \leq  |\mathscr{F}_r | \exp \left( \frac{-T r^{4-2\alpha} }{8C \opnorm{\Gammadep(\Px)}^2 } \right).
\end{align}
On the other hand, we have by the basic inequality (as in \cite{liang2015learning}): 
\begin{align}
\label{eq:basicineqused}
    \frac{1}{T}\sum_{t=0}^{T-1} \|\widehat f(X_t)-f_\star(X_t)\|_2^2 \leq \frac{1}{T}\sup_{f\in \mathscr{F}_\star} \sum_{t=0}^{T-1} 4\langle W_t, f(X_t)\rangle - \|f(X_t)\|_2^2
\end{align}
Combining inequalities (\ref{eq:decompforthm}), (\ref{eq:crudebd}) and (\ref{eq:basicineqused}) we conclude:
\begin{align*}
    \E \|\widehat f - f_\star\|_{L^2}^2 &\leq 8\E\left[ \sup_{f\in \mathscr{F}_\star} \frac{1}{T}\sum_{t=0}^{T-1}4 \langle W_t, f(X_t)\rangle - \|f(X_t)\|_2^2\right]\\
    &\qquad + r^2 + B^2|\mathscr{F}_r |\exp \left( \frac{-T r^{4-2\alpha} }{8C \opnorm{\Gammadep(\Px)}^2 } \right),
\end{align*}
as per requirement.
\hfill$\blacksquare$


\section{Proofs for corollaries in \Cref{sec:results}}

\subsection{Proof of \Cref{corr:Cconst_v2}}
We set $r^2 = \frac{1}{T^{\frac{2}{2+q}+\gamma}}$.
We first 
use \citet[Exercise 4.2.10]{vershynin2018} 
followed by \eqref{eq:nonparametric_simple} to bound:
\begin{align*}
    \log\calN_\infty(\partial B(r), r/\sqrt{8}) \leq \log\calN_\infty(\scrF_\star, r/(2\sqrt{8})) \leq p\left(\frac{2\sqrt{8}}{r}\right)^q.
\end{align*}
Therefore:
\begin{align*}
    B^2 \calN_{\infty}(\partial B(r), r/\sqrt{8}) \exp\left( \frac{-T}{8C \opnorm{\Gammadep(\Px)}^2} \right) \leq B^2 \exp\left( 32p T^{\frac{q}{2+q} + \frac{q\gamma}{2}} - \frac{T}{8C \opnorm{\Gammadep(\Px)}^2} \right).
\end{align*}
We want to solve for $T$ such that:
\begin{align*}
    B^2 \exp\left( 32p T^{\frac{q}{2+q} + \frac{q\gamma}{2}} - \frac{T}{8C \opnorm{\Gammadep(\Px)}^2} \right) \leq \frac{1}{T^{\frac{2}{2+q}+\gamma}}.
\end{align*}
To do this, we first require that:
\begin{align*}
    32p T^{\frac{q}{2+q} + \frac{q\gamma}{2}} - \frac{T}{8C \opnorm{\Gammadep(\Px)}^2} \leq - T^{\frac{q}{2+q} + \frac{q\gamma}{2}} &\Longleftrightarrow T^{1 - \left( \frac{q}{2+q} + \frac{q\gamma}{2}\right)} \geq 8(32p+1) C\opnorm{\Gammadep(\Px)}^2 \\
    &\Longleftrightarrow T \geq \left[8 (32p+1)C \opnorm{\Gammadep(\Px)}^2 \right]^{\frac{1}{1-\frac{q}{2}\left(\frac{2}{2+q} + \gamma\right)}}.
\end{align*}
Now, with this requirement, we are left with the sufficient condition:
\begin{align*}
    B^2 \exp\left( - T^{\frac{q}{2+q} + \frac{q\gamma}{2}} \right) \leq \frac{1}{T^{\frac{2}{2+q}+\gamma}} &\Longleftrightarrow T^{\frac{q}{2+q} + \frac{q\gamma}{2}} \geq \log(B^2 T^{\frac{2}{2+q}+\gamma}).
\end{align*}
It suffices to require that:
\begin{align*}
    &T^{\frac{q}{2+q} + \frac{q\gamma}{2}} \geq \log{B} \Longleftrightarrow T \geq (\log{B})^{\frac{1}{\frac{q}{2}\left(\frac{2}{2+q} + \gamma\right)}}, \\
    &T^{\frac{q}{2+q} + \frac{q\gamma}{2}} \geq \log(T^{\frac{2}{2+q}+\gamma}) = \frac{2}{q} \log( T^{\frac{q}{2+q} + \frac{q\gamma}{2}} ).
\end{align*}
By \citet[Lemma A.4]{simchowitz2018learning}, the bottom inequality
holds when:
\begin{align*}
    T^{\frac{q}{2+q} + \frac{q\gamma}{2}} \geq \frac{4}{q} \log\left(\frac{8}{q}\right).
\end{align*}
The claim now follows from \Cref{thm:themainthm}.
\hfill$\blacksquare$

\subsection{Proof of \Cref{corr:gammagrowandeps}}
We set $r^2=1/T^{1+\gamma}$.
By the given assumptions,
we can construct a $r/\sqrt{8}$-net $\scrF_r$ of $\partial B(r)$
in the $\norm{\cdot}_\infty$-norm
that (a) satisfies
\begin{align*}
    |\scrF_r| \leq p \log^q\left(\frac{\sqrt{8}}{r}\right),
\end{align*}
and (b) satisfies the trajectory
$(C(r/\sqrt{8}), \alpha)$-hypercontractivity condition. 
Recalling the bounds
$\opnorm{\Gammadep(\Px)}^2 \leq T^{b_1}$
and $C(r) \leq (1/r)^{b_2}$, we have:
\begin{align*}
    B^2 |\scrF_r| \exp\left( \frac{-Tr^{4-2\alpha}}{8C(r/\sqrt{8})\opnorm{\Gammadep(\Px)}^2} \right) &\leq B^2 \exp\left\{ p \log^q\left(\frac{\sqrt{8}}{r}\right) - \frac{T^{1-b_1} r^{4-2\alpha+b_2}}{8^{1+b_2/2} }\right\} \\
    &= B^2 \exp\left\{ p \log^q\left(\sqrt{8} T^{\frac{1+\gamma}{2}}\right) - \frac{T^{1-b_1 - \frac{(1+\gamma)(4-2\alpha+b_2)}{2}}}{8^{1+b_2/2} }\right\} \\
    &= B^2 \exp\left\{ p \log^q\left(\sqrt{8} T^{\frac{1+\gamma}{2}}\right) - \frac{T^{\psi}}{8^{1+b_2/2} }\right\} \\
    &\leq B^2 \exp\left\{ p \log^q\left(\sqrt{8} T^{\frac{1+\gamma}{2}}\right) - \frac{T^{\psi}}{64 }\right\}. 
\end{align*}
Above, the last inequality holds since
$b_2 < 2$.
Now, we choose $T$ large enough so that:
\begin{align*}
    p \log^q\left(\sqrt{8} T^{\frac{1+\gamma}{2}}\right) - \frac{T^{\psi}}{64} \leq - \frac{T^{\psi}}{128} &\Longleftrightarrow T^\psi \geq 128 p \log^q\left(\sqrt{8} T^{\frac{1+\gamma}{2}}\right) \\
    &\Longleftrightarrow T^{\psi/q} \geq (128 p)^{1/q} \left(\log(\sqrt{8}) + \frac{1+\gamma}{2\psi/q} \log(T^{\psi/q}) \right).
\end{align*}
Thus, it suffices to require that:
\begin{align*}
    T^{\psi/q} \geq (128p)^{1/q} \log{8}, \:\: T^{\psi/q} \geq (128p)^{1/q} \frac{1+\gamma}{\psi/q} \log(T^{\psi/q}).
\end{align*}
By \citet[Lemma A.4]{simchowitz2018learning}, the right hand
side inequality holds when:
\begin{align*}
    T^{\psi/q} \geq 2(128p)^{1/q} \frac{1+\gamma}{\psi/q}
\log\left( 4 (128p)^{1/q} \frac{1+\gamma}{\psi/q} \right).
\end{align*}
We finish the proof by finding $T$ such that:
\begin{align*}
    B^2 \exp\left( \frac{-T^{\psi}}{128} \right) \leq \frac{1}{T^{1+\gamma}} &\Longleftrightarrow T^{\psi} \geq 128 \log(B^2 T^{1+\gamma}) \\
    &\Longleftrightarrow T^\psi \geq 256 \log{B} + 128 \frac{(1+\gamma)}{\psi} \log(T^{\psi}).
\end{align*}
Thus, it suffices to require that:
\begin{align*}
    T^\psi \geq 512 \log{B}, \:\: T^\psi \geq 256\frac{(1+\gamma)}{\psi} \log(T^{\psi}).
\end{align*}
Another application of
\citet[Lemma A.4]{simchowitz2018learning} yields that the latter inequality holds if:
\begin{align*}
    T^\psi \geq 512 \frac{1+\gamma}{\psi} \log\left( 1024 \frac{1+\gamma}{\psi}\right).
\end{align*}
Combining all our requirements on $T$, we require that
$T \geq \max\{T_1, T_2\}$, with:
\begin{align*}
    T_1 &\triangleq \max\left\{(128 p)^{1/\psi} (\log{8})^{q/\psi}, (128 p)^{1/\psi} \left[  \frac{4q}{\psi} \log\left( (128p)^{1/q} \frac{8q}{\psi}\right) \right]^{q/\psi} \right\}, \\
    T_2 &\triangleq \max\left\{(512 \log{B})^{1/\psi}, \left[ \frac{1024}{\psi} \log\left(\frac{2056}{\psi}\right)\right]^{1/\psi}\right\}.
\end{align*}
\hfill $\blacksquare$

\section{Proofs for Section~\ref{sec:examples}}

\subsection{Proof of Proposition~\ref{prop:mcmoments}}
For notational brevity, we make the identification of the atoms $\{\psi_1, \dots, \psi_K\}$ with the integers $\{1, \dots, K\}$.
Fix a function $f : \{1, \dots, K\} \to \R^{\dy}$. 
For any time indices $t_1, t_2 \in \{0, \dots, T-1\}$:
\begin{align*}
    \E\norm{f(X_{t_1})}_2^2 \E\norm{f(X_{t_2})}_2^2 &= \left( \sum_{k=1}^{K} \norm{f(k)}_2^2 \mu_{t_1}(k) \right)\left( \sum_{k=1}^{K} \norm{f(k)}_2^2 \mu_{t_2}(k) \right) \\
    &= \sum_{k_1=1}^{K} \sum_{k_2=1}^{K} \norm{f(k_1)}_2^2 \norm{f(k_2)}_2^2 \mu_{t_1}(k_1) \mu_{t_2}(k_2) 
    \geq \sum_{k_1=1}^{K} \norm{f(k_1)}_2^4 \mu_{t_1}(k_1) \mu_{t_2}(k_1) \\
    &\geq \mumin \sum_{k_1=1}^{K} \norm{f(k_1)}_2^4 \mu_{t_1}(k_1) 
    = \mumin \E\norm{f(X_{t_1})}_2^4.
\end{align*}
Therefore:
\begin{align*}
    \left( \frac{1}{T} \sum_{t=0}^{T-1} \E \norm{f(X_t)}_2^2 \right)^2 &= \frac{1}{T^2} \sum_{t_1=0}^{T-1} \sum_{t_2=0}^{T-1} \E \norm{f(X_{t_1})}^2 \E\norm{f(X_{t_2})}_2^2 \\
    &\geq \frac{\mumin}{T^2} \sum_{t_1=0}^{T-1} \sum_{t_2=0}^{T-1} \E \norm{f(X_{t_1})}_2^4 
    = \frac{\mumin}{T} \sum_{t_1=0}^{T-1} \E\norm{f(X_{t_1})}_2^4.
\end{align*}
The claim now follows since we assume $\mumin > 0$.\hfill $\blacksquare$

\subsection{Proof of Proposition~\ref{prop:momeqv}}
Recall that $\E\left[ \frac{1}{T}\sum_{t=0}^{T-1} \| f(X_t)\|^p_2\right] = \|f\|_{L^p}^p$
for $p \geq 1$. We estimate the left hand side of inequality (\ref{cond:hyperconttrajrelax}) as follows:
\begin{align*}
     \E\left[ \frac{1}{T}\sum_{t=0}^{T-1} \| f(X_t)\|^4_2\right]&= \E\left[ \frac{1}{T}\sum_{t=0}^{T-1} \| f(X_t)\|^{2-\e}_2\| f(X_t)\|^{2+\e}_2\right]\\
     &\leq B^{2-\e} \E\left[ \frac{1}{T}\sum_{t=0}^{T-1}\| f(X_t)\|^{2+\e}_2\right] &&(\textnormal{$B$-bounded})\\
     &=B^{2-\e}\|f\|_{L^{2+\e}}^{2+\e}\\
     &\leq B^{2-\e}(c\|f\|_{L^{2}})^{2+\e}&&\textnormal{($L^2-L^{2+\e}$-equivalence)}\\
     &=B^{2-\e}c^{2+\e}\|f\|_{L^{2}}^{2+\e}\\
     &=B^{2-\e}c^{2+\e} \left( \E\left[ \frac{1}{T}\sum_{t=0}^{T-1} \| f(X_t)\|^{2}_2\right]\right)^{1+\e/2}.
\end{align*}
The result now follows.
\hfill$\blacksquare$

\subsection{Proof of \Cref{prop:eqvfromstat_v2}}
We first state an auxiliary proposition.
\begin{proposition}
\label{prop:chisq_bound}
Let $\mu$ and $\nu$ be distributions satisfying
$\mu \ll \nu$.
Let $g$ be any measurable function such that
$\E_{\nu} g^2 < \infty$. We have:
\begin{align*}
    \E_{\mu} g - \E_{\nu} g \leq \sqrt{\E_{\nu} g^2} \sqrt{ \chi^2(\mu, \nu) }.
\end{align*}
\end{proposition}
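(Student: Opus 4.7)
The plan is to rewrite the left hand side as an integral against $\nu$ using the Radon-Nikodym derivative, and then apply Cauchy--Schwarz. Since $\mu \ll \nu$, the derivative $d\mu/d\nu$ exists, and
\begin{align*}
    \E_\mu g - \E_\nu g = \E_\nu\!\left[ g \cdot \frac{d\mu}{d\nu}\right] - \E_\nu g = \E_\nu\!\left[ g \left(\frac{d\mu}{d\nu} - 1\right)\right].
\end{align*}
This identity is the key step, and it uses nothing more than the definition of the Radon--Nikodym derivative together with the observation $\E_\nu 1 = 1$.

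Next I would apply the Cauchy--Schwarz inequality (in $L^2(\nu)$) to the two factors $g$ and $d\mu/d\nu - 1$, yielding
\begin{align*}
    \E_\nu\!\left[ g \left(\frac{d\mu}{d\nu} - 1\right)\right] \leq \sqrt{\E_\nu g^2} \cdot \sqrt{ \E_\nu\!\left[ \left(\frac{d\mu}{d\nu} - 1\right)^{\!2}\right]} = \sqrt{\E_\nu g^2} \cdot \sqrt{\chi^2(\mu,\nu)},
\end{align*}
where the final equality is just the definition of $\chi^2(\mu,\nu)$ recalled in the paragraph preceding \Cref{prop:eqvfromstat_v2}. Combining these two displays gives the claim.

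There is essentially no obstacle here: the hypothesis $\E_\nu g^2 < \infty$ guarantees $g \in L^2(\nu)$, and the bound is vacuous (both sides infinite or the inequality trivial) when $\chi^2(\mu,\nu) = \infty$. One minor point worth noting is that when $\chi^2(\mu,\nu) < \infty$, the function $d\mu/d\nu - 1$ automatically lies in $L^2(\nu)$, so Cauchy--Schwarz applies without any additional integrability caveat. This is a standard one-line argument and the proof is entirely self-contained.
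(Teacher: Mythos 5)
Your proposal is correct and follows exactly the paper's argument: rewrite $\E_\mu g - \E_\nu g$ as $\int g\,(\tfrac{d\mu}{d\nu}-1)\,d\nu$ and apply Cauchy--Schwarz, with the second factor equaling $\sqrt{\chi^2(\mu,\nu)}$ by definition. Your remarks on integrability are a minor but harmless addition beyond what the paper states.
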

\begin{proof}
By Cauchy-Schwarz:
\begin{align*}
    \E_\mu g - \E_\nu g = \int g \left(\frac{\rmd\mu}{\rmd\nu} - 1\right)\rmd\nu \leq \sqrt{\int g^2 \,\rmd\nu} \sqrt{ \int \left(\frac{\rmd\mu}{\rmd\nu} - 1\right)^2\rmd\nu} = \sqrt{\E_\nu g^2} \sqrt{ \chi^2(\mu, \nu) }.
\end{align*}
\end{proof}
We can now complete the proof of \Cref{prop:eqvfromstat_v2}.
Fix any $f \in \scrF_\star$.
First, we note that the the condition \eqref{eq:L8_L2_v2} implies:
\begin{align}
    \E_\pi \norm{f}_2^4 \leq (\E_\pi\norm{f}_2^8)^{1/2} \leq (C_{8 \to 2} (\E_\pi\norm{f}_2^2)^4)^{1/2} = \sqrt{C_{8 \to 2}} (\E_\pi \norm{f}_2^2)^2. \label{eq:L4_L2_v2}
\end{align}
Therefore, for any $f \in \scrF_\star$
and any $t \in \N$:
\begin{align}
    \E\norm{f(X_t)}_2^4 &\leq \E_\pi\norm{f}_2^4 + \sqrt{\E_\pi\norm{f}_2^8} \sqrt{\chi^2(\mu_t, \pi)} && \text{using \Cref{prop:chisq_bound}} \nonumber \\
    &\leq (1 + \sqrt{C_{\chi^2}})\sqrt{C_{8\to 2}}(\E_\pi\norm{f}_2^2)^2 &&\text{using \eqref{eq:chisq_ergodicity_v2}, \eqref{eq:L8_L2_v2}, and \eqref{eq:L4_L2_v2}}. \label{eq:L4_L2sq_for_pi}
\end{align}
Now let $f \in \partial B(r)$.
By \citet[Lemma 1]{kuznetsov2017generalization},
since $\norm{f(x)}_2^2 \in [0, B^2]$, we have:
\begin{align}
    \E_\pi\norm{f}_2^2 - \E\norm{f(X_t)}_2^2 \leq B^2 \tvnorm{\mu_t - \pi}. \label{eq:tv_norm_ineq}
\end{align}
Therefore:
\begin{align}
    \E_\pi\norm{f}_2^2 &= \frac{1}{T}\sum_{t=0}^{T-1} (\E_\pi\norm{f}_2^2 - \E\norm{f(X_t)}_2^2) + r^2 &&\text{since } f \in \partial B(r) \nonumber \\
    &\leq \frac{B^2}{T} \sum_{t=0}^{T-1} \tvnorm{\mu_t - \pi} + r^2 &&\text{using \eqref{eq:tv_norm_ineq}} \nonumber \\
    &\leq (1 + C_{\mathsf{TV}} B^2) r^2 &&\text{using \eqref{eq:chisq_ergodicity_v2}}. \label{eq:L2_for_pi_to_rsq}
\end{align}
Combining these inequalities:
\begin{align*}
    \frac{1}{T}\sum_{t=0}^{T-1} \E\norm{f(X_t)}_2^4 &\leq (1 + \sqrt{C_{\chi^2}}) \sqrt{C_{8\to 2}} (\E_\pi \norm{f}_2^2)^2 &&\text{using \eqref{eq:L4_L2sq_for_pi}} \\
    &\leq (1 + \sqrt{C_{\chi^2}}) \sqrt{C_{8\to 2}} (1 + C_{\mathsf{TV}} B^2)^2 r^4 &&\text{using \eqref{eq:L2_for_pi_to_rsq}} \\
    &= (1 + \sqrt{C_{\chi^2}}) \sqrt{C_{8\to 2}} (1 + C_{\mathsf{TV}} B^2)^2 \left( \frac{1}{T}\sum_{t=0}^{T-1} \E\norm{f(X_t)}_2^2\right)^2 &&\text{since } f \in \partial B(r).
\end{align*}
The claim now follows.\hfill$\blacksquare$

\subsection{Proof of Proposition~\ref{prop:infdimex}}






\paragraph{Covering:} We first approximate $\mathscr{P}$ by a finite-dimensional ellipsoid at resolution $\e/4$.
To this end, fix an integer $m \in \N_+$ and define:
\begin{align*}
\mathscr{P}_m =\left\{ f= \sum_{j=1}^m\theta_j \phi_j \,\Bigg|\, \sum_{j=1}^\infty \frac{\theta_j^2}{\mu_j}\leq 1 \right\}.
\end{align*}

Fix now an element $f \in \mathscr{P}$ with coordinates $\theta$. Let $f'$ be the orthogonal projection onto the subspace of the first $m$-many coordinates ($f' \in \mathscr{P}_m$). Then:
\begin{equation}
  \label{eq:truncbound}
  \begin{aligned}
   \|f-f'\|_{\infty}&= \left\|\sum_{j=m+1}^\infty \theta_j \phi_j  \right\|_{\infty} 
   \leq \left\|\underbrace{\sqrt{\sum_{j=m+1}^\infty\frac{\theta_j^2}{\mu_j} }}_{\leq 1}\sqrt{\sum_{j=m+1}^\infty\mu_j \|\phi_j\|_2^2} \right\|_{\infty} && (\textnormal{Cauchy-Schwarz})\\
   &\leq B\sqrt{\sum_{j=m+1}^\infty j^{2q} e^{-2\beta j}}&& (\|\phi_j\|_\infty \leq B j^q ,\mu_j\leq e^{-2\beta j})\\
   &\leq B\sqrt{\sum_{j=m+1}^\infty e^{-\beta j}}&& \left(\textnormal{if } \frac{m}{\log m} \geq \frac{q}{\beta}\right)\\
   & =   B \frac{e^{-\beta m/2}}{\sqrt{e^{\beta}-1}} \leq 2B \frac{e^{-\beta m/2}}{\beta}. && (\sqrt{e^{2x}-1} \geq e^x-1 \geq x, x\geq 0)
  \end{aligned}
\end{equation}
Hence, we can take $m_\e$ to be the smallest integer solution to $m \geq \frac{2}{\beta} \left| \log \left(\frac{8B}{\beta \e}\right) \right|$ to guarantee that for every $f\in \mathscr{P}$ there exists $f' \in \mathscr{P}_m$ at most $\e/4$ removed from $f$, i.e.,\ $\|f-f'\|_{\infty} \leq \e/4$. 

Next, we construct an $\e/4$-covering of the set $\scrP_m$.
Observe now that the set of parameters of $\Theta_m$ defining $\mathscr{P}_m$ satisfies:
\begin{align*}
\Theta_m \triangleq \left\{ \theta \in \mathbb{R}^m \,\Bigg|\,  \sum_{j=1}^m \frac{\theta_j^2}{\mu_j} \leq  1\right\}.
\end{align*}
Using this, we obtain a covering of $(\mathscr{P}_m, \|\cdot\|_{\infty})$ by regarding it as a subset of $\mathbb{R}^m$. More precisely, $\Theta_m$ is the unit ball in the norm  $\norm{\theta}_{\mu} \triangleq \sqrt{\sum_{i=1}^{m} \theta_i^2/\mu_i}, \theta \in \mathbb{R}^{m}$. Hence, by a standard volumetric argument, we need no more than $(1+2/\delta)^m$ points to cover $\Theta_m$ at resolution $\delta$ in $\norm{\cdot}_\mu$. Let now $\delta>0$ and choose $N \in \mathbb{N}_+$ so that $\{\theta^1,\dots,\theta^N\}$ is an optimal $\delta$-covering of $\Theta_m$. We thus obtain the cover $\mathscr{P}_m^N\triangleq \{ (\theta^1)^\top \phi(\cdot), \dots, (\theta^N)^\top \phi(\cdot) \}\subset \mathscr{P}_m$ where $\phi(\cdot) = (\phi_1 (\cdot),\dots,\phi_m(\cdot))$. Let $f'=(\theta')^\top \phi \in \mathscr{P}_m$ be arbitrary. It remains to verify the resolution of $\mathscr{P}_m^N$:
\begin{align*}
    \min_{n\in[N]} \| f'-(\theta^n)^\top \phi\|_\infty &=\min_{n\in [N]}\left\|\sum_{j=1}^m (\theta'_j-\theta^n_j) \phi_j  \right\|_{\infty}\\
    &\leq\min_{n\in [N]} \left\|\sqrt{\sum_{j=1}^m\frac{(\theta'_j-\theta^n_j)^2}{\mu_j} } \sqrt{\sum_{j=1}^m \mu_j \|\phi_j\|_2^2}\right\|_{\infty} && (\textnormal{Cauchy-Schwarz})\\
     &\leq \delta Bm^q&& (\|\phi_j\|_\infty \leq Bm^q \textnormal{ if } j \leq m).
\end{align*}
Hence, if we take $N$ large enough so that $\delta \leq \frac{\e}{4Bm^q}$, $\mathscr{P}_m^N$ is a cover of $\mathscr{P}_m$ at resolution $\e/4$.  Hence, since we may take $m \leq m_\e$:
\begin{align*}
    N \leq \left(1+\frac{8Bm_\e^{q}}{\e}\right)^{m_\e}.
\end{align*}

Now, we can immediately convert the covering $\scrP_m^N$ into an \emph{exterior cover}\footnote{An exterior cover of a set $T$ is a cover where the elements are not restricted to $T$.}
of the set $P$. 
For every $f \in P$, by the approximation property of
$\scrP_m$, there exists an $f' \in \scrP_m$ such that
$\norm{f - f'}_\infty \leq \e/4$.
But since $f' \in \scrP_m$, there exists an $f'' \in \scrP_m^N$
such that $\norm{f' - f''}_\infty \leq \e/4$.
By triangle inequality, $\norm{f - f''}_\infty \leq \e/2$.
Thus, $\scrP_m^N$ forms an exterior cover of
$P$ at resolution $\e/2$. By \citet[Exercise 4.2.9]{vershynin2018},
this means that there exists a (proper) cover of 
$P$ at resolution $\e$ with cardinality bounded by
$N$.

\paragraph{Hypercontractivity:}
We first show that every $f \in \scrP_m$ is hypercontractive.
First, observe by orthogonality that the second moment takes the form:
\begin{align*}
\int \left\|\sum_{i=1}^{m_\e} \theta_i \phi_i \right\|^2_2 \rmd\lambda &= \sum_{i=1}^{m_\e} \theta_i^2 \|\phi_i\|_{L_2(\lambda)}^2 = \norm{\theta}_2^2.
\end{align*}
On the other hand by the eigenfunction growth condition:
\begin{align*}
    \int \left\|\sum_{i=1}^{m_\e} \theta_i \phi_i \right\|^4_2 \rmd\lambda &\leq \int \left( \sum_{i=1}^{m_\e} |\theta_i| \norm{\phi_i}_2 \right)^4 \rmd\lambda \\
    &\leq B^4 m_\e^{4q}\left(\sum_{i=1}^{m_e} |\theta_i| \right)^4 \\
    &\leq B^4 m_\e^{4q+2} \norm{\theta}_2^4 \\
    &= B^4 m_\e^{4q+2} \left(\int \left\| \sum_{i=1}^{m_\e} \theta_i\phi_i \right\|_2^2 \rmd\lambda\right)^2.
\end{align*}
Now, for any $t \in \N$, by a change of measure,
with $f = \sum_{i=1}^{m_\e} \theta_i \phi_i$,
\begin{align*}
    \E_{\mu_t} \left\|f \right\|^4_2 &= \int \left\|f \right\|^4_2 \frac{\rmd\mu_t}{\rmd\lambda} \,\rmd\lambda \leq K \int \left\|f \right\|^4_2 \rmd\lambda 
    \leq K B^4 m_\e^{4q+2}\left(\int \left\| f \right\|_2^2 \rmd\lambda\right)^2.
\end{align*}
Hence, applying the previous inequality and another change of measure:
\begin{align}
    \frac{1}{T}\sum_{t=0}^{T-1} \E_{\mu_t} \norm{f}_2^4 &\leq KB^4 m_\varepsilon^{4q+2}\left(\int \left\| f \right\|_2^2 \rmd\lambda\right)^2 \nonumber \\
    &= KB^4 m_\varepsilon^{4q+2}\left( \frac{1}{T} \int \sum_{t=0}^{T-1} \norm{f}_2^2 \frac{\rmd\lambda}{\rmd\mu_t} \,\rmd\mu_t \right)^2 \nonumber \\
    &\leq K^3 B^4 m_\varepsilon^{4q+2}\left( \frac{1}{T} \sum_{t=0}^{T-1} \E_{\mu_t} \norm{f}_2^2 \right)^2. \label{eq:rkhs_lemma_traj_hyp}
\end{align}

Next, fix a $f \in P$. We will show that $f$ is hypercontractive.
First, recall that $f'$ is the element in $\scrP_m$ satisfying
$\norm{f - f'}_\infty \leq \e/4$.
Hence, we have for every $x$:
\begin{align}
    \norm{f(x)}_2^4 &\leq 8(\norm{f(x)-f'(x)}_2^4 + \norm{f'(x)}_2^4) \leq \frac{\e^4}{32} + 8\norm{f'(x)}_2^4, \label{eq:rkhs_lemma_r4} \\
    \norm{f'(x)}_2^2 &\leq 2(\norm{f(x)-f'(x)}_2^2 + \norm{f(x)}_2^2) \leq \frac{\e^2}{2} + 2\norm{f(x)}_2^2. \label{eq:rkhs_lemma_r2}
\end{align}
We now bound:
\begin{align*}
    \frac{1}{T} \sum_{t=0}^{T-1} \E_{\mu_t} \norm{f}_2^4 &\stackrel{(a)}{\leq} \frac{\e^4}{32} + \frac{1}{T}\sum_{t=0}^{T-1} \E_{\mu_t}\norm{f'}_2^4 
    \stackrel{(b)}{\leq} \frac{\e^4}{32} + K^3 B^4 m_\e^{4q+2}\left(\frac{1}{T} \sum_{t=0}^{T-1} \E_{\mu_t}\norm{f'}_2^2\right)^2 \\
    &\stackrel{(c)}{\leq} \frac{\e^4}{32} + K^3 B^4 m_\e^{4q+2}\left( \frac{\e^2}{2} + \frac{2}{T} \sum_{t=0}^{T-1} \E_{\mu_t}\norm{f}_2^2 \right)^2 \\
    &\stackrel{(d)}{\leq} \left(\frac{1}{32} + \frac{25}{4} K^3 B^4 m_\e^{4q+2} \right) \left( \frac{1}{T} \sum_{t=0}^{T-1} \E_{\mu_t} \norm{f}_2^2\right)^2.
\end{align*}
Above, (a) uses the inequality \eqref{eq:rkhs_lemma_r4},
(b) uses the fact that $f' \in \scrP_m$ and \eqref{eq:rkhs_lemma_traj_hyp},
(c) uses \eqref{eq:rkhs_lemma_r2},
and (d) uses the assumption that $\e \leq \inf_{f \in P} \norm{f}_{L^2(\Px)}$,
which implies that
$\e^2 \leq \frac{1}{T} \sum_{t=0}^{T-1} \E_{\mu_t}\norm{f}_2^2$
and $\e^4 \leq \left( \frac{1}{T} \sum_{t=0}^{T-1} \E_{\mu_t}\norm{f}_2^2\right)^2$.
Since $f \in P$ is arbitrary, the claim follows.
\hfill $\blacksquare$

\section{Basic tools for analyzing the dependency matrix}
\label{sec:basictoolsdepmat}

In this section, we outline some basic tools used
to analyze the dependency matrix $\Gammadep(\Px)$.
We will introduce the following shorthand.
Given a process $\{Z_t\}_{t \geq 0}$ and indices $0 \leq i \leq j \leq k$,
we will write $\sfP_{Z_{j:k}}(\cdot \mid Z_{0:i}=z_{0:i})$
as shorthand for $\sfP_{Z_{j:k}}(\cdot \mid A)$ for $A \in \calZ_{0:i}$,
where we recall that $\calZ_{0:i}$ denotes
the $\sigma$-algebra generated by $Z_{0:i}$.
We will also write $\esssup_{z_{0:i} \in \sfZ_{0:i}}$
as shorthand for $\sup_{A \in \calZ_{0:i}}$.

Before we proceed, we recall the coupling representation of
the total-variation norm:
\begin{align}
    \tvnorm{\mu - \nu} = \inf\{ \Pr(X \neq Y) \mid (X, Y) \text{ is a coupling of } (\mu, \nu) \}. \label{eq:tv_norm_coupling_def}
\end{align}

\begin{proposition}
\label{prop:tv_reduction_markov}
Suppose that $\{Z_t\}_{t \geq 0}$ is a Markov chain. For any integers $0 \leq i \leq j \leq k$:
\begin{align*}
    \tvnorm{\sfP_{Z_{j:k}}(\cdot\mid Z_i=z) - \sfP_{Z_{j:k}}} = \tvnorm{\sfP_{Z_j}(\cdot\mid Z_i=z) - \sfP_{Z_j}}.
\end{align*}
\end{proposition}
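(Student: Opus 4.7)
The plan is to exploit the Markov property to reduce the trajectory total-variation distance to the marginal total-variation distance at time $j$. By the Markov property, conditional on $Z_j$, the future block $Z_{j+1:k}$ is independent of $Z_{0:i}$; in particular, the conditional law $\sfP_{Z_{j+1:k}\mid Z_j = z_j, Z_i = z}$ coincides with $\sfP_{Z_{j+1:k}\mid Z_j = z_j}$. Denote the resulting common Markov kernel by $K(z_j, \cdot)$. Then both measures appearing on the left-hand side factor through $K$: writing $\mu(\cdot) \triangleq \sfP_{Z_j}(\cdot \mid Z_i = z)$ and $\nu(\cdot) \triangleq \sfP_{Z_j}$, we have
\begin{align*}
\sfP_{Z_{j:k}}(\cdot \mid Z_i = z) = \mu \otimes K, \qquad \sfP_{Z_{j:k}} = \nu \otimes K.
\end{align*}
The claim thus reduces to the kernel-invariance identity $\tvnorm{\mu \otimes K - \nu \otimes K} = \tvnorm{\mu - \nu}$, valid for any Markov kernel $K$.

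For the upper bound, I would invoke the coupling representation \eqref{eq:tv_norm_coupling_def}: take a maximal coupling $(U, V)$ of $(\mu, \nu)$, so that $\Pr(U \neq V) = \tvnorm{\mu - \nu}$. On the event $\{U = V\}$, draw a single path $Z_{j+1:k} \sim K(U, \cdot)$ and set it to be the future of both components; on $\{U \neq V\}$, sample the futures independently from $K(U, \cdot)$ and $K(V, \cdot)$. This produces a coupling of $\mu \otimes K$ and $\nu \otimes K$ whose components disagree only when $U \neq V$, giving $\tvnorm{\mu \otimes K - \nu \otimes K} \leq \Pr(U \neq V) = \tvnorm{\mu - \nu}$. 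For the matching lower bound, the projection onto the $Z_j$-coordinate is a measurable map pushing $\mu \otimes K$ forward to $\mu$ and $\nu \otimes K$ forward to $\nu$; since total variation is non-increasing under measurable pushforwards (data processing), $\tvnorm{\mu - \nu} \leq \tvnorm{\mu \otimes K - \nu \otimes K}$.

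Combining the two inequalities gives the equality, and substituting the factorizations back proves the proposition. There is no real obstacle here beyond being careful about defining $K$ as a version of the transition kernel on which the Markov property genuinely holds (which is a standard regular conditional probability argument on Polish spaces, and which we may safely assume given the setting of the paper).
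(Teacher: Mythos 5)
Your proof is correct and follows essentially the same route as the paper's: both arguments use the coupling representation of total variation, construct the trajectory coupling by evolving the two chains identically once they agree at time $j$ (and separately otherwise) for the upper bound, and use the projection onto the $Z_j$-coordinate for the lower bound. The only cosmetic difference is that you package the lower bound as data processing under a pushforward and invoke a maximal coupling rather than taking an infimum over all couplings, neither of which changes the substance.
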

\begin{proof}
Let us first prove the upper bound.
Let $(Z_j, Z'_j)$ be a coupling of
$(\sfP_{Z_j}(\cdot \mid Z_i=z), \sfP_{Z_j})$.
We can construct a coupling
$(\bar{Z}_{j:k}, \bar{Z}'_{j:k})$
of $(\sfP_{Z_{j:k}}(\cdot \mid Z_i=z), \sfP_{Z_{j:k}})$
by first setting $\bar{Z}_{j} = Z_j$,
$\bar{Z}'_j = Z'_j$, and then evolving the chains
onward via the following process.
If $\bar{Z}_{j} = \bar{Z}'_j$, we evolve
$\bar{Z}_{j+1:k}$ onwards according to the dynamics, and copy 
$\bar{Z}'_{j+1:k} = \bar{Z}_{j+1:k}$. Otherwise if $\bar{Z}_{j} \neq \bar{Z}'_j$, then
we evolve both chains separately.
Observe that $\bar{Z}_{j:k} \neq \bar{Z}'_{j:k}$
iff $Z_j \neq Z'_j$.
Hence
$\tvnorm{\sfP_{Z_{j:k}}(\cdot\mid Z_i=x) - \sfP_{Z_{j:k}}} \leq \Pr(Z_j \neq Z'_j)$.
Since the coupling $(Z_j,Z'_j)$ is arbitrary, taking
the infimum over all couplings of 
$(\sfP_{Z_j}(\cdot \mid Z_i=z), \sfP_{Z_j})$ yields the upper bound via \eqref{eq:tv_norm_coupling_def}.

We now turn to the lower bound. Let 
$(Z_{j:k}, Z'_{j:k})$ be a coupling
of $(\sfP_{Z_{j:k}}(\cdot \mid Z_i=z), \sfP_{Z_{j:k}})$.
Since projection $(Z_j, Z'_j)$ is a coupling
for $(\sfP_{Z_j}(\cdot \mid Z_i=z), \sfP_{Z_j})$,
and $Z_j \neq Z'_j$ implies $Z_{j:k} \neq Z'_{j:k}$,
we have again by \eqref{eq:tv_norm_coupling_def}:
\begin{align*}
    \tvnorm{\sfP_{Z_j}(\cdot\mid Z_i=x) - \sfP_{Z_j}} \leq  \Pr(Z_j \neq Z'_j) \leq \Pr(Z_{j:k} \neq Z'_{j:k}).
\end{align*}
Taking the infimum over all couplings
of $(\sfP_{Z_{j:k}}(\cdot \mid Z_i=z), \sfP_{Z_{j:k}})$
yields the lower bound.
\end{proof}

\begin{proposition}
\label{prop:entrywise_opnorm_bound}
Let $M, N$ be two size conforming matrices
with all non-negative entries. Suppose that
$M \leq N$, where the inequality holds elementwise.
Then, $\opnorm{M} \leq \opnorm{N}$.
\end{proposition}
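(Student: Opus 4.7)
The plan is to reduce the operator norm of a non-negative matrix to a supremum over non-negative unit vectors, at which point the entrywise domination $M \leq N$ immediately transfers to the bilinear form. Specifically, for any matrix $A$ with non-negative entries I would first observe that
\begin{align*}
    \opnorm{A} = \sup_{\norm{x}_2 = \norm{y}_2 = 1} y^\T A x = \sup_{\substack{\norm{x}_2 = \norm{y}_2 = 1 \\ x \geq 0,\, y \geq 0}} y^\T A x,
\end{align*}
where the second equality is the key lemma. To establish it I would note that for any $x$, componentwise
$|(Ax)_i| = \bigabs{\sum_j A_{ij} x_j} \leq \sum_j A_{ij} |x_j| = (A|x|)_i$,
since $A_{ij} \geq 0$, where $|x|$ denotes the entrywise absolute value of $x$. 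Squaring and summing gives $\norm{Ax}_2 \leq \norm{A|x|}_2$, so the sup over unit $x$ may be restricted to $x \geq 0$. The same argument applied to $y^\T(Ax)$, using that $Ax \geq 0$ once $x \geq 0$, allows us to further restrict to $y \geq 0$.

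Given that characterization, the rest is one line. For any non-negative unit vectors $x,y$ the elementwise inequality $M \leq N$ together with non-negativity of $x,y$ yields
\begin{align*}
    y^\T M x = \sum_{i,j} y_i M_{ij} x_j \leq \sum_{i,j} y_i N_{ij} x_j = y^\T N x \leq \opnorm{N}.
\end{align*}
Taking the supremum over non-negative unit $x,y$ and invoking the characterization above gives $\opnorm{M} \leq \opnorm{N}$.

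There is no real obstacle here; the only subtlety is being explicit about the reduction to non-negative test vectors, which is where componentwise non-negativity of $M$ and $N$ is used. Everything else is an elementary monotonicity argument, so I would keep the write-up short.
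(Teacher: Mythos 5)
Your proof is correct and rests on the same key observation as the paper's: for a matrix with non-negative entries the operator-norm supremum can be restricted to non-negative test vectors, after which the entrywise domination $M \leq N$ transfers immediately by monotonicity. The only cosmetic difference is that you work with the bilinear form $y^\T A x$ and restrict both test vectors, while the paper works with $\norm{Qv}_2 = \sqrt{\sum_i \ip{q_i}{v}^2}$ and restricts only the right one; this is not a substantive divergence.
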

\begin{proof}
Let $Q$ be a matrix with non-negative entries,
and let $q_i$ denote the rows of $Q$.
The variational form of the operator norm states that $\opnorm{Q} = \sup_{\norm{v}_2 \leq 1} \norm{Q v}_2 = \sup_{\norm{v}_2\leq 1} \sqrt{\sum_{i} \ip{q_i}{v}^2}$.
Since each $q_i$ only has non-negative entries,
the supremum must be attained by a vector $v$ with non-negative entries, otherwise flipping the sign of the negative entries in $v$ would only possibly increase the value of $\norm{Qv}_2$,
and never decrease the value.

Now let $m_i, n_i$ denote the rows of $M,N$, and let $v$ be a vector with non-negative entries.
Since $0 \leq m_i \leq n_i$ (elementwise), it is clear that $\ip{m_i}{v}^2 \leq \ip{n_i}{v}^2$.
Hence the claim follows.
\end{proof}

\begin{proposition}
\label{prop:upper_triangle_nilpotent}
Let $a_1, \dots, a_n \in \R$, and let $M \in \R^{n \times n}$ be the upper
triangular Toeplitz matrix:
\begin{align*}
    M = \begin{bmatrix}
    a_1 & a_2 & a_3 & a_4 & \cdots & a_n\\
    0 & a_1 & a_2 & a_3 & \cdots & a_{n-1} \\
    0 & 0 & a_1 & a_2 & \cdots & a_{n-2} \\
    \vdots & \vdots & \vdots & \vdots & \ddots & \vdots \\
    0 & 0 & 0 & 0 & \vdots & a_1
    \end{bmatrix} .
\end{align*}
We have that:
\begin{align*}
    \opnorm{M} \leq \sum_{i=1}^{n} |a_i|.
\end{align*}
\end{proposition}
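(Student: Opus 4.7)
The plan is to decompose $M$ as a linear combination of powers of the nilpotent shift matrix and then apply the triangle inequality together with the fact that each power of the shift is a (partial) isometry.

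More precisely, let $N \in \R^{n\times n}$ denote the upper-shift matrix, i.e., the matrix with $1$'s on the first superdiagonal and $0$'s elsewhere. Then $N^k$ is the matrix with $1$'s on the $k$-th superdiagonal and $0$'s elsewhere (for $0 \leq k \leq n-1$), and $N^n = 0$. A direct inspection of the Toeplitz structure shows that
\begin{align*}
    M = \sum_{k=0}^{n-1} a_{k+1}\, N^k.
\end{align*}
Applying the triangle inequality for $\opnorm{\cdot}$ yields
\begin{align*}
    \opnorm{M} \leq \sum_{k=0}^{n-1} |a_{k+1}|\, \opnorm{N^k}.
\end{align*}

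The remaining step is to observe that $\opnorm{N^k} \leq 1$ for every $k \in \{0,1,\dots,n-1\}$. This is immediate since $N^k$ has at most one nonzero entry (equal to $1$) in each row and each column, so it acts as a partial isometry: for any $v \in \R^n$, $\norm{N^k v}_2^2 = \sum_{i=1}^{n-k} v_{i+k}^2 \leq \norm{v}_2^2$. Substituting this bound gives $\opnorm{M} \leq \sum_{k=0}^{n-1} |a_{k+1}| = \sum_{i=1}^{n} |a_i|$, which is the desired inequality.

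There is no real obstacle here; the only small subtlety is correctly matching indices in the decomposition $M = \sum_k a_{k+1} N^k$, which is resolved by reading off the $(i,j)$-entry of the right-hand side as $a_{j-i+1}$ for $j \geq i$, matching the Toeplitz pattern in the statement.
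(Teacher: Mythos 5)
Your proof is correct and takes essentially the same approach as the paper: the paper also writes $M$ as a linear combination of superdiagonal shift matrices (each of operator norm at most one) and concludes by the triangle inequality. Your version merely adds the observation that these shifts are powers of a single nilpotent $N$ and spells out the partial-isometry bound explicitly.
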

\begin{proof}
Let $E_i$, for $i=1, \dots, n$,
denote the shift matrix
where $E_i$ has ones along the $(i-1)$-th super diagonal and is zero everywhere else (the zero-th diagonal refers to the main diagonal).
It is not hard to see that
$\opnorm{E_i} \leq 1$ for all $i$, since it simply selects (and shifts) a subset of the coordinates of the input.
With this notation, $M = \sum_{i=1}^{n} a_i E_i$.
The claim now follows by the triangle inequality.
\end{proof}

\begin{proposition}
\label{prop:gamma_dep_bound_markov}
Let $\{Z_t\}_{t\geq 0}$ be a Markov process,
and let $\sfP_Z$ denote the joint distribution of
$\{Z_t\}_{t=0}^{T-1}$.
We have that:
\begin{align*}
    \opnorm{\Gammadep(\sfP_Z)} \leq 1 + \sqrt{2} \sum_{k=1}^{T-1} \max_{t=0, \dots, T-1-k}
    \esssup_{z \in \sfZ_{t}} \sqrt{ \tvnorm{ \sfP_{Z_{t+k}}(\cdot \mid Z_t=z) - \sfP_{Z_{t+k}}}}.
\end{align*}
\end{proposition}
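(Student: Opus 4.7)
The plan is to reduce the matrix $\Gammadep(\sfP_Z)$, entrywise, to an upper-triangular Toeplitz matrix whose operator norm can be controlled via Propositions \ref{prop:entrywise_opnorm_bound} and \ref{prop:upper_triangle_nilpotent}, and then use the Markov property plus Proposition \ref{prop:tv_reduction_markov} to rewrite the entries in the form appearing on the right-hand side.

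First I would handle the supremum over $A \in \calZ_{0:i}$ in \eqref{eq:dep_mat_coeffs}. By the Markov property, $\sfP_{Z_{j:T-1}}(\cdot \mid \calZ_{0:i}) = \sfP_{Z_{j:T-1}}(\cdot \mid Z_i)$, so for any $A \in \calZ_{0:i}$ with $\Pr(A)>0$, the measure $\sfP_{Z_{j:T-1}}(\cdot \mid A)$ is a convex mixture of the measures $\sfP_{Z_{j:T-1}}(\cdot \mid Z_i = z)$. Since total variation is convex in each argument, triangle inequality then yields
\begin{align*}
\tvnorm{\sfP_{Z_{j:T-1}}(\cdot \mid A) - \sfP_{Z_{j:T-1}}} \leq \esssup_{z \in \sfZ_i} \tvnorm{\sfP_{Z_{j:T-1}}(\cdot \mid Z_i = z) - \sfP_{Z_{j:T-1}}}.
\end{align*}
Combining this with Proposition \ref{prop:tv_reduction_markov} (to drop the trailing coordinates $Z_{j+1:T-1}$), I obtain for all $0 \leq i < j \leq T-1$,
\begin{align*}
\Gamma_{ij} \leq \sqrt{2\,\esssup_{z \in \sfZ_i} \tvnorm{\sfP_{Z_j}(\cdot \mid Z_i = z) - \sfP_{Z_j}}}.
\end{align*}

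Next I would define, for $k = 1, \dots, T-1$,
\begin{align*}
a_k \triangleq \sqrt{2\, \max_{t=0,\dots,T-1-k} \esssup_{z \in \sfZ_t} \tvnorm{\sfP_{Z_{t+k}}(\cdot \mid Z_t = z) - \sfP_{Z_{t+k}}}},
\end{align*}
and set $a_0 \triangleq 1$. By construction every $k$-th superdiagonal entry of $\Gammadep(\sfP_Z)$ is bounded above by $a_k$ (and the diagonal entries equal $a_0 = 1$), so $\Gammadep(\sfP_Z)$ is entrywise dominated by the upper-triangular Toeplitz matrix $M \in \R^{T \times T}$ whose constant values along the main diagonal and the successive superdiagonals are $a_0, a_1, \dots, a_{T-1}$. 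Because all entries involved are nonnegative, Proposition \ref{prop:entrywise_opnorm_bound} gives $\opnorm{\Gammadep(\sfP_Z)} \leq \opnorm{M}$, and Proposition \ref{prop:upper_triangle_nilpotent} then yields $\opnorm{M} \leq \sum_{k=0}^{T-1} a_k$.

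Finally I would rearrange: pulling the $\sqrt{2}$ out of each $a_k$ for $k \geq 1$ and using monotonicity of $\sqrt{\cdot}$ to commute the square root with $\max_t$ and $\esssup_z$,
\begin{align*}
\opnorm{\Gammadep(\sfP_Z)} \leq 1 + \sqrt{2} \sum_{k=1}^{T-1} \max_{t=0,\dots,T-1-k} \esssup_{z \in \sfZ_t} \sqrt{\tvnorm{\sfP_{Z_{t+k}}(\cdot \mid Z_t = z) - \sfP_{Z_{t+k}}}},
\end{align*}
which is the claim. The only genuinely delicate step is the first one, namely replacing the set-indexed supremum $\sup_{A \in \calZ_{0:i}}$ by an $\esssup$ over the single Markov state $Z_i$; everything else is a direct application of the two earlier propositions combined with monotonicity.
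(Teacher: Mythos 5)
Your proof is correct and follows essentially the same route as the paper's: bound each entry $\Gamma_{ij}$ by a quantity $a_{j-i}$ depending only on the gap $j-i$ via the Markov property and \Cref{prop:tv_reduction_markov}, then dominate $\Gammadep(\sfP_Z)$ entrywise by an upper-triangular Toeplitz matrix and apply \Cref{prop:entrywise_opnorm_bound} and \Cref{prop:upper_triangle_nilpotent}. Your first step is in fact slightly more careful than the paper's, which collapses $\sup_{A\in\calZ_{0:i}}$ to $\esssup_{z\in\sfZ_i}$ implicitly by notational convention, whereas you justify it explicitly via the mixture representation and convexity of total variation.
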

\begin{proof}
For any indices $0 \leq i < j$, by the Markov property
and \Cref{prop:tv_reduction_markov}:
\begin{align*}
    \esssup_{z_{0:i} \in \sfZ_{0:i}} \tvnorm{ \sfP_{Z_{j:T-1}}(\cdot \mid Z_{0:i}=z_{0:i}) - \sfP_{Z_{j:T-1}}} &= \esssup_{z \in \sfZ_i} \tvnorm{\sfP_{Z_{j:T-1}}(\cdot\mid Z_i=z) - \sfP_{Z_{j:T-1}}} \\
    &= \esssup_{z \in \sfZ_i} \tvnorm{\sfP_{Z_{j}}(\cdot\mid Z_i=z) - \sfP_{Z_{j}}}.
\end{align*}
Therefore:
\begin{align*}
    \Gammadep(\sfP_Z)_{ij} &= \sqrt{2} \esssup_{z \in \sfZ_i} \sqrt{ \tvnorm{\sfP_{Z_{j}}(\cdot\mid Z_i=z) - \sfP_{Z_{j}}}} \\
    &\leq \sqrt{2} \max_{t = 0, \dots, T-1-(j-i)} \esssup_{z \in \sfZ_t} \sqrt{ \tvnorm{\sfP_{Z_{t+j-i}}(\cdot \mid Z_t=z) - \sfP_{Z_{t+j-i}}}} \triangleq a_{j-i}.
\end{align*}
Thus, we can construct a matrix $\Gamma'$
such that for all indices $0 \leq i < j$,
we have $\Gamma'_{ij} = a_{j-i}$, 
and the other entries are identical to $\Gammadep(\sfP_Z)$.
This gives us the entry-wise bound $\Gammadep(\sfP_Z) \leq \Gamma'$.
Applying \Cref{prop:entrywise_opnorm_bound}
and \Cref{prop:upper_triangle_nilpotent},
we conclude $\opnorm{\Gammadep(\sfP_Z)} \leq \opnorm{\Gamma'} \leq 1 + \sum_{k=1}^{T-1} a_k$.
\end{proof}

\section{Mixing properties of truncated Gaussian processes}
\label{sec:appendix:truncated_gaussian}

We first recall the notation from \Cref{sec:proofs:unbounded}.
Let $\{W_t\}_{t \geq 0}, \{W'_t\}_{t \geq 0}$ be sequences of \iid\ $N(0, I)$ vectors in $\R^{\dx}$.
Fix a dynamics function $f : \R^{\dx} \to \R^{\dx}$
and radius $R > 0$. Define the truncated Gaussian noise process $\{\bar{W}_t\}_{t \geq 0}$ as
$\bar{W}_t \triangleq W'_t \ind\{\norm{W'_t}_2 \leq R \}$.
Now, consider the two processes:
\begin{subequations}
\begin{align}
    X_{t+1} &= f(X_t) + HW_t, \:\: X_0 = HW_0, \label{eq:dynamic_process_gaussian} \\
    \bar{X}_{t+1} &= f(\bar{X}_t) + H\bar{W}_t, \:\: \bar{X}_0 = H\bar{W}_0. \label{eq:dynamic_process_trunc}
\end{align}
\end{subequations}
We develop the necessary arguments in this section to
transfer mixing properties of the original process \eqref{eq:dynamic_process_gaussian}
to the truncated process \eqref{eq:dynamic_process_trunc}.
This will let us apply our results in \Cref{sec:results}
to unbounded processes of the form \eqref{eq:dynamic_process_gaussian}, by studying their truncated
counterparts \eqref{eq:dynamic_process_trunc}.

The main tool to do this is the following coupling argument.
\begin{proposition}
\label{prop:coupling_bound}
Fix a $\delta \in (0, 1)$.
Let $k \in \{1, \dots, T-1\}$ and
$t \in \{0, \dots, T-1-k\}$.
Consider the processes $\{X_t\}_{t \geq 0}$ and $\{\bar{X}_t\}_{t \geq 0}$
described in \eqref{eq:dynamic_process_gaussian}
and \eqref{eq:dynamic_process_trunc} with
$R$ satisfying the inequality $R \geq \sqrt{\dx} + \sqrt{2\log(T/\delta)}$.
The following bound hold for any $x \in \R^{\dx}$:
\begin{align*}
    \tvnorm{\sfP_{X_{t+k}}(\cdot \mid X_t = x) - \sfP_{\bar{X}_{t+k}}(\cdot \mid \bar{X}_t=x)} \leq \delta.
\end{align*}
The following bound also holds for any
$t \in \{0,\dots, T-1\}$:
\begin{align*}
    \tvnorm{\sfP_{X_t} - \sfP_{\bar{X}_t}} \leq \delta.
\end{align*}
\end{proposition}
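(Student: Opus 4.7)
The plan is to prove both bounds by the same coupling argument, using the representation $\tvnorm{\mu-\nu} = \inf_{\pi}\Pr(X\neq Y)$ where the infimum is taken over all couplings $(X,Y)$ of $(\mu,\nu)$ (recalled earlier in the appendix as \eqref{eq:tv_norm_coupling_def}). The key observation is that the truncated noise $\bar{W}_s = W'_s \ind\{\norm{W'_s}_2 \leq R\}$ coincides with an untruncated Gaussian $W'_s$ on the event $\{\norm{W'_s}_2 \leq R\}$. Hence if we drive both processes with the same underlying Gaussians $W'_s$, the trajectories agree step-by-step as long as none of the driving noises exceed the truncation radius $R$.

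For the first inequality, fix $x\in\R^{\dx}$ and draw $\{W'_s\}_{s=t}^{t+k-1}$ as iid $N(0,I)$. Construct the coupling by setting $W_s \triangleq W'_s$ for the original process and $\bar{W}_s \triangleq W'_s \ind\{\norm{W'_s}_2 \leq R\}$ for the truncated process, with both processes initialized at $X_t = \bar{X}_t = x$. A straightforward induction on $s$ shows that on the event $\calE \triangleq \bigcap_{s=t}^{t+k-1}\{\norm{W'_s}_2 \leq R\}$ we have $X_{t+k} = \bar{X}_{t+k}$, since at each step the noises agree and the deterministic map $f$ acts on identical states. Thus the TV distance is bounded by $\Pr(\calE^c)$.

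To control $\Pr(\calE^c)$, I will invoke the standard Gaussian Lipschitz concentration inequality
\begin{align*}
\Pr(\norm{Z}_2 \geq \sqrt{\dx} + u) \leq e^{-u^2/2}, \qquad Z \sim N(0,I_{\dx}),\ u \geq 0,
\end{align*}
combined with a union bound over the $k \leq T$ time indices. Choosing $R-\sqrt{\dx} \geq \sqrt{2\log(T/\delta)}$ yields $\Pr(\norm{W'_s}_2 > R) \leq \delta/T$ per step, and hence $\Pr(\calE^c) \leq k\delta/T \leq \delta$, which is the desired conclusion. The second bound is proved identically, except we couple from time $0$ using iid $\{W'_s\}_{s=0}^{t}$, define $X_0 = HW'_0$ and $\bar{X}_0 = HW'_0 \ind\{\norm{W'_0}_2 \leq R\}$, and union-bound over the at most $t+1 \leq T$ indices.

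There is no real obstacle here beyond careful bookkeeping: the argument is a routine synchronous-noise coupling combined with Gaussian norm concentration, and the choice $R \geq \sqrt{\dx}+\sqrt{2\log(T/\delta)}$ is precisely what balances the per-step failure probability against the union bound over $T$ steps. The only subtle point worth stating explicitly is that although the statement concerns only one time index (either $t+k$ or $t$), the coupling must succeed simultaneously at \emph{every} intermediate step for the trajectories to agree at the target time; this is what forces the union bound and hence the logarithmic dependence on $T$ in the choice of $R$.
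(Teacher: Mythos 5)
Your proof is correct and follows essentially the same synchronous-noise coupling argument as the paper: drive both processes with the same Gaussians, observe they coincide on the event that no noise exceeds $R$, and bound the failure probability by Gaussian norm concentration plus a union bound over the at most $T$ steps. No issues.
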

\begin{proof}
Let $(Z_{t+k}, Z'_{t+k})$ be a coupling of
$(\sfP_{X_{t+k}}(\cdot \mid X_t = x), \sfP_{\bar{X}_{t+k}}(\cdot \mid \bar{X}_t=x))$ defined as follows.
We initialize both $X_t=\bar{X}_t=x$.
We let $\{W_s\}_{s=t}^{t+k-1}$ be iid draws from $N(0, I)$, we set $\bar{W_s} = W_s \ind\{\norm{W_s}_2 \leq R\}$,
and we evolve $X_t,\bar{X}'_t$ forward to $X_{t+k},\bar{X}'_{t+k}$ according to their laws
\eqref{eq:dynamic_process_gaussian} and
\eqref{eq:dynamic_process_trunc}, respectively.
Let $\calE$ denote the event $\calE = \{ \max_{s=t, \dots, t+k-1} \norm{W_s}_2 \leq R\}$.
A standard Gaussian concentration plus union bound yields $\Pr(\calE^c) \leq \delta$, since $t + k - 1\leq T-2$.
By the coupling representation 
\eqref{eq:tv_norm_coupling_def}
of the total-variation norm:
\begin{align*}
        \tvnorm{\sfP_{X_{t+k}}(\cdot \mid X_t = x) - \sfP_{\bar{X}_{t+k}}(\cdot \mid \bar{X}_t=x)} &\leq \Pr\{ Z_{t+k} \neq Z_{t+k}' \} \\
        &= \Pr( \{Z_{t+k} \neq Z_{t+k}' \} \cap \calE ) + \Pr( \{Z_{t+k} \neq Z_{t+k}' \} \cap \calE^c ) \\
        &\leq \Pr( \calE^c ) \leq \delta.
\end{align*}
The second inequality holds since on $\calE$,
$Z_{t+k} = Z_{t+k}'$ because the truncation is inactive the entire duration of the process. This establishes the first inequality.

The second inequality holds by a nearly identical coupling argument, where we set $\{W_s\}_{s=0}^{t-1}$ to be \iid\ draws from $N(0, I)$, we set $\bar{W}_s = W_s \ind\{\norm{W_s}_2 \leq R\}$,
and we initialize the processes
at $X_0 = HW_0$ and $\bar{X}_0 = H\bar{W}_0$.
\end{proof}

The next result states that as long as we set the
failure probability $\delta$ in $R$ as $1/T^2$,
then we can bound the dependency matrix appropriately.
\begin{proposition}
\label{prop:final_gamma_bound}
Let $\sfP_{X}$ denote the joint distribution of 
$\{X_t\}_{t=0}^{T-1}$ from \eqref{eq:dynamic_process_gaussian}, and
let $\sfP_{\bar{X}}$ denote the joint distribution of $\{\bar{X}_t\}_{t=0}^{T-1}$ from \eqref{eq:dynamic_process_trunc},
with $R \geq \sqrt{\dx} + \sqrt{6 \log{T}}$.
We have that:
\begin{align}
    \opnorm{\Gammadep(\sfP_{\bar{X}})} \leq 3 + \sqrt{2} \sum_{k=1}^{T-1} \max_{t=0,\dots,T-1-k} \esssup_{x \in \bar{\sfX}_t} \sqrt{\tvnorm{\sfP_{X_{t+k}}(\cdot \mid X_t=x) - \sfP_{X_{t+k}}}}. \label{eq:truncated_proc_dep_bound}
\end{align}
\end{proposition}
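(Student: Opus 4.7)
The plan is to exploit the Markov structure of the truncated process $\{\bar{X}_t\}$ together with the coupling estimate of \Cref{prop:coupling_bound} to transfer TV-norm bounds on the original (unbounded, but Gaussian-driven) process $\{X_t\}$ to TV-norm bounds on its truncated counterpart. First, because $\bar{X}_{t+1}=f(\bar{X}_t)+H\bar{W}_t$ with $\{\bar{W}_t\}$ independent across time, the process $\{\bar{X}_t\}$ is Markov. Applying \Cref{prop:gamma_dep_bound_markov} to $\sfP_{\bar{X}}$ therefore yields
\begin{align*}
\opnorm{\Gammadep(\sfP_{\bar{X}})} \leq 1 + \sqrt{2}\sum_{k=1}^{T-1}\max_{t=0,\dots,T-1-k}\esssup_{x\in \bar{\sfX}_t}\sqrt{\tvnorm{\sfP_{\bar{X}_{t+k}}(\cdot\mid\bar{X}_t=x)-\sfP_{\bar{X}_{t+k}}}}.
\end{align*}

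Next, I would bound each truncated TV term by its untruncated analog via a triple triangle inequality: for any admissible $t$, $k$, and $x\in \bar{\sfX}_t$,
\begin{align*}
\tvnorm{\sfP_{\bar{X}_{t+k}}(\cdot\mid\bar{X}_t=x)-\sfP_{\bar{X}_{t+k}}} &\leq \tvnorm{\sfP_{\bar{X}_{t+k}}(\cdot\mid\bar{X}_t=x)-\sfP_{X_{t+k}}(\cdot\mid X_t=x)}\\
&\quad+\tvnorm{\sfP_{X_{t+k}}(\cdot\mid X_t=x)-\sfP_{X_{t+k}}}+\tvnorm{\sfP_{X_{t+k}}-\sfP_{\bar{X}_{t+k}}}.
\end{align*}
The hypothesis $R\geq \sqrt{\dx}+\sqrt{6\log T}$ matches the requirement of \Cref{prop:coupling_bound} at level $\delta=1/T^2$, since $\sqrt{2\log(T/\delta)}=\sqrt{6\log T}$. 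Hence both ``truncation gap'' terms (the first and third summands above) are each at most $1/T^2$. Note that $\sfP_{X_{t+k}}(\cdot\mid X_t=x)$ is well defined for any $x\in\R^{\dx}$ by forward simulation of the original dynamics, so evaluating it at $x\in \bar{\sfX}_t\subseteq \R^{\dx}$ poses no issue.

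Finally, combining these estimates with the elementary inequality $\sqrt{a+b}\leq\sqrt{a}+\sqrt{b}$ for $a,b\geq 0$ gives
\begin{align*}
\sqrt{\tvnorm{\sfP_{\bar{X}_{t+k}}(\cdot\mid\bar{X}_t=x)-\sfP_{\bar{X}_{t+k}}}} \leq \sqrt{\tvnorm{\sfP_{X_{t+k}}(\cdot\mid X_t=x)-\sfP_{X_{t+k}}}}+\frac{\sqrt{2}}{T}.
\end{align*}
Taking the supremum over $x\in\bar{\sfX}_t$, the max over $t$, summing over $k=1,\dots,T-1$, and multiplying by $\sqrt{2}$ shows that the slack contributes at most $\sqrt{2}\cdot\sqrt{2}(T-1)/T\leq 2$; adding this to the leading ``$1$'' from \Cref{prop:gamma_dep_bound_markov} produces the stated constant $3$, yielding \eqref{eq:truncated_proc_dep_bound}. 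The only real subtlety --- and the main thing to keep track of --- is that the outer essential supremum remains restricted to the bounded set $\bar{\sfX}_t$ (which is what downstream applications such as the LDS and GLM analyses of \Cref{sec:results:comparison} rely on), while the TV objects that survive are expressed in terms of the original process $\{X_t\}$, for which closed-form or Wasserstein-based estimates are available.
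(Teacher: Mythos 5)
Your proof is correct and follows essentially the same route as the paper: apply \Cref{prop:gamma_dep_bound_markov} to the Markov process $\{\bar{X}_t\}$, decompose each truncated TV term via the triangle inequality, bound the two truncation-gap terms by $1/T^2$ each using \Cref{prop:coupling_bound} with $\delta = 1/T^2$, and absorb the resulting slack into the constant via $\sqrt{a+b}\leq\sqrt{a}+\sqrt{b}$. The only difference is that you spell out the final accounting ($\sqrt{2}\cdot\sqrt{2}(T-1)/T\leq 2$, hence $1+2=3$) that the paper leaves implicit.
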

\begin{proof}
First, we invoke \Cref{prop:gamma_dep_bound_markov} to obtain:
\begin{align*}
    \opnorm{\Gammadep(\sfP_{\bar{X}})} \leq 1 + \sqrt{2} \sum_{k=1}^{T-1} \max_{t=0, \dots, T-1-k}
    \esssup_{x \in \bar{\sfX}_{t}} \sqrt{ \tvnorm{ \sfP_{\bar{X}_{t+k}}(\cdot \mid \bar{X}_t=x) - \sfP_{\bar{X}_{t+k}}}}.
\end{align*}
Now fix 
$k \in \{1, \dots, T-1\}$, $t \in \{0, \dots, T-1-k\}$, and $x \in \bar{\sf{X}}_t$.
By triangle inequality:
\begin{align*}
    \tvnorm{ \sfP_{\bar{X}_{t+k}}(\cdot \mid \bar{X}_t=x) - \sfP_{\bar{X}_{t+k}} } 
    &\leq \tvnorm{\sfP_{X_{t+k}}(\cdot \mid X_t=x) - \sfP_{X_{t+k}}} \\
    &\qquad + \tvnorm{\sfP_{\bar{X}_{t+k}}(\cdot \mid \bar{X}_t=x) - \sfP_{X_{t+k}}(\cdot \mid X_t=x) } \\
    &\qquad + \tvnorm{ \sfP_{\bar{X}_{t+k}} - \sfP_{X_{t+k}} }.
\end{align*}
By setting $\delta = 1/T^2$ in 
\Cref{prop:coupling_bound}, the last two terms are bounded by $1/T^2$.
Hence:
\begin{align*}
     \tvnorm{ \sfP_{\bar{X}_{t+k}}(\cdot \mid \bar{X}_t=x) - \sfP_{\bar{X}_{t+k}} } 
    \leq \tvnorm{\sfP_{X_{t+k}}(\cdot \mid X_t=x) - \sfP_{X_{t+k}}} + \frac{2}{T^2}.
\end{align*}
The claim now follows.
\end{proof}
Crucially, the essential supremum in
\eqref{eq:truncated_proc_dep_bound} is over
$\bar{\sfX}_t$ and \emph{not} $\sfX_t$, of which the latter is
unbounded.

The next condition that we need to check
for the truncated process \eqref{eq:dynamic_process_trunc}
is that the noise process $\{H \bar{W}_t\}_{t \geq 0}$
is still a zero-mean sub-Gaussian martingale difference
sequence. By symmetry of the truncation, it is clear that the noise process remains zero-mean.
To check sub-Gaussianity, we use the following result.
\begin{proposition}
\label{prop:truncated_sub_gaussian_bound}
Let $A \subseteq \R^{\dx}$ be any set that is symmetric about the origin. 
Let $W \sim N(0,  I)$, and let
$\bar{W} := W \ind\{W \in A\}$.
We have that $\bar{W}$ is $4$-sub-Gaussian.
Hence for any $H$,
$H \bar{W}$ is $4\opnorm{H}^2$-sub-Gaussian.
\end{proposition}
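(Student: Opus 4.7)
The plan is to reduce the multivariate claim to a one-dimensional MGF bound via the standard sub-Gaussian variational principle, and then exploit symmetry of the truncation set $A$ together with the $\cosh$ representation of the MGF of a symmetric random variable. Concretely, fix $u \in \mathbb{S}^{\dx-1}$ and $\lambda \in \R$, and set $Z \triangleq \langle W, u\rangle$ (which is $N(0,1)$) and $Y \triangleq \langle \bar{W}, u\rangle = Z \cdot \ind\{W \in A\}$. The goal reduces to showing $\E\exp(\lambda Y) \leq \exp(2\lambda^2)$.

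First I would verify that $Y$ is a symmetric random variable. Since $W \stackrel{d}{=} -W$ and $A = -A$ by hypothesis, the map $W \mapsto -W$ preserves $\ind\{W \in A\}$ and flips the sign of $Z$, so $Y \stackrel{d}{=} -Y$. Next, using that any symmetric real random variable $Y$ satisfies $\E\exp(\lambda Y) = \E\cosh(\lambda Y)$, and that $x \mapsto \cosh(\lambda x)$ is monotone in $|x|$, the pointwise domination $|Y| \leq |Z|$ yields
\begin{align*}
\E\exp(\lambda Y) = \E\cosh(\lambda Y) \leq \E\cosh(\lambda Z) = \E\exp(\lambda Z) = \exp(\lambda^2/2) \leq \exp(2\lambda^2).
\end{align*}
Taking the supremum over $u \in \mathbb{S}^{\dx-1}$ shows that $\bar{W}$ is $4$-sub-Gaussian (in fact $1$-sub-Gaussian; the looser constant $4$ is retained to match the statement).

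For the second part, for any $u \in \mathbb{S}^{\dy-1}$ set $v \triangleq H^\T u$, which satisfies $\norm{v}_2 \leq \opnorm{H}$. Then $\langle H\bar{W}, u\rangle = \norm{v}_2 \langle \bar{W}, v/\norm{v}_2 \rangle$, so applying the first part to the unit vector $v/\norm{v}_2$ yields
\begin{align*}
\E\exp(\lambda \langle H\bar{W}, u\rangle) \leq \exp(2 \lambda^2 \norm{v}_2^2) \leq \exp\bigl(2\lambda^2 \opnorm{H}^2\bigr),
\end{align*}
which is the desired $4\opnorm{H}^2$-sub-Gaussianity. There is no serious obstacle: the only delicate point is justifying the $\cosh$ domination, which hinges crucially on the symmetry hypothesis $A = -A$ — without it, $Y$ need not be symmetric and the argument collapses.
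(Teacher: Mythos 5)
Your proof is correct, and it actually establishes the stronger conclusion that $\bar{W}$ is $1$-sub-Gaussian. Both you and the paper begin identically: use symmetry of $A$ to conclude $Y = \ip{u}{\bar{W}}$ is a symmetric random variable and pass to $\E\cosh(\lambda Y)$. The divergence is in the next step. The paper bounds $\cosh(x) \leq \exp(x^2/2)$ pointwise, drops the truncation inside the square ($\ip{u}{\bar{W}}^2 \leq \ip{u}{W}^2$), and then evaluates the moment generating function of a Gaussian chi-squared, $\E\exp(\lambda^2 \ip{u}{W}^2/2) = (1-\lambda^2)^{-1/2}$; since that identity only holds for $\lambda^2 < 1$, the paper is forced into a separate, cruder case analysis for $\lambda^2 > 1/2$, and the union of the two cases is what produces the constant $4$. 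You instead exploit that $\cosh(\lambda \cdot)$ is monotone in $|x|$ together with the pointwise domination $|Y| \leq |Z|$ to compare directly with the untruncated Gaussian, $\E\cosh(\lambda Y) \leq \E\cosh(\lambda Z) = \exp(\lambda^2/2)$, which is valid for all $\lambda \in \R$, needs no case split, and gives the sharp variance proxy. The only points worth making explicit are the integrability needed for the identity $\E\exp(\lambda Y) = \E\cosh(\lambda Y)$ (immediate here since $|Y| \leq |Z|$ with $Z$ standard Gaussian) and the degenerate case $H^\T u = 0$ in the second part (trivial). Your route is the cleaner of the two.
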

\begin{proof}
Since $A$ is symmetric about the origin, $\bar{W}$ inherits the symmetry of $W$, i.e.,\ $\E[\bar{W}] = 0$.
Now fix a unit vector $u \in \R^{\dx}$, and $\lambda \in \R$.
First, let us assume that $\lambda^2 \leq 1/2$.
Let $\varepsilon$ denote a Rademacher random variable\footnote{That is, $\Pr(\varepsilon = 1) = \Pr(\varepsilon = -1) = 1/2$.} that is independent of $\bar{W}$.
Since $\bar{W}$ is a symmetric zero-mean distribution, we have that $\ip{u}{\bar{W}}$ has the same distribution as
$\varepsilon \ip{u}{\bar{W}}$.
Therefore:
\begin{align*}
    \E \exp(\lambda \ip{u}{\bar{W}}) &= \E_{\bar{W}} \E_{\varepsilon} \exp(\lambda \varepsilon \ip{u}{\bar{W}}) \\
    &\leq \E_{\bar{W}} \exp(\lambda^2 \ip{u}{\bar{W}}^2/2) &&\cosh(x) \leq \exp(x^2/2)\, \forall x \in \R \\
    &\leq \E_{\bar{W}} \exp(\lambda^2 \ip{u}{W}^2/2) \\
    &= \frac{1}{(1-\lambda^2)^{1/2}} &&\text{since } \ip{u}{W} \sim N(0, 1) \text{ and } \lambda^2 < 1 \\
    &\leq \exp(\lambda^2) &&\frac{1}{1-x} \leq \exp(2x)\, \forall x \in [0, 1/2].
\end{align*}
Now, let us assume $\lambda^2 > 1/2$. We have:
\begin{align*}
    \E \exp(\lambda \ip{u}{\bar{W}}) &= \E \exp(\lambda \ip{u}{W}) \ind\{W \in A\} + \Pr(W \not\in A) \\
    &\leq \E \exp(\lambda \ip{u}{W}) + 1 \\
    &= \exp(\lambda^2/2) + 1 &&\text{since } \ip{u}{W} \sim N(0, 1) \\
    &\leq \exp(\log{2} + \lambda^2/2) &&\text{since } 1 \leq \exp(\lambda^2/2) \\
    &\leq \exp((2\log{2} + 1/2)\lambda^2) &&\text{since } \lambda^2 > 1/2 \\
    &\leq \exp(2 \lambda^2).
\end{align*}
The claim now follows.
\end{proof}

The following result will be useful later on.
It states that the truncation does not affect the isotropic
nature of the noise, as long as the truncation probability is
a sufficiently small constant.
\begin{proposition}
\label{prop:trunc_gauss_approximate_isotropic}
Let $A \subseteq \R^{\dx}$ be any set.
Let $W \sim N(0, I)$ and $\bar{W} = W \ind\{ W \in A \}$,
and suppose that $\Pr( W \not\in A ) \leq 1/12$. We have that:
\begin{align*}
    \frac{1}{2} I \preccurlyeq \E[ \bar{W} \bar{W}^\T] \preccurlyeq I.
\end{align*}
\end{proposition}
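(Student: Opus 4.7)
}
My plan is to write $\bar{W}\bar{W}^\T = WW^\T \ind\{W \in A\} = WW^\T - WW^\T \ind\{W \notin A\}$ and treat the two bounds separately. The upper bound is immediate: since $WW^\T \ind\{W \notin A\} \succcurlyeq 0$, taking expectations gives $\E[\bar{W}\bar{W}^\T] \preccurlyeq \E[WW^\T] = I$.

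For the lower bound, it suffices to show that $\E[WW^\T \ind\{W \notin A\}] \preccurlyeq \tfrac{1}{2} I$, which by the variational characterization of the Loewner order reduces to showing that for every unit vector $u \in \R^{\dx}$,
\begin{align*}
\E[\ip{u}{W}^2 \ind\{W \notin A\}] \leq \tfrac{1}{2}.
\end{align*}
The key step is to decouple the Gaussian factor from the truncation event via Cauchy--Schwarz:
\begin{align*}
\E[\ip{u}{W}^2 \ind\{W \notin A\}] \leq \sqrt{\E \ip{u}{W}^4}\, \sqrt{\Pr(W \notin A)}.
\end{align*}
Since $\ip{u}{W} \sim N(0,1)$, the fourth moment equals $3$, and by assumption $\Pr(W \notin A) \leq 1/12$. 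Hence the right-hand side is bounded by $\sqrt{3/12} = 1/2$, which yields the lower bound.

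No step here is particularly delicate; the only real choice is to use Cauchy--Schwarz to split the tail indicator from the quadratic form, and the constants $1/12$ and the Gaussian fourth moment $3$ are precisely tuned to make the bound $1/2$. A slightly tighter argument using H\"older with exponents $(p, p/(p-1))$ would allow the truncation probability threshold to be relaxed, but the stated version is clean and sufficient for the downstream applications.
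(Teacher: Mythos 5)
Your proof is correct and follows essentially the same route as the paper's: the upper bound via positive semidefiniteness of the discarded term, and the lower bound via Cauchy--Schwarz applied to $\E[\ip{u}{W}^2 \ind\{W \notin A\}]$, using the Gaussian fourth moment $3$ and the threshold $1/12$ to get exactly $1/2$. No issues.
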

\begin{proof}
The upper bound is immediate.
For the lower bound, fix a $v \in \mathbb{S}^{\dx-1}$.
We have:
\begin{align*}
    \E[ \ip{v}{\bar{W}}^2 ] &= \E[ \ip{v}{\bar{W}}^2 \ind\{W \in A \} ] + \E[ \ip{v}{\bar{W}}^2 \ind\{W \not\in A \} ] \\
    &= \E[ \ip{v}{\bar{W}}^2 \ind\{W \in A \} ]  &&\text{since } \bar{W} = W \ind\{W \in A\} \\
    &= \E[\ip{v}{W}^2] - \E[ \ip{v}{W}^2 \ind\{ W \not \in A \}] \\
    &\geq 1 - \sqrt{ \E[\ip{v}{W}^4] \Pr(W \not\in A) } &&\text{since } \ip{v}{W} \sim N(0, 1) \text{ and Cauchy-Schwarz} \\
    &\geq 1 - \sqrt{3\delta} \\
    &\geq 1/2 &&\text{since } \Pr(W \not\in A) \leq 1/12.
\end{align*}
Since $v \in \mathbb{S}^{\dx-1}$ is arbitrary, the claim follows.
\end{proof}

\begin{proposition}
\label{prop:gaussian_fourth_moment_bound}
Let $w \sim N(0, I)$ and let $M$ be positive semidefinite. We have:
\begin{align*}
    \E[ (w^\T M w)^2 ] \leq 3 (\E[ w^\T M w ])^2.
\end{align*}
\end{proposition}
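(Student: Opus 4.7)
The plan is to reduce to the diagonal case via the spectral theorem and then exploit Gaussian moment identities. First I would write $M = U \Lambda U^\T$ with $\Lambda = \diag(\lambda_1, \dots, \lambda_{\dx})$ where $\lambda_i \geq 0$. Setting $z = U^\T w$, rotational invariance of the standard Gaussian gives $z \sim N(0, I)$, and $w^\T M w = z^\T \Lambda z = \sum_i \lambda_i z_i^2$. So without loss of generality we may assume $M = \Lambda$ is diagonal with non-negative entries, and the components $z_i$ are \iid\ standard Gaussians.

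Next I would compute both sides explicitly. The linear term is immediate: $\E[w^\T M w] = \sum_i \lambda_i$ since $\E[z_i^2] = 1$. For the quadratic term, expand
\begin{align*}
\E[(w^\T M w)^2] = \sum_{i,j} \lambda_i \lambda_j \E[z_i^2 z_j^2],
\end{align*}
and use the standard Gaussian moments $\E[z_i^4] = 3$ and $\E[z_i^2 z_j^2] = 1$ for $i \neq j$ (by independence). This gives
\begin{align*}
\E[(w^\T M w)^2] = 3 \sum_i \lambda_i^2 + \sum_{i \neq j} \lambda_i \lambda_j = 2 \sum_i \lambda_i^2 + \Big(\sum_i \lambda_i\Big)^2.
\end{align*}

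Finally, to conclude the desired inequality, I would rearrange: the claim $\E[(w^\T M w)^2] \leq 3 (\E[w^\T M w])^2$ reduces to $2 \sum_i \lambda_i^2 \leq 2 (\sum_i \lambda_i)^2$, i.e., $\sum_i \lambda_i^2 \leq (\sum_i \lambda_i)^2$. This holds trivially since all $\lambda_i \geq 0$ (the cross terms $2 \sum_{i<j} \lambda_i \lambda_j$ are non-negative). There is no real obstacle here; the only subtle points are ensuring non-negativity of the eigenvalues (which is where positive semidefiniteness of $M$ is used) and applying rotational invariance correctly in the reduction step.
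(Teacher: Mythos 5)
Your proof is correct. The paper does not actually carry out this computation---it simply cites \citet[Lemma 6.2]{magnus1978gaussianforms} and calls it a standard calculation---so your spectral-decomposition argument is precisely the standard calculation being deferred to: diagonalizing $M$, reducing to $\E[(w^\T M w)^2] = 2\tr(M^2) + \tr(M)^2$ via the Gaussian moments $\E[z_i^4]=3$ and $\E[z_i^2 z_j^2]=1$, and observing that $\tr(M^2) \leq \tr(M)^2$ for positive semidefinite $M$. All steps check out, including the correct use of rotational invariance and the role of positive semidefiniteness in guaranteeing $\lambda_i \geq 0$.
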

\begin{proof}
This is a standard calculation~\citep[see e.g.][Lemma 6.2]{magnus1978gaussianforms}.
\end{proof}

We will also need the following
result which states that the square of quadratic forms
under $\bar{W}$ can be upper bounded by the square of the same
quadratic form under the original noise $W$.
\begin{proposition}
\label{prop:gaussian_truncate_fourth_moment}
Let $A \subseteq \R^{\dx}$ be any set.
Let $W \sim N(0, I)$ and $\bar{W} = W \ind\{ W \in A \}$.
Fix a $k \geq 1$. Let $M \in \R^{\dx k \times \dx k}$
be a positive semidefinite matrix,
and let
$\{W_i\}_{i=1}^{k}$
and $\{\bar{W}_i\}_{i=1}^{k}$ be
\iid\ copies of $W$ and $\bar{W}$, respectively.
Let $W_{1:k} \in \R^{\dx k}$
denote the stacked
column vector of $\{W_i\}_{i=1}^{k}$
and similarly for $\bar{W}_{1:k} \in \R^{\dx k}$.
We have that:
\begin{align*}
    \E[(\bar{W}_{1:k}^\T M \bar{W}_{1:k})^2] \leq \E[ (W_{1:k}^\T M W_{1:k})^2 ].
\end{align*}
\end{proposition}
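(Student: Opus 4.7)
The plan is to apply a Rademacher symmetrization together with a Wick-type expansion, and then reduce the comparison to a termwise trace inequality driven by $\Sigma_A := \E[\bar W_i \bar W_i^\T] \preccurlyeq I$. Although the proposition is phrased for any set $A$, the intended application (cf.~\Cref{sec:proofs:unbounded}) takes $A = \{w : \norm{w}_2 \leq R\}$, which is symmetric about the origin; I would assume this throughout, so that $-\bar W_i \overset{d}{=} \bar W_i$ for each block. Introducing an independent Rademacher sequence $(\epsilon_i)_{i=1}^{k}$ then gives the distributional identity $(\epsilon_i \bar W_i)_i \overset{d}{=} (\bar W_i)_i$, and analogously $(\epsilon_i W_i)_i \overset{d}{=} (W_i)_i$.

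First I would rewrite
\begin{align*}
\E[(\bar W_{1:k}^\T M \bar W_{1:k})^2] = \E\!\left[\Big(\sum_{i,j} \epsilon_i \epsilon_j \, \bar W_i^\T M_{ij} \bar W_j\Big)^{\!2}\right],
\end{align*}
where $M_{ij}\in\R^{\dx\times \dx}$ denotes the $(i,j)$-block of $M$, and integrate out the Rademacher signs. Since $\E[\epsilon_i\epsilon_j\epsilon_k\epsilon_l]$ equals $1$ exactly when every index in the multi-set $\{i,j,k,l\}$ appears an even number of times and is $0$ otherwise, only four index patterns survive ($i=j=k=l$; $i=j,k=l,i\neq k$; $i=k,j=l,i\neq j$; $i=l,j=k,i\neq j$), giving the expansion
\begin{align*}
\E[(\bar W_{1:k}^\T M \bar W_{1:k})^2] &= \sum_i \E[(\bar W_i^\T M_{ii} \bar W_i)^2] + \sum_{i\neq k} \E[\bar W_i^\T M_{ii} \bar W_i]\,\E[\bar W_k^\T M_{kk} \bar W_k] \\
&\quad + \sum_{i\neq j}\E[(\bar W_i^\T M_{ij} \bar W_j)^2] + \sum_{i\neq j}\E[\bar W_i^\T M_{ij} \bar W_j\cdot \bar W_j^\T M_{ji} \bar W_i].
\end{align*}
The same expansion (with $\bar W$ replaced by $W$ and $\Sigma_A$ replaced by $I$) holds for the isotropic Gaussian by the same Rademacher argument.

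Second, I would bound each sum termwise using $\Sigma_A \preccurlyeq I$, which follows from $I - \Sigma_A = \E[WW^\T \ind\{W\notin A\}] \succcurlyeq 0$. The diagonal quartic term is pointwise: $\E[(\bar W_i^\T M_{ii}\bar W_i)^2] = \E[(W^\T M_{ii}W)^2 \ind\{W\in A\}] \leq \E[(W^\T M_{ii}W)^2]$. By block independence, $\E[\bar W_i^\T M_{ii}\bar W_i] = \tr(M_{ii}\Sigma_A)$, and since $M \succcurlyeq 0$ implies $M_{ii} \succcurlyeq 0$ we get $0 \leq \tr(M_{ii}\Sigma_A) \leq \tr(M_{ii})$, which controls the product-of-traces sum. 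The two off-diagonal sums reduce to $\tr(M_{ij}\Sigma_A M_{ij}^\T \Sigma_A)$ and $\tr(\Sigma_A M_{ji}\Sigma_A M_{ij})$, each bounded by $\tr(M_{ij} M_{ij}^\T)$ via two applications of $\Sigma_A \preccurlyeq I$ with an intermediate PSD factor (first $\tr(X\Sigma_A) \leq \tr(X)$ for PSD $X$, then noting $M_{ij}^\T \Sigma_A M_{ij}\preccurlyeq M_{ij}^\T M_{ij}$). Summing the four inequalities gives the desired conclusion.

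The main obstacle is the combinatorial enumeration in the Rademacher step: one must confirm no other index-equality pattern contributes and carefully regroup the surviving $(\bar W_i^\T M_{ij}\bar W_j)$-products so that only second moments of individual blocks appear, after which $\Sigma_A\preccurlyeq I$ does the work. The PSD hypothesis on $M$ is essential in two places, namely to ensure each $M_{ii}$ is PSD (so the scalar bounds on $\tr(M_{ii}\Sigma_A)$ have the right sign, yielding the product bound for the second sum) and to make the sandwich bounds on the off-diagonal traces go through.
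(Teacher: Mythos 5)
Your proof is correct and, at its core, follows the same strategy as the paper's: expand the quartic form over the blocks of $M$, use zero-meanness and independence of the blocks to discard the terms in which some index appears an odd number of times, and compare the surviving terms one by one using $\E[\bar{W}\bar{W}^\T]\preccurlyeq I$ together with positive semidefiniteness of the diagonal blocks $M_{ii}$. The Rademacher symmetrization is an organizational device rather than a new idea (independence plus $\E[\bar{W}]=0$ already kills the odd-pattern terms directly), and your standing assumption that $A$ is symmetric about the origin is the right reading: the statement's ``any set'' is too generous, since $\E[W\ind\{W\in A\}]=0$ fails for general $A$, the paper's own proof silently invokes zero-meanness, and in the intended application $A$ is a centered ball. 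Where your write-up genuinely improves on the paper's is in the enumeration of surviving terms: the paper asserts that only terms of the form $\E[(\bar{W}_a^\T M_{aa}\bar{W}_a)(\bar{W}_b^\T M_{bb}\bar{W}_b)]$ survive and drops the off-diagonal pairings, but the patterns $i=k,\,j=l$ and $i=l,\,j=k$ with $i\neq j$ contribute $\tr(M_{ij}\Sigma_A M_{ij}^\T\Sigma_A)\geq 0$, which is nonzero in general, so the paper's displayed identity is incomplete. Your termwise bounds $\tr(M_{ij}\Sigma_A M_{ij}^\T\Sigma_A)\leq \tr(M_{ij}M_{ij}^\T)$ (and the analogous bound for the $i=l,\,j=k$ pattern, which coincides with it because $M_{ji}=M_{ij}^\T$ by symmetry of $M$) handle exactly these missing terms and close that gap; the conclusion is unchanged.
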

\begin{proof}
Let $\{M_{ij}\}_{i,j=1}^{k} \subset \R^{\dx \times \dx}$ denote the blocks of $M$.
We have:
\begin{align*}
    \E[(\bar{W}_{1:k}^\T M \bar{W}_{1:k})^2] &= \sum_{a,b,c,d} \E[ (\bar{W}_a^\T M_{ab} \bar{W}_b)(\bar{W}_c^\T M_{cd} \bar{W}_d) ].
\end{align*}
Since $\bar{W}$ is zero-mean, the only terms that are non-zero in the summation have the following form
$\E[ (\bar{W}_a^\T M_{aa} \bar{W}_a) (\bar{W}_b^\T M_{bb} \bar{W}_b) ]$. Hence:
\begin{align*}
    \E[(\bar{W}_{1:k}^\T M \bar{W}_{1:k})^2] &= \sum_{a} \E[ (\bar{W}_a^\T M_{aa} \bar{W}_a)^2 ] + \sum_{a \neq b}\E[ (\bar{W}_a^\T M_{aa} \bar{W}_a) (\bar{W}_b^\T M_{bb} \bar{W}_b) ] \\
    &\stackrel{(a)}{\leq} \sum_{a} \E[ (W_a^\T M_{aa} W_a)^2 ] + \sum_{a \neq b}\E[ (W_a^\T M_{aa} W_a) (W_b^\T M_{bb} W_b) ] \\
    &= \E[ (W_{1:k}^\T M W_{1:k})^2 ].
\end{align*}
Above, (a) holds since the matrix $M$ is positive semidefinite and therefore so are its diagonal sub-blocks $M_{aa}$. This ensures that each of the quadratic forms are
non-negative, and hence we can upper bound the first expression by removing the indicators.
%
\end{proof}

We conclude this section with a result that will be useful for analyzing
the mixing properties of the Gaussian process \eqref{eq:dynamic_process_gaussian}, when the dynamics function $f$ is nonlinear.
First, recall the definition of the $1$-Wasserstein distance:
\begin{align}
    W_1(\mu, \nu) \triangleq \inf\{ \E\norm{X - Y}_2 \mid (X, Y) \text{ is a coupling of } (\mu, \nu) \}. \label{eq:wasserstein_distance}
\end{align}
The following result uses the smoothness of the Gaussian transition kernel to upper bound the TV norm via the
$1$-Wasserstein distance. 
This result is inspired by the work of \citet{chae2020wasserstein}.
\begin{lemma}
\label{prop:wasserstein_bounds_tv_when_gaussian}
Let $X_0, Y_0$ be random vectors in $\R^p$,
and let $f : \R^p \to \R^n$ be an $L$-Lipschitz function.
Suppose that $X_0, Y_0$ are both absolutely continuous w.r.t.\ the
Lebesgue measure on $\R^p$.
Let $\Sigma \in \R^{n \times n}$ be positive definite,
and let $X_1, Y_1$ be random vectors in $\R^n$ defined 
conditionally: $X_1 \mid X_0 = N(f(X_0), \Sigma)$
and $Y_1 \mid Y_0 = N(f(Y_0), \Sigma)$.
Then:
\begin{align*}
    \tvnorm{\sfP_{X_1} - \sfP_{Y_1}} \leq \frac{L \sqrt{\tr(\Sigma^{-1})}}{2} W_1(\sfP_{X_0}, \sfP_{Y_0}).
\end{align*}
\end{lemma}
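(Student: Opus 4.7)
The plan is to reduce the bound on $\tvnorm{\sfP_{X_1} - \sfP_{Y_1}}$ to a pointwise comparison of two Gaussians with common covariance $\Sigma$, averaged over an optimal coupling of $(\sfP_{X_0}, \sfP_{Y_0})$. Concretely, fix any coupling $\pi$ of $(\sfP_{X_0}, \sfP_{Y_0})$ and write
\begin{align*}
    \sfP_{X_1}(\cdot) = \int N(f(x), \Sigma)(\cdot)\, d\pi(x,y), \qquad \sfP_{Y_1}(\cdot) = \int N(f(y), \Sigma)(\cdot)\, d\pi(x,y).
\end{align*}
By convexity of the total-variation norm (i.e.\ Jensen's inequality applied to $\tvnorm{\cdot}$ in its variational form $\sup_{\norm{g}_\infty \leq 1}\int g\,d(\mu-\nu)$), we obtain
\begin{align*}
    \tvnorm{\sfP_{X_1} - \sfP_{Y_1}} \leq \int \tvnorm{N(f(x), \Sigma) - N(f(y), \Sigma)}\, d\pi(x,y).
\end{align*}

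Next I would control the integrand by Pinsker's inequality combined with the closed form KL divergence between Gaussians with shared covariance. Specifically, $\mathrm{KL}(N(\mu_1,\Sigma) \,\|\, N(\mu_2,\Sigma)) = \tfrac{1}{2}\norm{\Sigma^{-1/2}(\mu_1-\mu_2)}_2^2$, so Pinsker gives
\begin{align*}
    \tvnorm{N(\mu_1,\Sigma) - N(\mu_2,\Sigma)} \leq \tfrac{1}{2}\norm{\Sigma^{-1/2}(\mu_1 - \mu_2)}_2.
\end{align*}
Applying this with $\mu_1 = f(x)$, $\mu_2 = f(y)$, using the operator-norm bound $\norm{\Sigma^{-1/2} v}_2 \leq \sqrt{\lambda_{\max}(\Sigma^{-1})}\norm{v}_2 \leq \sqrt{\tr(\Sigma^{-1})}\norm{v}_2$, and then the $L$-Lipschitz property of $f$, yields pointwise
\begin{align*}
    \tvnorm{N(f(x), \Sigma) - N(f(y), \Sigma)} \leq \tfrac{L\sqrt{\tr(\Sigma^{-1})}}{2}\norm{x - y}_2.
\end{align*}

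Substituting back into the convexity bound and taking the infimum over couplings $\pi$ of $(\sfP_{X_0}, \sfP_{Y_0})$ recovers the $1$-Wasserstein distance on the right-hand side, giving the claim. The absolute continuity of $X_0, Y_0$ with respect to Lebesgue is harmless for this argument; it only ensures the marginal laws admit densities so that the conditional kernel interpretation above is unambiguous. There is no real obstacle here beyond verifying the convexity of $\tvnorm{\cdot}$ step carefully, since both the Pinsker reduction and the Lipschitz composition are one-line computations. The slight looseness $\lambda_{\max}(\Sigma^{-1}) \leq \tr(\Sigma^{-1})$ is what produces the $\sqrt{\tr(\Sigma^{-1})}$ factor as stated; a sharper version with $\sqrt{\lambda_{\max}(\Sigma^{-1})}$ would follow by the identical argument, but the stated bound suffices for the applications to the GLM dynamics in \Cref{sec:results:glm}.
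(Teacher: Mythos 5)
Your proof is correct, but it takes a genuinely different route from the paper's. The paper works at the level of densities: it writes $p_1 - q_1$ as a difference of convolutions over a coupling $\pi$, applies the $L^1$ characterization of total variation, and then uses the mean-value theorem on the Gaussian density $\phi$ together with $\nabla\phi(x) = -\Sigma^{-1}x\,\phi(x)$ to bound $\int \norm{\nabla\phi(x)}_2\,dx = \E_{x\sim N(0,\Sigma^{-1})}\norm{x}_2 \leq \sqrt{\tr(\Sigma^{-1})}$. You instead use joint convexity of the total-variation norm over the mixture representation, reduce to the TV distance between two Gaussians with a common covariance, and invoke Pinsker together with the closed-form KL divergence. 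Both arguments are valid, and every step of yours checks out: the convexity bound, Pinsker with the paper's normalization $\tvnorm{\mu-\nu}\le\sqrt{\mathsf{KL}/2}$, the Lipschitz composition, and the passage to the infimum over couplings. Your route is shorter and, as you observe, yields the sharper constant $\sqrt{\lambda_{\max}(\Sigma^{-1})}$ in place of $\sqrt{\tr(\Sigma^{-1})}$, because Pinsker only sees the Mahalanobis norm of the mean shift rather than the full expected norm of a $N(0,\Sigma^{-1})$ vector. What the paper's density-gradient argument buys is generality: it uses nothing about the kernel beyond smoothness and integrability of $\norm{\nabla\phi}_2$, so it extends to non-Gaussian transition kernels where no closed-form KL is available, which is presumably why the authors follow \citet{chae2020wasserstein}. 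Two minor remarks: with the paper's convention $\tvnorm{\mu-\nu}=\sup_A|\mu(A)-\nu(A)|$, the variational form is $\tfrac12\sup_{\norm{g}_\infty\le 1}\int g\,d(\mu-\nu)$ rather than the expression you quote, though this factor is immaterial to the convexity step; and you are right that the absolute-continuity hypothesis on $X_0, Y_0$ plays no role in your argument, whereas the paper invokes it to justify working with densities.
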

\begin{proof}
Since $X_0, Y_0$ are absolutely continuous, the Radon-Nikodym theorem ensures that there exists
densities $p_0, q_0$ for $X_0, Y_0$, respectively.
Let $\phi$ denote the density of 
the $N(0, \Sigma)$ distribution.
Let $p_1, q_1$ denote the densities of $X_1, Y_1$, respectively.
We have the following convolution expressions:
\begin{align*}
    p_1(x) &= \int \phi(x - f(x_0)) p_0(x_0) \,\rmd x_0 = \int \phi(x - f(X_0)) \,\rmd X_0, \\
    q_1(x) &= \int \phi(x - f(x_0)) q_0(x_0) \,\rmd x_0 = \int \phi(x - f(Y_0)) \,\rmd Y_0.
\end{align*}
Now, let $\pi$ be a coupling of $(X_0, Y_0)$.
We can equivalently write $p_1, q_1$ as a convolution over $\pi$:
\begin{align*}
    p_1(x) &= \int \phi(x - f(X_0)) \,\rmd\pi(X_0, Y_0), \\
    q_1(x) &= \int \phi(x - f(Y_0)) \,\rmd\pi(X_0, Y_0).
\end{align*}
Hence:
\begin{align*}
    (p_1-q_1)(x) &= \int [ \phi(x - f(X_0)) - \phi(x - f(Y_0))] \,\rmd\pi(X_0, Y_0) \\
    &= \E_{\pi(X_0, Y_0)}[ \phi(x - f(X_0)) - \phi(x - f(Y_0))].
\end{align*}
Now by the $L^1$ representation of total-variation norm~\citep[see e.g.][Lemma 2.1]{tsybakov2009introduction}:
\begin{align*}
    \tvnorm{\sfP_{X_1} - \sfP_{Y_1}} &= \frac{1}{2} \int \abs{p_1(x) - q_1(x)} \,\rmd x \\
    &= \frac{1}{2} \int \abs{ \E_{\pi(X_0, Y_0)}[ \phi(x - f(X_0)) - \phi(x - f(Y_0))]} \,\rmd x \\
    &\stackrel{(a)}{\leq} \frac{1}{2} \int \E_{\pi(X_0, Y_0)} [\abs{\phi(x - f(X_0)) - \phi(x - f(Y_0))}] \,\rmd x \\
    &\stackrel{(b)}{=} \frac{1}{2} \E_{\pi(X_0, Y_0)} \left[\int \abs{\phi(x - f(X_0)) - \phi(x - f(Y_0))} \,\rmd x\right].
\end{align*}
The inequality (a) is Jensen's inequality,
and the equality (b) is Tonelli's theorem
since the integrand is non-negative.
We now focus on the inner integral
inside the expectation over $\pi$.
By the mean-value theorem, since $\phi$ is continuously differentiable, fixing
$x, X_0, Y_0$:
\begin{align*}
    \abs{\phi(x - f(X_0)) - \phi(x - f(Y_0))} &= \bigabs{\int_{0}^{1} \nabla \phi( (1-s)(x - f(Y_0)) + s (x - f(X_0)))^\T (f(Y_0) - f(X_0)) \,\rmd s} \\
    &\leq \norm{f(Y_0) - f(X_0)}_2 \int_0^1 \norm{ \nabla \phi(x - (s f(X_0) - (1-s) f(Y_0)) }_2 \,\rmd s.
\end{align*}
Hence, by another application of Tonelli's theorem:
\begin{align*}
    &\int \abs{\phi(x - f(X_0)) - \phi(x - f(Y_0))} \,\rmd x \\
    &\leq \norm{f(X_0) - f(Y_0)}_2 \int \int_0^1 \norm{ \nabla \phi(x - (s f(X_0) - (1-s) f(Y_0)) }_2 \,\rmd s \,\rmd x \\
    &= \norm{f(X_0) - f(Y_0)}_2 \int_0^1 \int \norm{ \nabla \phi(x - (s f(X_0) - (1-s) f(Y_0)) }_2 \,\rmd x \,\rmd s \\
    &\leq \norm{f(X_0) - f(Y_0)}_2 \sqrt{\tr(\Sigma^{-1})}.
\end{align*}
The last inequality follows from the following computation.
Observe that $\nabla \phi(x) = - \Sigma^{-1} x \phi(x)$
and define $\mu = s f(X_0) - (1-s) f(Y_0)$.
Since $\mu$ does not depend on $x$, by the translation invariance
of the Lebesgue integral:
\begin{align*}
    \int \norm{ \nabla \phi(x - (s f(X_0) - (1-s) f(Y_0)) }_2 \,\rmd x 
    &= \int \norm{\Sigma^{-1}(x - \mu) }_2 \phi(x - \mu) \,\rmd x \\
    &= \int \norm{\Sigma^{-1} x}_2 \phi(x) \,\rmd x \\
    &= \E_{x \sim N(0, \Sigma^{-1})}[ \norm{x}_2 ] \\
    &\leq \sqrt{\tr(\Sigma^{-1})}.
\end{align*}
The last inequality above is another application of Jensen's inequality.
Therefore, combining the inequalities thus far,
and using the $L$-Lipschitz property of $f$:
\begin{align*}
    \tvnorm{\sfP_{X_1} - \sfP_{Y_1}} &\leq \frac{\sqrt{\tr(\Sigma^{-1})}}{2} \E_{\pi(X_0, Y_0)}\left[ \norm{f(X_0) - f(Y_0)}_2  \right] \\
    &\leq \frac{L \sqrt{\tr(\Sigma^{-1})}}{2} \E_{\pi(X_0, Y_0)}[ \norm{X_0 - Y_0}_2 ].
\end{align*}
Since the coupling $\pi$ of $(X_0, Y_0)$ was arbitrary, the result now follows by taking the infimum of the right hand side
over all valid couplings.
\end{proof}




\section{Recovering \citet{ziemann2022single} via boundedness}
\label{sec:appendix:bdd}
Here  we show how to recover the results for mixing systems from \citet[Theorem 1 combined with Proposition 2]{ziemann2022single}, corresponding to $\alpha=1$. This rests on the observation that $(B^2,1)$-hypercontractivity is automatic by $B$-boundedness.
\begin{corollary}
\label{corr:alpha1corr}
Suppose that $\mathscr{F}_\star$ is star-shaped and $B$-bounded. Fix also $p \in \mathbb{R}_+$ and $q\in (0,2)$ and suppose further that $\mathscr{F}_\star$ satisfies condition (\ref{eq:nonparametric_simple}). Then we have that:
\begin{multline}
\label{eq:alpha1corrstmt}
    \E \|\widehat f - f_\star\|_{L^2}^2 \leq 8 \E \mathsf{M}_T(\mathscr{F}_\star)   +\frac{1}{\sqrt{8}} \left( 16B^2 p \opnorm{ \Gamma_{\mathsf{dep}}(\mathsf{P}_X)}^2 \right)^{2/q}T^{-2/(2+q)}\\+\exp \left(\frac{ -T^{q/(2+q)} }{16B^2 \opnorm{ \Gamma_{\mathsf{dep}}(\mathsf{P}_X)}^2 } \right). 
\end{multline}
\end{corollary}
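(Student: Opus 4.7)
The strategy is to instantiate \Cref{thm:themainthm} with $\alpha=1$ and a constant hypercontractivity parameter $C\equiv B^2$. The key observation is that $B$-boundedness forces trajectory $(B^2,1)$-hypercontractivity on any subset of $\mathscr{F}_\star$: for any $f$ with $\|f\|_\infty \leq B$ one has $\|f(X_t)\|_2^4 \leq B^2\|f(X_t)\|_2^2$ pointwise, hence
\[
\E\Big[\tfrac{1}{T}\textstyle\sum_{t=0}^{T-1}\|f(X_t)\|_2^4\Big] \leq B^2\, \E\Big[\tfrac{1}{T}\textstyle\sum_{t=0}^{T-1}\|f(X_t)\|_2^2\Big],
\]
which is exactly \Cref{def:hypconrelax} with $\alpha=1$. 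With $\alpha=1$ the factor $r^{4-2\alpha}$ in the tail of \Cref{thm:themainthm} collapses to $r^2$, so for any $r \in (0,B]$,
\[
\E\|\widehat f - f_\star\|_{L^2}^2 \leq 8\E\mathsf{M}_T(\mathscr{F}_\star) + r^2 + B^2|\mathscr{F}_r|\exp\!\left(-\frac{Tr^2}{8B^2\opnorm{\Gamma_{\mathsf{dep}}(\mathsf{P}_X)}^2}\right).
\]

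The next step is to control $\log|\mathscr{F}_r|$ via the metric entropy assumption (\ref{eq:nonparametric_simple}). Since $\partial B(r)\subseteq \mathscr{F}_\star$, any $r/(2\sqrt{8})$-cover of $\mathscr{F}_\star$ (in $\|\cdot\|_\infty$) yields an external $r/(2\sqrt{8})$-cover of $\partial B(r)$, which by the standard external-to-internal conversion gives an internal $r/\sqrt{8}$-net $\mathscr{F}_r$ with $\log|\mathscr{F}_r| \leq p(2\sqrt{8}/r)^q$. Substituting this bound into the previous display yields
\[
\E\|\widehat f - f_\star\|_{L^2}^2 \leq 8\E\mathsf{M}_T(\mathscr{F}_\star) + r^2 + B^2\exp\!\left( p\left(\frac{2\sqrt{8}}{r}\right)^{\!q} - \frac{Tr^2}{8B^2\opnorm{\Gamma_{\mathsf{dep}}(\mathsf{P}_X)}^2}\right).
\]

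All that remains is to tune $r$ so as to arrive at (\ref{eq:alpha1corrstmt}). The natural nonparametric scaling is $r^2 \asymp T^{-2/(2+q)}$, multiplied by an appropriate power of $B^2 p \opnorm{\Gamma_{\mathsf{dep}}(\mathsf{P}_X)}^2$ chosen so that the positive part $p(2\sqrt{8}/r)^q$ of the exponent is absorbed by \emph{half} of the negative part $Tr^2/(16 B^2\opnorm{\Gamma_{\mathsf{dep}}(\mathsf{P}_X)}^2)$; what is left over is the claimed $\exp\!\big(-T^{q/(2+q)}/(16B^2\opnorm{\Gamma_{\mathsf{dep}}(\mathsf{P}_X)}^2)\big)$ term, while the $r^2$ contribution is precisely the polynomial term of (\ref{eq:alpha1corrstmt}). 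I expect no probabilistic obstacles beyond \Cref{thm:themainthm} itself; the only real work is careful constant-tracking through the external/internal cover reduction and the balancing equation, which is entirely arithmetic.
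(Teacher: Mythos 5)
Your proposal is correct and follows essentially the same route as the paper: $B$-boundedness gives trajectory $(B^2,1)$-hypercontractivity for free, \Cref{thm:themainthm} is applied with $\alpha=1$, the covering bound \eqref{eq:nonparametric_simple} controls $|\mathscr{F}_r|$, and $r=cT^{-1/(2+q)}$ is tuned so the entropy term absorbs half the negative exponent. The only (immaterial) deviation is your more careful external-to-internal cover conversion, which costs a factor $2^q$ inside the entropy bound and hence a constant factor in the middle term, whereas the paper bounds $\calN_\infty(\scrF_\star, r/\sqrt{8})$ directly by $p(\sqrt{8}/r)^q$.
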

The first two terms in inequality (\ref{eq:alpha1corrstmt}) are both of order $T^{-2/(2+q)}$ if $\opnorm{\Gamma_{\mathsf{dep}}}^2=O(1)$. Note that, without further control of the moments of $f\in \mathscr{F}_\star$, the bound in Theorem~\ref{thm:themainthm} thus degrades by a factor of the dependency matrix through the second term. 

\paragraph{Proof of Corollary~\ref{corr:alpha1corr}}
Fix $c>0$ to be determined later and choose $r=c T^{-1/(2+q)}$. We find:
\begin{equation*}
    \begin{aligned}
    & B^2 \mathcal{N}_\infty(\mathscr{F}_\star,r/\sqrt{8}) \exp \left( \frac{-T r^{2} }{8B^2 \opnorm{ \Gamma_{\mathsf{dep}}(\mathsf{P}_X)}^2 } \right)\\
     & \leq \exp\left(p \left(\frac{\sqrt{8}}{r}\right)^q - \frac{T r^{2} }{8B^2 \opnorm{ \Gamma_{\mathsf{dep}}(\mathsf{P}_X)}^2 }  \right) &&(\textnormal{Condition~(\ref{eq:nonparametric_simple})})\\
     &= \exp\left( \left[p \left(\frac{\sqrt{8}}{c}\right)^q  - \frac{1}{8B^2 \opnorm{ \Gamma_{\mathsf{dep}}(\mathsf{P}_X)}^2 }\right] T^{q/(2+q)}  \right). && (r=c T^{-1/(2+q)})
    \end{aligned}
\end{equation*}
Hence we may solve for $c = \frac{1}{\sqrt{8}} \left( 16B^2 p \opnorm{ \Gamma_{\mathsf{dep}}(\mathsf{P}_X)}^2 \right)^{1/q}$ in
\begin{align*}
    p \left(\frac{\sqrt{8}}{c}\right)^q  = \frac{1}{16B^2 \opnorm{ \Gamma_{\mathsf{dep}}(\mathsf{P}_X)}^2 } 
\end{align*}
to arrive at the desired conclusion.\hfill $\blacksquare$

\section{Linear dynamical systems}
\label{sec:appendix:lds}

We define the truncated linear dynamics:
\begin{align}
    \bar{X}_{t+1} = A_\star \bar{X}_t + H \bar{V}_t, \:\: \bar{X}_0 = H \bar{V}_0, \:\: \bar{V}_t = V_t \ind\{\norm{V_t}_2 \leq R\}. \label{eq:LDS_dynamics_truncated}
\end{align}
We set $R = \sqrt{\dx} + \sqrt{2(1+\beta)\log{T}}$ where $\beta > 4$ is a free parameter.
Define the event $\calE$ as:
\begin{align}
    \calE := \left\{ \max_{0 \leq t \leq T-1} \norm{V_t}_2 \leq R \right\}. \label{eq:LDS_good_truncation_event}
\end{align}
Note that by the setting of $R$, we have $\Pr(\calE^c) \leq 1/T^{\beta}$ using standard Gaussian concentration results plus a union bound. Furthermore on $\calE$, the original GLM process
driven by Gaussian noise \eqref{eq:LDS_dynamics}
coincides with the truncated process \eqref{eq:LDS_dynamics_truncated}.
Let $\widehat{f}$ denote the LSE on the original process
\eqref{eq:LDS_dynamics_truncated}, and
let $\bar{f}$ denote the LSE on the truncated process \eqref{eq:LDS_dynamics_truncated}.
Hence:
\begin{align*}
    \E\norm{\widehat{f} - f_\star}_{L^2}^2 &= \E \norm{\widehat{f} - f_\star}_{L^2}^2 \ind\{\calE\} + \E \norm{\widehat{f} - f_\star}_{L^2}^2 \ind\{\calE^c\} \\
    &\leq \E\norm{\bar{f} - f_\star}_{L^2}^2 + \E \norm{\widehat{f} - f_\star}_{L^2}^2 \ind\{\calE^c\}.
\end{align*}
Let us now control the error term 
$\E \norm{\widehat{f} - f_\star}_{L^2}^2 \ind\{\calE^c\}$.
Since $X_t$ is a linear function of the Gaussian noise $\{W_t\}$ process, by \Cref{prop:gaussian_fourth_moment_bound} we have
$\E\norm{X_t}_2^4 \leq 3 (\E\norm{X_t}_2^2)^2$.
Write $\widehat{f}(x) = \widehat{A} x$, and put $\widehat{\Delta} = \widehat{A}-A_\star$.
We have:
\begin{align}
    \E \norm{\widehat{f} - f_\star}_{L^2}^2 \ind\{\calE^c\} &= \frac{1}{T} \sum_{t=0}^{T-1} \E\norm{\widehat{\Delta} X_t}_2^2 \ind\{\calE^c\} \stackrel{(a)}{\leq} \frac{4B^2}{T} \sum_{t=0}^{T-1} \E\norm{X_t}_2^2\ind\{\calE^c\} \nonumber \\
    &\stackrel{(b)}{\leq}  \frac{4B^2}{T^{1+\beta/2}} \sum_{t=0}^{T-1} \sqrt{ \E\norm{X_t}_2^4} 
    \stackrel{(c)}{\leq} \frac{4\sqrt{3} B^2}{T^{1+\beta/2}} \sum_{t=0}^{T-1} \E\norm{X_t}_2^2 \nonumber \\
    &= \frac{4\sqrt{3} B^2}{T^{1+\beta/2}} \sum_{t=0}^{T-1} \tr(\Gamma_t) 
    \stackrel{(d)}{\leq} \frac{4\sqrt{3}B^2 \tr(\Gamma_{T-1})}{T^{\beta/2}} 
    \stackrel{(e)}{\leq} \frac{4\sqrt{3}B^2 \opnorm{H}^2 \tau^2 \dx}{(1-\rho) T^{\beta/2}}. \label{eq:lds_fhat_error_term}
\end{align}
Above, (a) follows from the definition 
of $\scrF$,
(b) follows from Cauchy-Schwarz,
(c) uses the hypercontractivity bound 
$\E\norm{X_t}_2^4 \leq 3 (\E\norm{X_t}_2^2)^2$,
(d) uses the fact that $\Gamma_t$ is monotonically
increasing in the Loewner order,
and (e) uses the following bound on $\tr(\Gamma_{T-1})$ using the $(\tau,\rho)$-stability of $A_\star$:
\begin{align*}
    \tr(\Gamma_{T-1}) \leq \frac{\opnorm{H}^2 \tau^2 \dx}{1-\rho^2} \leq \frac{\opnorm{H}^2 \tau^2 \dx}{1-\rho}.
\end{align*}

The remainder of the proof is to bound
the error of the LSE $\bar{f}$ using \Cref{thm:themainthm}.
This involves two main steps:
showing the trajectory hypercontractivity condition
\Cref{def:hypconrelax} holds, and
bounding the dependency matrix
$\opnorm{\Gammadep(\Pxbar)}$ (cf.~\Cref{def:depmat}), where $\Pxbar$
denotes the joint distribution of the process $\{\bar{X}_t\}_{t=0}^{T-1}$.
Before we proceed, we define some reoccurring constants:
\begin{align}
    \mu \triangleq \lambda_{\min}(\Gamma_{\kappa - 1}), \:\: B_{\bar{X}} \triangleq \frac{\opnorm{H} \tau (\sqrt{\dx} + \sqrt{2(1+\beta)\log{T}})}{1-\rho}. \label{eq:LDS_constants}
\end{align}

\subsection{Trajectory hypercontractivity for truncated LDS}
\begin{proposition}
\label{prop:LDS_traj_hypercontractivity}
Suppose that $T \geq \max\{6, 2\kappa\}$.
The pair $(\scrF_\star, \Pxbar)$ with $\scrF$ given in \eqref{eq:LDS_function_class}
and $\Pxbar$ as the joint distribution of $\{\bar{X}_t\}_{t=0}^{T-1}$ from \eqref{eq:LDS_dynamics_truncated}
satisfies the $(C_{\mathsf{LDS}}, 2)$-trajectory hypercontractivity condition with $C_{\mathsf{LDS}}= \frac{108\tau^4 \opnorm{H}^4}{(1-\rho)^2 \mu^2}$.
\end{proposition}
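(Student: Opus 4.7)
The plan is to reduce the trajectory $(C,2)$-hypercontractivity condition to a uniform ratio bound in the matrix parameter $\Delta$, upper bound the fourth moment by comparison with the untruncated Gaussian process, and lower bound the second moment via approximate isotropy of the truncated noise combined with $\kappa$-step controllability. \textbf{Step 1 (Reduction).} Every $f \in \scrF_\star$ is of the form $f(x) = \Delta x$ with $\Delta \in \R^{\dx \times \dx}$, because $\scrF$ in \eqref{eq:LDS_function_class} is affine in $A$. Since trajectory hypercontractivity is a scale-invariant ratio, the Frobenius norm constraint on $\Delta$ is irrelevant, and what must be shown is the uniform bound
\[ \frac{\tfrac{1}{T}\sum_{t=0}^{T-1}\E\|\Delta \bar X_t\|_2^4}{\left(\tfrac{1}{T}\sum_{t=0}^{T-1}\E\|\Delta \bar X_t\|_2^2\right)^2} \leq C_{\mathsf{LDS}}\quad \forall \Delta \neq 0. \]
I aim at an upper bound of order $\opnorm{H}^4\tau^4\|\Delta\|_F^4/(1-\rho)^2$ on the numerator and a lower bound of order $\mu^2\|\Delta\|_F^4$ on the squared denominator.

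\textbf{Step 2 (Upper bound via Gaussian comparison).} Unrolling the truncated recursion yields $\bar X_t = \sum_{s=0}^{t} A_\star^s H \bar V_{t-s}$, so $\|\Delta \bar X_t\|_2^2 = \bar V_{0:t}^\T M_t \bar V_{0:t}$ for an explicit positive semidefinite block matrix $M_t$ with blocks $H^\T (A_\star^{s})^\T \Delta^\T \Delta A_\star^{s'} H$. Since the truncation $\bar V_t = V_t \ind\{\|V_t\|_2 \leq R\}$ is over a set symmetric about the origin and $\{\bar V_t\}_t$ are iid copies of $\bar V$, \Cref{prop:gaussian_truncate_fourth_moment} yields $\E[(\bar V_{0:t}^\T M_t \bar V_{0:t})^2] \leq \E[(V_{0:t}^\T M_t V_{0:t})^2] = \E\|\Delta X_t\|_2^4$, where $X_t$ is the untruncated LDS state from \eqref{eq:LDS_dynamics}. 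Since $X_t \sim N(0,\Gamma_t)$, \Cref{prop:gaussian_fourth_moment_bound} then yields $\E\|\Delta X_t\|_2^4 \leq 3 (\tr(\Delta^\T \Delta \Gamma_t))^2 \leq 3\opnorm{\Gamma_t}^2 \|\Delta\|_F^4$. The $(\tau,\rho)$-stability bound $\opnorm{\Gamma_t} \leq \opnorm{H}^2 \tau^2/(1-\rho^2) \leq \opnorm{H}^2 \tau^2/(1-\rho)$ is uniform in $t$, finishing this step.

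\textbf{Step 3 (Lower bound via approximate isotropy and controllability).} The hypothesis $T \geq 6$ and the setting $R = \sqrt{\dx} + \sqrt{2(1+\beta)\log T}$ with $\beta > 4$ give $\Pr(\|V_t\|_2 > R) \leq 1/T^\beta < 1/12$, so \Cref{prop:trunc_gauss_approximate_isotropic} implies $\E[\bar V_t \bar V_t^\T] \succcurlyeq \tfrac{1}{2} I$. By mutual independence of the $\{\bar V_t\}_t$,
\[ \E[\bar X_t \bar X_t^\T] = \sum_{s=0}^{t} A_\star^s H\, \E[\bar V_{t-s}\bar V_{t-s}^\T]\, H^\T (A_\star^s)^\T \succcurlyeq \tfrac{1}{2}\Gamma_t. \]
For $t \geq \kappa - 1$, $\kappa$-step controllability gives $\Gamma_t \succcurlyeq \Gamma_{\kappa-1} \succcurlyeq \mu I$, hence $\E\|\Delta \bar X_t\|_2^2 \geq (\mu/2)\|\Delta\|_F^2$. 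Averaging over $t \in \{0,\ldots,T-1\}$ and using $T \geq 2\kappa$ (so that $T - \kappa + 1 \geq T/2$) produces the lower bound $(\mu/4)\|\Delta\|_F^2$ on the time-averaged second moment. Squaring and dividing by the numerator bound from Step 2 gives the required ratio, up to an absolute constant absorbed into $C_{\mathsf{LDS}}$.

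\textbf{The main obstacle.} The delicate step is controlling the fourth moment under the truncated (and therefore non-Gaussian) noise: neither Isserlis' formula nor the clean Gaussian hypercontractivity inequality $\E[(w^\T M w)^2] \leq 3 (\E w^\T M w)^2$ applies directly. \Cref{prop:gaussian_truncate_fourth_moment} is designed precisely to bypass this obstruction by exploiting the PSD block structure of $M_t$ coming from the linear unrolling of the dynamics, dominating the truncated quadratic form by its untruncated Gaussian counterpart. Once this is in place, the remaining ingredients---Gaussian hypercontractivity, $(\tau,\rho)$-stability, and $\kappa$-step controllability---are essentially routine manipulations.
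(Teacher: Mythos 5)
Your proposal is correct and follows essentially the same route as the paper's proof: both rely on \Cref{prop:gaussian_truncate_fourth_moment} and \Cref{prop:gaussian_fourth_moment_bound} to dominate the truncated fourth moment by $3\tr(\Delta^\T\Delta\,\Gamma_t)^2$, and on \Cref{prop:trunc_gauss_approximate_isotropic} together with $\kappa$-step controllability and $T \geq 2\kappa$ for the second-moment lower bound. The only difference is bookkeeping in the final step --- the paper compares $\tr(\Delta^\T\Delta\,\Gamma_{T-1})$ to $\tr(\Delta^\T\Delta\,\Gamma_{\lfloor T/2\rfloor})$ via $\opnorm{\Gamma_{\lfloor T/2\rfloor}^{-1}\Gamma_{T-1}}$ whereas you decouple into $\opnorm{\Gamma_t}\norm{\Delta}_F^2$ and $\mu\norm{\Delta}_F^2$ directly --- and your version even yields a marginally smaller constant ($48$ versus $108$), so the claimed $C_{\mathsf{LDS}}$ follows a fortiori.
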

\begin{proof}
Fix any size-conforming matrix $M$.
Let the noise process $\{\bar{V}_t\}_{t=0}^{T-1}$
be stacked into a noise vector $\bar{V}_{0:T-1} \in \R^{\dx T}$.
Observe that we can write $M X_t = M T_t \bar{V}_{0:T-1}$
for some matrix $T_t$.
We invoke the comparison inequality in \Cref{prop:gaussian_truncate_fourth_moment} 
followed by the Gaussian fourth moment identity in \Cref{prop:gaussian_fourth_moment_bound} to conclude that:
\begin{align*}
    \E\norm{M \bar{X}_t}_2^4 = \E\norm{M T_t \bar{V}_{0:T-1}}_2^4 \leq \E\norm{M X_t}_2^4 \leq 3 (\E\norm{M X_t}_2^2)^2 = 3 \tr(M^\T M \Gamma_t)^2.
\end{align*}
By monotonicity of $\Gamma_t$ and the assumption $T \geq 6$:
\begin{align}
    \frac{1}{T} \sum_{t=0}^{T-1} \Gamma_t \succcurlyeq \frac{1}{T}\sum_{t=\floor{T/2}}^{T-1} \Gamma_t \succcurlyeq  \frac{T-\floor{T/2}}{T} \Gamma_{\floor{T/2}} \succcurlyeq \frac{1}{3} \Gamma_{\floor{T/2}}. \label{eq:LDS_traj_hyp_lower_avg_gram}
\end{align}
Since $T \geq 2\kappa$, the inequality $\Gamma_{\floor{T/2}} \succcurlyeq \Gamma_{\kappa-1}$ holds, and therefore
$\Gamma_{\floor{T/2}}$ is invertible since $(A_\star, H)$ is
$\kappa$-step controllable.
Therefore:
\begin{align*}
    \frac{1}{T}\sum_{t=0}^{T-1} \E\norm{M \bar{X}_t}_2^4 &\leq 3 \tr(M^\T M \Gamma_{T-1})^2 \\
    &= 3 \tr( M \Gamma_{\floor{T/2}}^{1/2} \Gamma_{\floor{T/2}}^{-1/2} \Gamma_{T-1} \Gamma_{\floor{T/2}}^{-1/2} \Gamma_{\floor{T/2}}^{1/2} M^\T  )^2 \\
    &\leq 3 \opnorm{ \Gamma_{\floor{T/2}}^{-1}  \Gamma_{T-1} }^2\tr(M^\T M \Gamma_{\floor{T/2}})^2 \\
    &\leq 27\opnorm{ \Gamma_{\floor{T/2}}^{-1}  \Gamma_{T-1} }^2 \tr\left( M^\T M \cdot \frac{1}{T} \sum_{t=0}^{T-1} \Gamma_t \right)^2 &&\text{using \eqref{eq:LDS_traj_hyp_lower_avg_gram}} \\
    &= 27\opnorm{ \Gamma_{\floor{T/2}}^{-1}  \Gamma_{T-1} }^2 \left( \frac{1}{T} \sum_{t=0}^{T-1} \E\norm{M X_t}_2^2 \right)^2 \\
    &\leq 108 \opnorm{ \Gamma_{\floor{T/2}}^{-1}  \Gamma_{T-1} }^2 \left( \frac{1}{T} \sum_{t=0}^{T-1} \E\norm{M \bar{X}_t}_2^2 \right)^2 &&\text{using \Cref{prop:trunc_gauss_approximate_isotropic}}.
\end{align*}
Since the matrix $M$ is arbitrary, the claim follows
using the following bound for $\opnorm{\Gamma_{\floor{T/2}}^{-1} \Gamma_{T-1}}^2$:
\begin{align*}
    \opnorm{\Gamma_{\floor{T/2}}^{-1} \Gamma_{T-1}}^2 \leq \frac{\tau^4 \opnorm{H}^4}{(1-\rho^2)^2 \mu^2} \leq \frac{\tau^4 \opnorm{H}^4}{(1-\rho)^2 \mu^2}.
\end{align*}
\end{proof}

\subsection{Bounding the dependency matrix for truncated LDS}

We control $\opnorm{\Gammadep(\Pxbar)}$ by a direct computation of the mixing
properties of the original Gaussian process \eqref{eq:LDS_dynamics}.
\begin{proposition}
\label{prop:LDS_ergodicity}
Consider the process $\{\bar{X}_t\}_{t \geq 0}$ from \eqref{eq:LDS_dynamics_truncated}, and let $\Pxbar$ denote the
joint distribution of $\{\bar{X}_t\}_{t=0}^{T-1}$.
We have that:
\begin{align*}
     \opnorm{\Gammadep(\Pxbar)} \leq 5\kappa + \frac{22}{1-\rho} \log\left(
     \frac{\tau^2}{4\mu}\left[ B_{\bar{X}}^2 + \frac{\dx \opnorm{H}^2}{1-\rho}\right]\right).
\end{align*}
\end{proposition}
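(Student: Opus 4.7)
The overall plan is to invoke Proposition~\ref{prop:final_gamma_bound} in order to replace the dependency coefficients of the truncated process by mixing coefficients of the original Gaussian LDS \eqref{eq:LDS_dynamics}, crucially retaining the essential supremum over the bounded set $\bar{\sfX}_t$. After applying that reduction, it suffices to control, for each $k \in \{1,\dots,T-1\}$, the quantity
$\max_{t}\esssup_{x \in \bar{\sfX}_t} \tvnorm{\sfP_{X_{t+k}}(\cdot \mid X_t = x) - \sfP_{X_{t+k}}}$,
and then sum its square root in $k$. A preliminary observation we will need is that $\bar{\sfX}_t$ is uniformly bounded in norm by $B_{\bar{X}}$: expanding $\bar{X}_t = \sum_{j=0}^{t} A_\star^{t-j} H \bar{V}_j$ with $\norm{\bar{V}_j}_2 \leq R$ and using $(\tau,\rho)$-stability gives exactly the $B_{\bar{X}}$ in \eqref{eq:LDS_constants}.

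The heart of the argument is a Gaussian TV computation. The $k$-step Markov kernel of the original process is $K_k(y,\cdot) \triangleq \sfP_{X_{t+k}}(\cdot\mid X_t = y) = N(A_\star^k y, \Gamma_{k-1})$, and $\sfP_{X_{t+k}} = \int K_k(y,\cdot)\, d\sfP_{X_t}(y)$. Convexity of total variation yields $\tvnorm{K_k(x,\cdot) - \sfP_{X_{t+k}}} \leq \int \tvnorm{K_k(x,\cdot) - K_k(y,\cdot)}\, d\sfP_{X_t}(y)$. Because the two kernels $K_k(x,\cdot)$ and $K_k(y,\cdot)$ share the covariance $\Gamma_{k-1}$, Pinsker's inequality combined with the closed-form KL divergence for equal-covariance Gaussians gives $\tvnorm{K_k(x,\cdot)-K_k(y,\cdot)} \leq \tfrac{1}{2}\bignorm{\Gamma_{k-1}^{-1/2} A_\star^k(x-y)}_2$. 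For $k \geq \kappa$, $\kappa$-step controllability implies $\Gamma_{k-1}^{-1} \preccurlyeq (1/\mu) I$, yielding the clean bound $\tfrac{\tau \rho^k}{2\sqrt{\mu}}\norm{x-y}_2$. Integrating over $y\sim \sfP_{X_t}$, bounding $\E \norm{X_t}_2 \leq \sqrt{\tr(\Gamma_t)} \leq \tau\opnorm{H}\sqrt{\dx/(1-\rho)}$, and using $\norm{x}_2 \leq B_{\bar X}$, we conclude that for $k \geq \kappa$, $\tvnorm{\cdot} \leq D \rho^k$ for an explicit $D$ of order $\tfrac{\tau}{\sqrt{\mu}}\bigl(B_{\bar X} + \tau\opnorm{H}\sqrt{\dx/(1-\rho)}\bigr)$, whose square fits into the logarithm $\log\bigl(\tfrac{\tau^2}{4\mu}[B_{\bar X}^2 + \dx\opnorm{H}^2/(1-\rho)]\bigr)$ appearing in the claim.

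For $k < \kappa$ we simply use the trivial bound $\tvnorm{\cdot}\leq 1$, contributing at most $\kappa$ to the sum. For $k \geq \kappa$ we capping by the trivial bound, $\sqrt{\tvnorm{\cdot}} \leq \min\{1, \sqrt{D}\rho^{k/2}\}$. We split the remaining sum at a threshold $k^\star = \max\{\kappa,\,\lceil \log(D)/(1-\rho)\rceil\}$ beyond which $\sqrt{D}\rho^{k/2}\leq 1$; the contribution from $k \leq k^\star$ is bounded by $k^\star$, and the tail $\sum_{k\geq k^\star} \sqrt{D}\rho^{k/2}$ is bounded by $2/(1-\rho)$ using $1-\sqrt{\rho}\geq (1-\rho)/2$. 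Converting $\log(1/\rho) \geq 1-\rho$ translates $k^\star$ into the logarithm in the claim, and collecting the final constants yields the stated bound $5\kappa + \tfrac{22}{1-\rho}\log(\cdots)$.

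The principal obstacle is a careful constant-tracking exercise: one must accumulate the factor $3$ coming from Proposition~\ref{prop:final_gamma_bound}, the $\sqrt{2}$ factor from the sum, the $k^\star$ threshold, the tail $2/(1-\rho)$, and the slack in $\log(1/\rho) \geq 1 - \rho$, so that everything fits under the clean coefficients $5$ and $22$. A secondary concern is ensuring the Pinsker step is only invoked when $k \geq \kappa$ so that $\Gamma_{k-1}$ is genuinely invertible; the short-range $k < \kappa$ regime must be swept into the $5\kappa$ term via the trivial bound rather than any sharper estimate.
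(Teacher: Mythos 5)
Your proposal is correct and follows the same overall scaffolding as the paper's proof: reduce to the original Gaussian process via Proposition~\ref{prop:final_gamma_bound} while keeping the essential supremum over the bounded truncated states, use the trivial bound for $k<\kappa$, establish geometric decay of the TV coefficients for $k\geq\kappa$, and split the sum at a threshold chosen so the tail contributes $O(1/(1-\rho))$. The one step you do genuinely differently is the Gaussian TV estimate. The paper computes the KL divergence between $N(A_\star^k x,\Gamma_{k-1})$ and the marginal $N(0,\Gamma_{t+k})$ directly — two Gaussians with \emph{different} covariances — and controls the resulting trace and $\log\det$ terms via the covariance comparison $\opnorm{\Gamma_{t+k}-\Gamma_{k-1}}\leq \frac{\opnorm{H}^2\tau^2}{1-\rho^2}\rho^{2k}$. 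You instead write the marginal as a mixture of kernels, use convexity of the TV norm, and reduce to an equal-covariance mean-shift KL, paying for it with the moment bound $\E\norm{X_t}_2\leq\sqrt{\tr(\Gamma_t)}$. Your route is slightly more elementary (no $\log\det$ bookkeeping) and is in the same spirit as the coupling-plus-smoothing argument the paper uses for the GLM case (Lemma~\ref{prop:wasserstein_bounds_tv_when_gaussian}); the paper's route gives a marginally cleaner constant. Indeed, note that your $D^2\lesssim\frac{\tau^2}{2\mu}\bigl(B_{\bar{X}}^2+\tau^2\opnorm{H}^2\dx/(1-\rho)\bigr)$ carries an extra factor of $2$ and an extra $\tau^2$ on the second term relative to the quantity $\frac{\tau^2}{4\mu}\bigl[B_{\bar{X}}^2+\frac{\dx\opnorm{H}^2}{1-\rho}\bigr]$ appearing inside the stated logarithm, so you do not literally land on the advertised constant; you must either absorb the discrepancy into the generous slack in the coefficient $22$ (your effective coefficient on $\log(D^2)/(1-\rho)$ is close to $1$) or accept a mildly larger argument of the logarithm. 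This is a cosmetic constant-tracking issue, not a gap.
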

\begin{proof}
We first construct an almost sure bound on the process
$\{\bar{X}_t\}_{t \geq 0}$. Indeed, for any $t \geq 0$,
using the $(\tau,\rho)$-stability of $A_\star$:
\begin{align*}
    \norm{\bar{X}_t}_2 \leq \frac{\opnorm{H} \tau R}{1-\rho} = \frac{\opnorm{H} \tau (\sqrt{\dx} + \sqrt{2(1+\beta)\log{T}})}{1-\rho} = B_{\bar{X}}.
\end{align*}
Also, by $(\tau,\rho)$-stability, we have for any indices $s \leq t$:
\begin{align}
    \opnorm{ \Gamma_s - \Gamma_t } = \bigopnorm{\sum_{k=s+1}^{t} A^k HH^\T (A^k)^\T } \leq \opnorm{H}^2 \tau^2 \sum_{k=s+1}^{t} \rho^{2k} \leq \frac{\opnorm{H}^2 \tau^2}{1-\rho^2} \rho^{2(s+1)}. \label{eq:LDS_covariance_ineq}
\end{align}
The marginal and conditional distributions of $\{X_t\}_{t \geq 0}$ are easily characterized.
We have that $X_t \sim N(0, \Gamma_t)$.
Furthermore, $X_{t}\mid X_0=x$ for $t \geq 1$ is distributed as $N(A^t x, \Gamma_{t-1})$.
So now for any $t \geq 0$ and $k \geq 1$:
\begin{align*}
    \sfP_{X_{t+k}}(\cdot \mid X_t=x) = N(A^k x, \Gamma_{k-1}), \:\: \sfP_{X_{t+k}} = N(0, \Gamma_{t+k}).
\end{align*}
Now suppose $k \geq \kappa$.
The matrices $\Gamma_{k-1}$ and $\Gamma_{t+k}$ will
both be invertible, so the two distributions are mutually absolutely continuous.
We can then use the closed-form expression for the KL-divergence between two multivariate Gaussians:
\begin{align*}
    &\mathsf{KL}(N(A^k x, \Gamma_{k-1}), N(0, \Gamma_{t+k})) \\
    &= \frac{1}{2}\left[ \tr(\Gamma_{t+k}^{-1} \Gamma_{k-1}) + x^\T (A^k)^\T \Gamma_{t+k}^{-1} A^k x - \dx + \log\det( \Gamma_{t+k} \Gamma_{k-1}^{-1} )  \right] \\
    &\leq \frac{1}{2} x^\T (A^k)^\T \Gamma_{t+k}^{-1} A^k x + \frac{\dx}{2}\log\opnorm{\Gamma_{t+k} \Gamma_{k-1}^{-1}} && \text{since } \Gamma_{k-1} \preccurlyeq \Gamma_{t+k} \\
    &\leq \frac{\tau^2 \rho^{2k} \norm{x}_2^2}{2\mu} + \frac{\dx}{2}\log\left(1 + \frac{\opnorm{\Gamma_{t+k} - \Gamma_{k-1}}}{\mu}\right) &&\text{using $(\tau,\rho)$-stability} \\
    &\leq \frac{\tau^2 \rho^{2k} \norm{x}_2^2}{2\mu} + \frac{\dx}{2}\log\left(1 + \frac{\opnorm{H}^2 \tau^2}{(1-\rho^2)\mu} \rho^{2k} \right) &&\text{using \eqref{eq:LDS_covariance_ineq}} \\
    &\leq \left[\frac{\tau^2 \norm{x}_2^2}{2\mu} + \frac{\dx \opnorm{H}^2 \tau^2}{2 (1-\rho^2) \mu} \right]\rho^{2k} &&\log(1+x) \leq x \,\forall x \geq 0.
\end{align*}
Hence by Pinsker's inequality~\citep[see e.g.][Lemma 2.5]{tsybakov2009introduction}, whenever $k \geq \kappa$:
\begin{align*}
    \tvnorm{ \sfP_{X_{t+k}}(\cdot \mid X_t=x) - \sfP_{X_{t+k}} } &\leq \sqrt{\mathsf{KL}(N(A^k x, \Gamma_{k-1}), N(0, \Gamma_{t+k}))/2} \\
    &\leq \sqrt{\frac{\tau^2 \norm{x}_2^2}{4\mu} + \frac{\dx \opnorm{H}^2 \tau^2}{4 (1-\rho^2) \mu} }\rho^{k}.
\end{align*}
By \Cref{prop:final_gamma_bound} (which we can invoke since we constrained $\beta \geq 2$), for any $\ell \in \N$:
\begin{align*}
    \opnorm{\Gammadep(\sfP_{\bar{X}})} &\leq 3 + \sqrt{2} \sum_{k=1}^{T-1} \max_{t=0,\dots,T-1-k} \esssup_{x \in \bar{\sfX}_t} \sqrt{\tvnorm{\sfP_{X_{t+k}}(\cdot \mid X_t=x) - \sfP_{X_{t+k}}}} \\
    &\leq 3 + \sqrt{2} (\kappa-1+\ell) + \sum_{k=\kappa+\ell}^{T-1}\left[\frac{\tau^2 B_{\bar{X}}^2}{4\mu} + \frac{\dx \opnorm{H}^2 \tau^2}{4 (1-\rho^2) \mu} \right]^{1/4} \rho^{k/2} \\
    &\leq 5(\kappa+\ell) + \left[\frac{\tau^2 B_{\bar{X}}^2}{4\mu} + \frac{\dx \opnorm{H}^2 \tau^2}{4 (1-\rho^2) \mu} \right]^{1/4} \frac{\rho^{(\kappa+\ell)/2}}{1-\rho^{1/2}}.
\end{align*}
Now, define $\psi \triangleq \frac{\tau^2 B_{\bar{X}}^2}{4\mu} + \frac{\dx \opnorm{H}^2 \tau^2}{4 (1-\rho^2) \mu}$.
We choose $\ell= \max\left\{\bigceil{\frac{\log(\psi^{1/4})}{1-\rho^{1/2}}}-\kappa, 0\right\}$,
so $\rho^{(\kappa+\ell)/2} \leq 1/\psi^{1/4}$.
With this choice of $\ell$
and the observation that $\inf_{x \in [0,1]} \frac{1-\sqrt{x}}{1-x} = \frac{1}{2}$,
\begin{align*}
    \opnorm{\Gammadep(\Pxbar)} \leq 5\kappa + \frac{11\log{\psi}}{4(1-\rho^{1/2})} \leq 5\kappa + \frac{22\log{\psi}}{1-\rho}.
\end{align*}
The claim now follows.
\end{proof}

\subsection{Finishing the proof of \Cref{prop:ldsprop}}

For what follows, $c_i$ will denote universal positive constants whose values remain unspecified.

For any $\e>0$ and $r>0$, we now construct an $\e$-covering of $\partial B(r)$ with
$\scrF_\star$ the offset class of $\scrF$ from \eqref{eq:LDS_function_class}. To this end, we let $\{A_1,\dots,A_N\}$ be a $\delta$-cover of $\scrA \triangleq \{A \in \mathbb{R}^{\dx \times \dx} \mid \| A\|_F \leq B \}$
for $\delta$ to be specified. By a volumetric argument we may choose $\{A_1,\dots,A_N\}$ such that $N \leq \left(1 +\frac{2B}{\delta} \right)^{\dx^2}$. 
Now, any realization of $\{\bar{X}_t\}$ will have norm less than
$B_{\bar{X}}$, where $B_{\bar{X}}$ is given by \eqref{eq:LDS_constants} and satisfies
\begin{align*}
    B_{\bar{X}} \leq c_0\frac{\opnorm{H} \tau (\sqrt{\dx} + \sqrt{(1+\beta)\log{T}})}{1-\rho}.
\end{align*}
Let $A \in \scrA$, and let $A_i$ denote
an element in the covering satisfying $\norm{A - A_i}_F \leq \delta$. For any $x$ satisfying $\norm{x}_2 \leq B_{\bar{X}}$:
\begin{align*}
    \norm{ (A_i x - A_\star x) - (A x - A_\star x) }_F = \norm{(A_i - A)x}_2 \leq \norm{A_i-A}_F\norm{x}_2 \leq \delta B_{\bar{X}}.
\end{align*}
Thus, it suffices to take $\delta = \e/B_{\bar{X}}$
to construct an $\e$-covering of $\scrF_\star$ over $\{\bar{X}_t\}$, which shows that
$\calN_\infty(\scrF_\star, \e) \leq \left(1 + \frac{2BB_{\bar{X}}}{\e}\right)^{\dx^2}$.
Since $\partial B(r) \subset \scrF_\star$, we have the following inequality~\citep[see e.g.][Exercise 4.2.10]{vershynin2018}:
\begin{align*}
    \calN_\infty(\partial B(r), \e) \leq \calN_\infty(\scrF_\star, \e/2) \leq \left(1 + \frac{4B B_{\bar{X}}}{\e}\right)^{\dx^2}.
\end{align*}
By \Cref{prop:LDS_traj_hypercontractivity}, $(\scrF_\star, \Pxbar)$ is $(C_{\mathsf{LDS}},2)$-hypercontractive for all $T \geq \max\{6,2\kappa\}$, with
\begin{align*}
    C_{\mathsf{LDS}} = \frac{108 \tau^4 \opnorm{H}^4}{(1-\rho)^2\mu^2}.
\end{align*}
Also by \Cref{prop:LDS_ergodicity}, 
\begin{align*}
    \opnorm{\Gammadep(\Pxbar)}^2 \leq c_1\kappa^2 + \frac{c_2}{(1-\rho)^2} \log^2\left(  \frac{\tau^2}{4\mu}\left[ B_{\bar{X}}^2 + \frac{\dx \opnorm{H}^2}{1-\rho}\right] \right) \triangleq \gamma^2.
\end{align*}
Since $\scrF_\star$ is convex and contains the zero function, it is also star-shaped.
Furthermore, on the truncated process \eqref{eq:LDS_dynamics_truncated},
the class $\scrF_\star$ is $2BB_{\bar{X}}$-bounded.
Invoking \Cref{thm:themainthm}, we thus have for every $r>0$ that
\begin{align}
    \E \|\bar{f}- f_\star\|_{L^2}^2 \leq 8 \E \bar{\sfM}_T(\mathscr{F}_\star)   +r^2+ 4B^2B_{\bar{X}}^2  \left(1 +\frac{4\sqrt{8} B B_{\bar{X}}}{r} \right)^{\dx^2} \exp \left( \frac{-T}{8C_{\mathsf{LDS}} \gamma^2 } \right). \label{eq:lds_truncated_LSE}
\end{align}
Here, the notation $\E \bar{\sfM}_T(\scrF_\star)$ is meant to emphasize
that the offset complexity is with respect to the truncated process
$\Pxbar$ and \emph{not} the original process $\Px$.
We now set $r^2 = \opnorm{H}^2 \dx^2/T$,
and compute a $T_0$ such that the third term in \eqref{eq:lds_truncated_LSE} is also bounded by $\opnorm{H}^2 \dx^2/T$.
To do this, it suffices to compute $T_0$ such that for all
$T \geq T_0$:
\begin{align*}
    T \geq c_3 C_{\mathsf{LDS}} \gamma^2 \dx^2 \log\left( \frac{T B B_{\bar{X}}}{\opnorm{H} \sqrt{\dx}} \right).
\end{align*}
Thus it suffices to set $T_0$ as (provided that $\beta$ is at most polylogarithmic in the problem constants---we later make such a choice):
\begin{align}
    T_0 = c_4 \frac{\tau^4 \opnorm{H}^4 \dx^2}{(1-\rho)^2\mu^2}\left[ \kappa^2 + \frac{1}{(1-\rho)^2}\right] \mathrm{polylog}\left( B, \dx, \tau, \opnorm{H}, \frac{1}{\mu}, \frac{1}{1-\rho},\right).
    \label{eq:lds_T0}
\end{align}
We do not attempt to compute the exact power of the 
polylog term; it can in principle be done via 
\citet[Lemma F.2]{du2021bilinear}.

Next, by \eqref{eq:lds_fhat_error_term},
$\E \norm{\widehat{f} - f_\star}_{L^2}^2 \ind\{\calE^c\} \leq \frac{4\sqrt{3}B^2 \opnorm{H}^2 \tau^2 \dx}{(1-\rho) T^{\beta/2}}$. Thus we also need to set $T_0$ large enough so that this term is bounded by $\opnorm{H}^2 \dx^2/T$. To do this, it suffices to constrain $\beta > 2$ and set
$T_0 \geq c_5 \left[ \frac{B^2 \tau^2}{1-\rho} \right]^{\frac{1}{\beta/2-1}}$. Hence, setting $\beta = \max\{4, c_6 \log{B}\}$ implies that 
\eqref{eq:lds_T0} suffices.

Let us now upper bound $\E \bar{\sfM}_T(\scrF_\star)$ by
$\E \sfM_T(\scrF_\star)$ plus $\opnorm{H}^2\dx^2/T$.
Recall the definition of $\calE$ from \eqref{eq:LDS_good_truncation_event}. 
We first write:
\begin{align*}
    \E \bar{\sfM}_T(\scrF_\star) &= \E \bar{\sfM}_T(\scrF_\star) \ind\{\calE\} + \E \bar{\sfM}_T(\scrF_\star) \ind\{\calE^c\} \\
    &= \E \sfM_T(\scrF_\star) \ind\{\calE\} + \E \bar{\sfM}_T(\scrF_\star) \ind\{\calE^c\} \\
    &\leq \E \sfM_T(\scrF_\star) + \E \bar{\sfM}_T(\scrF_\star) \ind\{\calE^c\}.
\end{align*}
The last inequality holds since it can be checked that
$\sfM_T(\scrF_\star) \geq 0$ (i.e., by lower bounding the supremum with the zero function which is in $\scrF_\star$ 
since $\scrF$ contains $f_\star$).
Furthermore, an elementary linear algebra calculation yields that we can upper bound $\bar{\sfM}_T(\scrF_\star)$ deterministically by:
\begin{align*}
    \bar{\sfM}_T(\scrF_\star) \leq \frac{4}{T} \bignorm{ \left(\left( \sum_{t=0}^{T-1} \bar{X}_t\bar{X}_t^\T \right)^{\dag}\right)^{1/2} \sum_{t=0}^{T-1} \bar{X}_t \bar{V}_t^\T H^\T }_F^2 \leq \frac{4}{T} \sum_{t=0}^{T-1} \norm{H \bar{V}_t}^2_2
\end{align*}
Here, the $\dag$ notation refers to the Moore-Penrose pseudo-inverse.
Therefore taking expectations:
\begin{align*}
    \E \bar{M}_T(\scrF_\star) \ind\{\calE^c\} &\leq \frac{4}{T} \sum_{t=0}^{T-1} \E \norm{H \bar{V}_t}_2^2 \ind\{\calE^c\} \leq \frac{4}{T} \sum_{t=0}^{T-1} \E \norm{H V_t}_2^2 \ind\{\calE^c\} \stackrel{(a)}{\leq} \frac{4}{T^{1 + \beta/2}} \sum_{t=0}^{T-1} \sqrt{\E\norm{H V_t}_2^4} \\
    &\stackrel{(b)}{\leq} \frac{4\sqrt{3}}{T^{1+\beta/2}} \sum_{t=0}^{T-1} \E\norm{H V_t}_2^2 = \frac{4\sqrt{3}\norm{H}_F^2}{T^{\beta/2}} \leq \frac{4\sqrt{3} \dx \opnorm{H}^2}{T^{\beta/2}}. 
\end{align*}
Here, (a) is Cauchy-Schwarz, and (b) follows from \Cref{prop:gaussian_fourth_moment_bound}.
This last term will be bounded by $\opnorm{H}^2 \dx^2/T$
as soon as $T \geq 4\sqrt{3}$, since we set $\beta \geq 4$.
The claim now follows. \hfill $\blacksquare$

\section{General linearized model dynamics}
\label{sec:appendix:glm}

\subsection{Comparison to existing results}
\label{sec:appendix:glm:comparison}
We first compare the results of \Cref{thm:glmthm} to
the existing bounds from \citet{sattar2020non,foster2020learning,kowshik2021near}.
Before doing so, we note that these existing results bound the loss
of specific gradient based algorithms.
On the other hand, \Cref{thm:glmthm} 
directly applies to the empirical risk minimizer of the square loss.
In general, the LSE optimization problem specialized to this setting is non-convex due to the composition of the square loss with the link function $\sigma$.
However, we believe it should be possible to show that the quasi-Newton method described
in \citet[Algorithm 1]{kowshik2021near} can be used to optimize the empirical
risk to precision of order $\opnorm{H}^2 \dx^2/T$, in which case a 
simple modification of \Cref{thm:themainthm} combined with the current analysis
in \Cref{thm:glmthm} would apply to bound the
excess risk of the final iterate of this algorithm. 
This is left to future work.


For our comparison, we will ignore all logarithmic factors, and assume any necessary 
burn-in times, remarking that the existing results all 
prescribe sharper burn-in times than \Cref{thm:glmthm}.
First, we compare with \citet[Corollary 6.2]{sattar2020non}.
In doing so, we will assume that $H = (1+\sigma) I$ for some $\sigma > 0$,
since this is the setting they study.
When $H$ is diagonal, 
\eqref{eq:GLM_parameter_rate} is actually invariant to the noise scale $\sigma$ 
which is the correct behavior: 
$\E\norm{\widehat{A}-A_\star}_F^2 \leq \tilde{O}(1) \frac{\dx^2}{\zeta^2 T}$.
On the other hand, \citet[Corollary 6.2]{sattar2020non} gives a high probability bound of
$\norm{\widehat{A} - A_\star}_F^2 \leq \tilde{O}(1) \frac{\sigma^2 \dx^2}{\zeta^4 (1-\rho)^3 T}$. Thus, \eqref{eq:GLM_parameter_rate} improves on this rate by not only a factor of $1/\zeta^2$, but also in moving the $1/(1-\rho)$ dependence into the log.
We note that their result seems to improve as $\sigma \to 0$, but
the probability of success also tends to $0$ as $\sigma \to 0$.

Next, we turn our attention to \citet[Theorem 2, fast rate]{foster2020learning}.
This result actually gives both in-sample excess risk and parameter recovery bounds.
For simplicity, we only compare to the parameter recovery bounds, as this is their sharper result. Their result yields a high probability bound that
$\norm{\widehat{A} - A_\star}_F^2 \leq \tilde{O}(1) \frac{\opnorm{H}^2 \opnorm{P_\star} \dx^2}{\zeta^4 (1-\rho) T}$. Again, we see \eqref{eq:GLM_parameter_rate} improve this rate by a factor of $1/\zeta^2$, and moves the dependence on $\opnorm{P_\star}$ and
$1/(1-\rho)$ into the logarithm. 
We note again, this rate seems to improve as $\opnorm{H} \to 0$, but the number of iterations $m$ of GLMtron needed tends to $\infty$ as
$\opnorm{H} \to 0$.
We conclude by noting that the rate of \citet{foster2020learning} does not 
have any burn-in times.

Finally, we compare to \citet[Theorem 1]{kowshik2021near}. We will assume that
$H = \sigma I$ again, as this is the setting of their work.
Their parameter recovery bound states that with high probability,
$\norm{\widehat{A}-A_\star}_F^2 \leq \tilde{O}(1) \frac{\sigma^2 \dx^2}{\zeta^2 T}$, which matches \eqref{eq:GLM_parameter_rate} up to the log factors. As noted previously, their logarithmic dependencies are sharper than ours. Furthermore, their result can also handle
the unstable regime when $\rho \leq 1 + O(1/T)$, which ours cannot.
However, we note that \Cref{thm:glmthm} also bounds $L^2$ excess risk
with logarithmic dependence on $1/(1-\rho)$, which is not an immediate consequence of
parameter error bounds. Indeed, a na{\"{i}}ve upper bound using the $1$-Lipschitz property of the link function yields:
$\norm{\widehat{f}-f_\star}_{L^2}^2 \leq \opnorm{\widehat{A}-A_\star}^2 \frac{1}{T}\sum_{t=0}^{T-1}\E\norm{X_t}_2^2 \lesssim \opnorm{\widehat{A}-A_\star}^2 \frac{1}{(1-\rho)^2}$.
Hence, even if the parameter error only depends logarithmically on $1/(1-\rho)$, 
it does not immediately translate over to excess risk.


\subsection{Proof of \Cref{thm:glmthm}}
We first turn our attention to controlling the
states $\{X_t\}_{t \geq 0}$ in expectation.
Note that the Lyapunov assumption in \Cref{assumption:glm}
implies that for every $x \in \R^{\dx}$:
\begin{align}
    \norm{\sigma(A_\star x)}^2_{P_\star} \leq \rho \norm{x}^2_{P_\star} \label{eq:glm_one_step_contraction},
\end{align}
and hence the function $x \mapsto \norm{x}_{P_\star}^2$ is a Lyapunov function (recall that $P_\star \succcurlyeq I$)
which certifies exponential stability to the origin
for the deterministic system $x_+ = \sigma(A_\star x)$.
\begin{proposition}
\label{prop:glm_X_fourth_moment}
Consider the GLM process $\{X_t\}_{t \geq 0}$ from \eqref{eq:GLM_dynamics}.
Under \Cref{assumption:glm}:
\begin{align}
    \sup_{t \in \N} \E\norm{X_t}_2^4 \leq B_X^4, \:\: B_X \triangleq \frac{12 \sqrt{2} \opnorm{H} \opnorm{P_\star}^{1/2} \sqrt{\dx}}{1-\rho}. \label{eq:glm_B_X}
\end{align}
\end{proposition}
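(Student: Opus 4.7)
The plan is to exploit the Lyapunov property \eqref{eq:glm_one_step_contraction} to unroll the dynamics in the $P_\star$-weighted norm and then apply Minkowski's inequality in $L^4$, reducing the task to bounding the fourth moment of a single Gaussian noise term. Concretely, from \eqref{eq:glm_one_step_contraction} and the triangle inequality in the $\norm{\cdot}_{P_\star}$ norm, I will first observe that
\[
\norm{X_{t+1}}_{P_\star} \leq \norm{\sigma(A_\star X_t)}_{P_\star} + \norm{HV_t}_{P_\star} \leq \sqrt{\rho}\,\norm{X_t}_{P_\star} + \norm{HV_t}_{P_\star}.
\]
Iterating this inequality from $X_0 = HV_0$ then yields the deterministic bound
\[
\norm{X_t}_{P_\star} \leq (\sqrt{\rho})^t \norm{HV_0}_{P_\star} + \sum_{s=0}^{t-1} (\sqrt{\rho})^{t-1-s}\norm{HV_s}_{P_\star}.
\]

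Next I will take $L^4$ norms and apply Minkowski's inequality to pass the $L^4$ norm through the sum, giving
\[
\bigl(\E\norm{X_t}_{P_\star}^4\bigr)^{1/4} \leq \left[(\sqrt{\rho})^t + \sum_{s=0}^{t-1}(\sqrt{\rho})^{t-1-s}\right] \bigl(\E\norm{HV_0}_{P_\star}^4\bigr)^{1/4},
\]
where I have used that $\{V_s\}$ are identically distributed. For the Gaussian fourth moment, since $HV_0 \sim N(0, HH^\T)$, \Cref{prop:gaussian_fourth_moment_bound} gives $\E\norm{HV_0}_{P_\star}^4 = \E[(V_0^\T H^\T P_\star H V_0)^2] \leq 3(\tr(H^\T P_\star H))^2 \leq 3 \opnorm{P_\star}^2 \opnorm{H}^4 \dx^2$.

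The geometric series is bounded uniformly in $t$ by $\frac{2}{1-\sqrt{\rho}} \leq \frac{4}{1-\rho}$, using the elementary inequality $1-\sqrt{\rho} \geq (1-\rho)/2$. Combining the displays yields a uniform bound of the form $\E\norm{X_t}_{P_\star}^4 \leq \frac{C \opnorm{P_\star}^2 \opnorm{H}^4 \dx^2}{(1-\rho)^4}$ for an explicit universal constant $C$, and then I will pass to the Euclidean norm via $\norm{x}_2 \leq \norm{x}_{P_\star}$ (which holds since $P_\star \succcurlyeq I$ by \Cref{assumption:glm}). Verifying that the resulting constant fits under $(12\sqrt{2})^4$ is straightforward, since $(1-\rho)^2 \geq (1-\rho)^4$ leaves substantial numerical slack. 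I do not expect any serious obstacle: the main delicate point is simply choosing to do the contraction argument in the $P_\star$ norm rather than the Euclidean norm, since only the former satisfies the one-step contraction \eqref{eq:glm_one_step_contraction}.
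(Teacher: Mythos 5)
Your proof is correct, and it takes a genuinely different route from the paper's. The paper sets up a recursion directly on the fourth moment $\E\norm{X_{t+1}}_{P_\star}^4$ via the inequality $(a+b)^4 \leq (1+\e)^3 a^4 + (1+1/\e)^3 b^4$, combined with the one-step contraction \eqref{eq:glm_one_step_contraction} and \Cref{prop:gaussian_fourth_moment_bound}; it then splits into the cases $\rho \in [1/2,1)$ and $\rho \in [0,1/2)$ with different choices of $\e$ and unrolls each recursion separately. You instead unroll the dynamics pathwise using the triangle inequality in $\norm{\cdot}_{P_\star}$ and then pass to fourth moments via Minkowski's inequality in $L^4$, which collapses everything to a single geometric series times $(\E\norm{HV_0}_{P_\star}^4)^{1/4}$. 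Both arguments rest on the same three ingredients (the $\sqrt{\rho}$-contraction in the $P_\star$-norm, the Gaussian fourth-moment bound, and $P_\star \succcurlyeq I$), but yours avoids the case analysis and the awkward $(1+\e)^3$ bookkeeping entirely, and in fact yields a sharper constant: your bound is roughly $768\,\opnorm{P_\star}^2\opnorm{H}^4\dx^2/(1-\rho)^4$, well under the target $(12\sqrt{2})^4 = 82944$ times the same quantity. It is also slightly more general, since Minkowski does not use independence of the $V_s$, only a uniform fourth-moment bound. The only blemish is the parenthetical ``$(1-\rho)^2 \geq (1-\rho)^4$'' remark, which is irrelevant to the comparison (both your bound and $B_X^4$ carry $(1-\rho)^{-4}$); the constant check is simply $4^4\cdot 3 \leq (12\sqrt{2})^4$, which holds with room to spare.
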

\begin{proof}
For any $a,b \in \R$ and $\e > 0$,
$(a+b)^4 \leq (1+\e)^3 a^4 + (1+1/\e)^3 b^4$. 
Hence for any $\e > 0$:
\begin{align}
    \E\norm{X_t}_{P_\star}^4 &= \E\norm{\sigma(A_\star X_t) + HV_t}_{P_\star}^4 \nonumber \\
    &\leq (1+\e)^3 \E\norm{\sigma(A_\star X_t)}_{P_\star}^4 + (1+1/\e)^3 \E\norm{HV_t}_{P_\star}^4 \nonumber \\
    &\leq (1+\e)^3 \rho^2 \E\norm{X_t}_{P_\star}^4 + (1+1/\e)^3 \E\norm{HV_t}_{P_\star}^4 &&\text{using \eqref{eq:glm_one_step_contraction}}. \label{eq:glm_fourth_moment_step_a}
\end{align}
Now, we first assume that $\rho \in [1/2, 1)$.
For any $\e \in [0, 1]$, we have 
that $(1+\e)^3 \leq 1 + 12\e$.
Choosing $\e = \frac{1-\rho^2}{24\rho^2}$, we have that $\e \leq 1$, and therefore continuing from \eqref{eq:glm_fourth_moment_step_a}:
\begin{align*}
    \E\norm{X_t}_{P_\star}^4 &\leq (1+12\e)\E\norm{X_t}_{P_\star}^4 + (1+1/\e)^3 \E\norm{HV_t}_{P_\star}^4 \\
    &=\frac{1 + \rho^2}{2} \E\norm{X_t}_{P_\star}^4 + \frac{24^3}{(1-\rho^2)^3}\E\norm{HV_t}_{P_\star}^4 \\
    &\leq \frac{1 + \rho^2}{2} \E\norm{X_t}_{P_\star}^4 + \frac{3 \cdot 24^3}{(1-\rho^2)^3}(\E\norm{HV_t}_{P_\star}^2)^2 &&\text{using \Cref{prop:gaussian_fourth_moment_bound}} \\
    &\leq \frac{1 + \rho^2}{2} \E\norm{X_t}_{P_\star}^4 + \frac{3 \cdot 24^3}{(1-\rho^2)^3}\tr(H^\T P_\star H)^2.
\end{align*}
Unrolling this recursion yields:
\begin{align*}
    \E\norm{X_t}_{P_\star}^4 \leq \frac{6 \cdot 24^3}{(1-\rho^2)^4}\tr(H^\T P_\star H)^2.
\end{align*}
We now handle the case when $\rho \in [0, 1/2)$.
Setting $\e = 2^{1/3}-1$ and starting from \eqref{eq:glm_fourth_moment_step_a}:
\begin{align*}
    \E\norm{X_t}_{P_\star}^4 &\leq \frac{1}{2}\E\norm{X_t}_{P_\star}^4 + 125 \E\norm{H V_t}_{P_\star}^4 \\
    &\leq \frac{1}{2} \E\norm{X_t}_{P_\star}^4 + 375 \tr(H^\T P_\star H)^2.
\end{align*}
Unrolling this recursion yields:
\begin{align*}
    \E\norm{X_t}^4_{P_\star} \leq 750\tr(H^\T P_\star H)^2.
\end{align*}
The claim now follows by taking the maximum of these two bounds
and using the inequalities
$\tr(H^\T P_\star H) \leq \opnorm{H}^2 \opnorm{P_\star} \dx$ and
$1-\rho^2 \geq 1-\rho$.
\end{proof}

This proof proceeds quite similarly to the linear dynamical
systems proof given in \Cref{sec:appendix:lds}.
We start again by defining the truncated GLM dynamics:
\begin{align}
    \bar{X}_{t+1} = \sigma(A_\star \bar{X}_t) + H \bar{V}_t, \:\: \bar{X}_0 = H \bar{V}_0, \:\: \bar{V}_t = V_t \ind\{\norm{V_t}_2 \leq R\}. \label{eq:GLM_dynamics_truncated}
\end{align}
We set $R = \sqrt{\dx} + \sqrt{2(1+\beta)\log{T}}$ where $\beta \geq 2$ is a free parameter.
Define the event $\calE$ as:
\begin{align}
    \calE := \left\{ \max_{0 \leq t \leq T-1} \norm{V_t}_2 \leq R \right\}. \label{eq:GLM_good_truncation_event}
\end{align}
Note that by the setting of $R$, we have $\Pr(\calE^c) \leq 1/T^{\beta}$ using standard Gaussian concentration results plus a union bound. Furthermore on $\calE$, the original GLM process
driven by Gaussian noise \eqref{eq:GLM_dynamics}
coincides with the truncated process \eqref{eq:GLM_dynamics_truncated}.
Let $\widehat{f}$ denote the LSE on the original process
\eqref{eq:GLM_dynamics_truncated}, and
let $\bar{f}$ denote the LSE on the truncated process \eqref{eq:GLM_dynamics_truncated}.
Hence:
\begin{align}
    \E\norm{\widehat{f} - f_\star}_{L^2}^2 &= \E \norm{\widehat{f} - f_\star}_{L^2}^2 \ind\{\calE\} + \E \norm{\widehat{f} - f_\star}_{L^2}^2 \ind\{\calE^c\} \nonumber \\
    &\leq \E\norm{\bar{f} - f_\star}_{L^2}^2 + \E \norm{\widehat{f} - f_\star}_{L^2}^2 \ind\{\calE^c\} \label{eq:glm_truncate_decomp}.
\end{align}
Let us now control the error term 
$\E \norm{\widehat{f} - f_\star}_{L^2}^2 \ind\{\calE^c\}$.
Write $\widehat{f}(x) = \sigma(\widehat{A} x)$, and put $\widehat{\Delta} = \widehat{A}-A_\star$.
We have:
\begin{align}
    \E \norm{\widehat{f} - f_\star}_{L^2}^2 \ind\{\calE^c\} &= \frac{1}{T} \sum_{t=0}^{T-1} \E\norm{\sigma(\widehat{A} X_t) - \sigma(A_\star X_t) }_2^2 \ind\{\calE^c\} \stackrel{(a)}{\leq} \frac{1}{T}\sum_{t=0}^{T-1} \E\norm{\widehat{\Delta} X_t}_2^2\ind\{\calE^c\} \nonumber \\
    &\stackrel{(b)}{\leq} \frac{4B^2}{T} \sum_{t=0}^{T-1} \E\norm{X_t}_2^2 \ind\{\calE^c\} 
    \stackrel{(c)}{\leq} \frac{4B^2}{T^{1 + \beta/2}}\sum_{t=0}^{T-1} \sqrt{\E\norm{X_t}_2^4} 
    \stackrel{(d)}{\leq} \frac{4B^2 B_X^2}{T^{\beta/2}}. \label{eq:glm_fhat_lse_error_term}
\end{align}
Here, (a) follows since $\sigma$ is $1$-Lipschitz,
(b) uses the definition of $\scrF$ in \eqref{eq:GLM_function_class}, (c) follows by 
Cauchy-Schwarz, and (d) uses \Cref{prop:glm_X_fourth_moment}.

The remainder of the proof is to bound the
LSE error $\E\norm{\bar{f} - f_\star}_{L^2}^2$.
First, we establish an almost sure bound on
$\{\bar{X}_t\}_{t \geq 0}$.
%
\begin{proposition}
\label{prop:glm_truncated_state_bounds}
Consider the truncated GLM process \eqref{eq:GLM_dynamics_truncated}.
Under \Cref{assumption:glm}, the process $\{\bar{X}_t\}_{t \geq 0}$ satisfies:
\begin{align}
    \sup_{t \in \N} \norm{\bar{X}_t}_{P_\star} \leq \frac{2\opnorm{P_\star}^{1/2} \opnorm{H} (\sqrt{\dx} + \sqrt{2(1+\beta)\log{T}})}{1-\rho} \triangleq B_{\bar{X}}. \label{eq:glm_B_Xbar}
\end{align}
\end{proposition}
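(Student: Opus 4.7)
}
The plan is to exploit the Lyapunov inequality \eqref{eq:glm_one_step_contraction} in the $\|\cdot\|_{P_\star}$ norm together with the hard truncation $\|\bar V_t\|_2 \le R$, and then unroll the resulting scalar recursion.

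First, I would apply the triangle inequality in the $P_\star$-norm to the update \eqref{eq:GLM_dynamics_truncated}, giving
\[
\|\bar X_{t+1}\|_{P_\star} \le \|\sigma(A_\star \bar X_t)\|_{P_\star} + \|H \bar V_t\|_{P_\star}.
\]
The first term is controlled by \eqref{eq:glm_one_step_contraction}, which yields $\|\sigma(A_\star \bar X_t)\|_{P_\star} \le \sqrt{\rho}\, \|\bar X_t\|_{P_\star}$. For the second term, I would use $P_\star \preccurlyeq \opnorm{P_\star} I$ to write $\|H\bar V_t\|_{P_\star} \le \opnorm{P_\star}^{1/2}\opnorm{H}\,\|\bar V_t\|_2$, and then the deterministic truncation bound $\|\bar V_t\|_2 \le R = \sqrt{\dx} + \sqrt{2(1+\beta)\log T}$ (which holds almost surely by construction of $\bar V_t = V_t\ind\{\|V_t\|_2 \le R\}$). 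Together these give the scalar contraction
\[
\|\bar X_{t+1}\|_{P_\star} \le \sqrt{\rho}\, \|\bar X_t\|_{P_\star} + \opnorm{P_\star}^{1/2}\opnorm{H}\, R.
\]

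Second, I would unroll this recursion starting from $\bar X_0 = H\bar V_0$, which already satisfies $\|\bar X_0\|_{P_\star} \le \opnorm{P_\star}^{1/2}\opnorm{H}\,R$. A geometric sum then yields
\[
\sup_{t\in\N}\|\bar X_t\|_{P_\star} \le \frac{\opnorm{P_\star}^{1/2}\opnorm{H}\,R}{1-\sqrt{\rho}}.
\]
Finally, rationalizing the denominator via $\frac{1}{1-\sqrt{\rho}} = \frac{1+\sqrt{\rho}}{1-\rho} \le \frac{2}{1-\rho}$ recovers exactly the claimed constant $B_{\bar X}$ in \eqref{eq:glm_B_Xbar}.

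There is no real obstacle here: the argument is entirely deterministic because truncation makes $\|\bar V_t\|_2$ almost surely bounded, and the Lyapunov assumption in \Cref{assumption:glm} already provides a one-step contraction in the correct weighted norm. The only minor subtlety is remembering to pay the factor $1/(1-\sqrt{\rho})$ rather than $1/(1-\rho)$ when summing the geometric series, which is why the stated constant has the factor of $2$ out front.
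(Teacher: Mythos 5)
Your proposal is correct and follows essentially the same route as the paper's proof: triangle inequality in the $P_\star$-norm, the one-step contraction $\norm{\sigma(A_\star x)}_{P_\star}\le\sqrt{\rho}\,\norm{x}_{P_\star}$ from the Lyapunov assumption, the almost-sure truncation bound $\norm{\bar V_t}_2\le R$, and unrolling the geometric recursion with $\frac{1}{1-\sqrt{\rho}}\le\frac{2}{1-\rho}$. No gaps.
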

\begin{proof}
By triangle inequality and \eqref{eq:glm_one_step_contraction}:
\begin{align*}
    \norm{\bar{X}_{t+1}}_{P_\star} &= \norm{ \sigma(A_\star \bar{X}_t) + H \bar{V}_t }_{P_\star} \leq \norm{\sigma(A_\star \bar{X}_t)}_{P_\star} + \norm{H \bar{V}_t}_{P_\star} \\
    &\leq \rho^{1/2} \norm{\bar{X}_t}_{P_\star} + \norm{H \bar{V}_t}_{P_\star} \leq \rho^{1/2} \norm{\bar{X}_t}_{P_\star} + \opnorm{P_\star^{1/2} H} R.
\end{align*}
Unrolling this recursion,
and using the fact that $\inf_{x \in [0, 1]} \frac{1-\sqrt{x}}{1-x} = 1/2$ yields the result.
\end{proof}

We next establish uniform bounds for the covariance matrices
of the truncated process.
\begin{proposition}
\label{prop:glm_trunc_cov_bounds}
Suppose $T \geq 4$.
Consider the truncated GLM process \eqref{eq:GLM_dynamics_truncated},
and let the covariance matrices for the process $\{\bar{X}_t\}_{t \geq 0}$ be denoted as $\bar{\Gamma}_t \triangleq \E[ \bar{X}_t \bar{X}_t^\T ]$.
Under \Cref{assumption:glm}:
\begin{align*}
    \frac{1}{2} HH^\T \preccurlyeq \bar{\Gamma}_t \preccurlyeq B_{\bar{X}}^2 \cdot I.
\end{align*}
\end{proposition}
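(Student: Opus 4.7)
}
The plan is to treat the two bounds separately, with the upper bound being an immediate consequence of the almost-sure norm bound already established and the lower bound following from a one-step noise decomposition together with the approximate isotropy of truncated Gaussians.

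For the upper bound, I will invoke \Cref{prop:glm_truncated_state_bounds} which gives $\|\bar{X}_t\|_{P_\star} \leq B_{\bar{X}}$ almost surely. Since \Cref{assumption:glm} guarantees $P_\star \succcurlyeq I$, this yields $\|\bar{X}_t\|_2 \leq \|\bar{X}_t\|_{P_\star} \leq B_{\bar{X}}$ almost surely. Hence for any unit vector $v \in \mathbb{S}^{\dx-1}$, $v^\T \bar{\Gamma}_t v = \E[(v^\T \bar{X}_t)^2] \leq B_{\bar{X}}^2$, which gives $\bar{\Gamma}_t \preccurlyeq B_{\bar{X}}^2 \cdot I$ directly.

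For the lower bound, I will decompose the dynamics as $\bar{X}_t = \sigma(A_\star \bar{X}_{t-1}) + H\bar{V}_{t-1}$ for $t \geq 1$ (and $\bar{X}_0 = H\bar{V}_0$ for the base case). Since $\bar{V}_{t-1}$ is independent of $\bar{X}_{t-1}$ and has zero mean by the symmetry of the truncation set $\{v : \|v\|_2 \leq R\}$, the cross term vanishes and
\begin{align*}
    \bar{\Gamma}_t = \E[\sigma(A_\star \bar{X}_{t-1})\sigma(A_\star \bar{X}_{t-1})^\T] + H\, \E[\bar{V}_{t-1}\bar{V}_{t-1}^\T]\, H^\T \succcurlyeq H\, \E[\bar{V}_{t-1}\bar{V}_{t-1}^\T]\, H^\T.
\end{align*}
To apply \Cref{prop:trunc_gauss_approximate_isotropic} and conclude $\E[\bar{V}_{t-1}\bar{V}_{t-1}^\T] \succcurlyeq \tfrac{1}{2} I$, I need to verify that the truncation probability satisfies $\Pr(\|V_{t-1}\|_2 > R) \leq 1/12$. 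Standard Gaussian norm concentration gives $\Pr(\|V_{t-1}\|_2 > \sqrt{\dx} + t) \leq e^{-t^2/2}$, and with $t = \sqrt{2(1+\beta)\log T}$ this bound becomes $T^{-(1+\beta)}$. Since the truncation radius was chosen with $\beta \geq 2$, we have $T^{-(1+\beta)} \leq T^{-3} \leq 4^{-3} = 1/64 \leq 1/12$ whenever $T \geq 4$, which is exactly the hypothesis of the proposition.

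The argument is short and I do not anticipate a genuine obstacle; the only step that requires slight care is tracking the two constraints ($\beta \geq 2$ from the truncation setup and $T \geq 4$ from the proposition hypothesis) to verify the $1/12$ threshold for \Cref{prop:trunc_gauss_approximate_isotropic}. The $t=0$ case is handled in the same way since $\bar{X}_0 = H\bar{V}_0$ directly gives $\bar{\Gamma}_0 = H\E[\bar{V}_0\bar{V}_0^\T]H^\T \succcurlyeq \tfrac{1}{2}HH^\T$.
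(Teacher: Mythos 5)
Your proof is correct and follows essentially the same route as the paper: the upper bound from the almost-sure norm bound of \Cref{prop:glm_truncated_state_bounds}, and the lower bound from the one-step decomposition with vanishing cross term plus \Cref{prop:trunc_gauss_approximate_isotropic}. The only difference is that you explicitly verify the $1/12$ truncation-probability threshold from $\beta \geq 2$ and $T \geq 4$, which the paper leaves implicit; this is a welcome bit of extra care rather than a divergence.
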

\begin{proof}
The upper bound is immediate from \Cref{prop:glm_truncated_state_bounds},
since $\E[\bar{X}_t\bar{X}_t^\T] \preccurlyeq \E[\norm{\bar{X}_t}_2^2] I \preccurlyeq B_{\bar{X}}^2 I$.
For the lower bound, it is immediate when $t=0$
using \Cref{prop:trunc_gauss_approximate_isotropic}.
On the other hand, for $t \geq 1$, since $\bar{V}_t$ is zero-mean:
\begin{align*}
    \E[ \bar{X}_t \bar{X}_t^\T] &= \E[(\sigma(A_\star \bar{X}_{t-1}) + H\bar{V}_{t-1})(\sigma(A_\star \bar{X}_{t-1}) + H\bar{V}_{t-1})^\T] \\
    &= \E[\sigma(A_\star \bar{X}_{t-1}) \sigma(A_\star \bar{X}_{t-1})^\T] + \E[ H \bar{V}_{t-1} \bar{V}_{t-1}^\T H^\T] 
    \succcurlyeq \E[ H \bar{V}_{t-1} \bar{V}_{t-1}^\T H^\T] \succcurlyeq \frac{1}{2}HH^\T.
\end{align*}
The last inequality again holds from \Cref{prop:trunc_gauss_approximate_isotropic}.
\end{proof}

\subsubsection{Trajectory hypercontractivity for truncated GLM}

For our purposes, the link function assumption in
\Cref{assumption:glm} ensures the following
approximate isometry inequality which holds for all $x \in \R^{\dx}$ and all
matrices $A,A' \in \R^{\dx \times \dx}$:
\begin{align}
    \zeta^2 \norm{Ax-A'x}_2^2 \leq \norm{\sigma(Ax)-\sigma(A'x)}_2^2 \leq \norm{Ax-A'x}_2^2. \label{eq:glm_approx_isometry}
\end{align}
This inequality is needed to establish trajectory hypercontractivity for $\scrF_\star$.
\begin{proposition}
\label{prop:glm_traj_hyp}
Suppose that $T \geq 4$.
Fix any matrix $A \in \R^{\dx \times \dx}$.
Under \Cref{assumption:glm}, the truncated process \eqref{eq:GLM_dynamics_truncated} satisfies:
\begin{align}
    \frac{1}{T} \sum_{t=0}^{T-1} \E\norm{\sigma(A \bar{X}_t) - \sigma(A_\star \bar{X}_t)}_2^4 \leq \frac{4B_{\bar{X}}^4}{\sigma_{\min}(H)^4 \zeta^4}  \left(\frac{1}{T}\sum_{t=0}^{T-1}\E\norm{\sigma(A \bar{X}_t) - \sigma(A_\star \bar{X}_t)}_2^2\right)^2.
\end{align}
Hence, the function class $\scrF_\star$ with $\scrF$ defined in
\eqref{eq:GLM_function_class} satisfies the $(C_{\mathsf{GLM}}, 2)$-trajectory
hypercontractivity condition
with $C_{\mathsf{GLM}} = \frac{4B_{\bar{X}}^4}{\sigma_{\min}(H)^4 \zeta^4}$.
\end{proposition}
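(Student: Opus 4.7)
The plan is to reduce trajectory hypercontractivity for the nonlinear class $\scrF_\star$ to the corresponding statement for the linear class of maps $x \mapsto \Delta x$ with $\Delta := A - A_\star$, and then exploit two structural features of the truncated process: the almost-sure bound $\norm{\bar X_t}_2 \leq B_{\bar X}$ from \Cref{prop:glm_truncated_state_bounds} (which uses $P_\star \succcurlyeq I$) and the covariance lower bound $\bar\Gamma_t \succcurlyeq \tfrac{1}{2}HH^\T$ from \Cref{prop:glm_trunc_cov_bounds}. The approximate-isometry estimate \eqref{eq:glm_approx_isometry} furnishes the two-sided bound
\[
\zeta^2 \norm{\Delta \bar X_t}_2^2 \leq \norm{\sigma(A \bar X_t) - \sigma(A_\star \bar X_t)}_2^2 \leq \norm{\Delta \bar X_t}_2^2,
\]
so after squaring and averaging, the task reduces to proving the linear hypercontractivity bound $\frac{1}{T}\sum_{t} \E\norm{\Delta \bar X_t}_2^4 \leq \frac{4 B_{\bar X}^4}{\sigma_{\min}(H)^4}\bigl(\frac{1}{T}\sum_{t} \E\norm{\Delta \bar X_t}_2^2\bigr)^2$; the claimed $\zeta^{-4}$ factor then appears by converting back to $\sigma$-differences (upper the numerator, lower the denominator).

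For the fourth moment, the almost-sure bound yields $\norm{\Delta \bar X_t}_2^2 \leq \opnorm{\Delta}^2 B_{\bar X}^2$ pathwise, so
\[
\frac{1}{T}\sum_{t=0}^{T-1} \E\norm{\Delta \bar X_t}_2^4 \leq \opnorm{\Delta}^4 B_{\bar X}^4.
\]
For the second moment, the covariance bound gives
\[
\frac{1}{T}\sum_{t=0}^{T-1} \E\norm{\Delta \bar X_t}_2^2 = \frac{1}{T}\sum_{t=0}^{T-1} \tr(\Delta \bar\Gamma_t \Delta^\T) \geq \tfrac{1}{2}\norm{\Delta H}_F^2 \geq \tfrac{1}{2}\sigma_{\min}(H)^2 \opnorm{\Delta}^2,
\]
i.e., $\opnorm{\Delta}^2 \leq \frac{2}{\sigma_{\min}(H)^2}\cdot \frac{1}{T}\sum_{t} \E\norm{\Delta \bar X_t}_2^2$. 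Squaring this and substituting into the fourth-moment bound produces the target inequality with constant $\frac{4 B_{\bar X}^4}{\sigma_{\min}(H)^4}$, and reinserting the $\zeta^{\pm 2}$ factors from \eqref{eq:glm_approx_isometry} yields the stated bound for arbitrary $A$.

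The second assertion, trajectory $(C_{\mathsf{GLM}}, 2)$-hypercontractivity for $\scrF_\star$, is then immediate: every element of $\scrF_\star$ has the form $x \mapsto \sigma(Ax) - \sigma(A_\star x)$ for some $A$ with $\norm{A}_F \leq B$, and the constant obtained is uniform in $A$. I do not foresee a genuine obstacle here; the only care needed is orienting the chain correctly, namely controlling $\opnorm{\Delta}$ from above by the $L^\infty$ bound on $\bar X_t$ inside the fourth moment while controlling it from below by the nondegeneracy of $\bar\Gamma_t$ inside the second moment.
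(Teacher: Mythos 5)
Your proof is correct and follows essentially the same route as the paper: bound the fourth moment using the almost-sure bound $\norm{\bar{X}_t}_2 \leq B_{\bar{X}}$, lower-bound the second moment via $\bar{\Gamma}_t \succcurlyeq \tfrac{1}{2}HH^\T$, and pass between the linear and nonlinear classes with the approximate isometry \eqref{eq:glm_approx_isometry}. The only (cosmetic) difference is that you route the comparison through $\opnorm{\Delta}^2$ via a pathwise bound, whereas the paper works with $\tr(\Delta^\T\Delta)$ via trace/H\"{o}lder manipulations; both land on the same constant $4B_{\bar{X}}^4/(\sigma_{\min}(H)^4\zeta^4)$.
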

\begin{proof}
Put $\Delta \triangleq A - A_\star$ and $M \triangleq \Delta^\T \Delta$.
We have:
\begin{align*}
    \E \norm{\Delta \bar{X}_t}_2^4 &= \E[ \bar{X}_t^\T M \bar{X}_t \bar{X}_t^\T M \bar{X}_t ] \\
    &\leq B_{\bar{X}}^2 \tr(M^2 \bar{\Gamma}_t) &&\text{using \Cref{prop:glm_truncated_state_bounds}} \\
    &\leq B_{\bar{X}}^2 \opnorm{M} \tr(M \bar{\Gamma}_t) &&\text{H{\"{o}}lder's inequality}\\
    &\leq B_{\bar{X}}^2 \tr(M) \tr(M \bar{\Gamma}_t) &&\text{since $M$ is positive semidefinite} \\
    &\leq B_{\bar{X}}^4 \tr(M)^2 &&\text{using \Cref{prop:glm_trunc_cov_bounds}} \\
    &\leq \frac{B_{\bar{X}}^4}{\lambda_{\min}(HH^\T)^2} \tr(M HH^\T)^2.
\end{align*}
On the other hand, by \Cref{prop:glm_trunc_cov_bounds}:
\begin{align*}
    \E\norm{\Delta \bar{X}_t}_2^2 = \tr(M \bar{\Gamma}_t) \geq \frac{1}{2}\tr(M HH^\T).
\end{align*}
Combining these bounds yields:
\begin{align*}
    \frac{1}{T}\sum_{t=0}^{T-1}\E\norm{\Delta \bar{X}_t}_2^4 \leq \frac{B_{\bar{X}}^4}{\lambda_{\min}(HH^\T)^2} \tr(M HH^\T)^2 \leq \frac{4B_{\bar{X}}^4}{\lambda_{\min}(HH^\T)^2}   \left( \frac{1}{T}\sum_{t=0}^{T-1} \E\norm{\Delta \bar{X}_t}_2^2 \right)^2.
\end{align*}
The claim now follows via the approximate isometry inequality \eqref{eq:glm_approx_isometry}.
\end{proof}

\subsubsection{Bounding the dependency matrix for truncated GLM}

We will use the result in \Cref{prop:wasserstein_bounds_tv_when_gaussian}
to bound the total-variation distance by the $1$-Wasserstein distance.
This is where the non-degenerate noise assumption in \Cref{assumption:glm}
is necessary.

The starting point is the observation that the diagonal Lyapunov function
in \Cref{assumption:glm} actually yields
\emph{incremental stability}~\citep{tran2016incremental} in addition to Lyapunov stability. In particular, let $\{a_i\}$ denote the rows of $A_\star$.
For any $x,x' \in \R^{\dx}$:
\begin{align}
    \norm{\sigma(A_\star x) - \sigma(A_\star x')}^2_{P_\star} &= \sum_{i=1}^{\dx} (P_\star)_{ii} (\sigma(\ip{a_i}{x}) - \sigma(\ip{a_i}{x'}))^2 \nonumber \\
    &\leq \sum_{i=1}^{\dx} (P_\star)_{ii} (\ip{a_i}{x} - \ip{a_i}{x'})^2 \nonumber \\
    &= (x-x')^\T A_\star^\T P_\star A_\star (x-x') \nonumber \\
    &\leq \rho \norm{x-x'}_{P_\star}^2. \label{eq:glm_incremental_stability}
\end{align}

This incremental stability property allows us to control the dependency matrix as follows.
\begin{proposition}
\label{prop:glm_ergodicity}
Consider the truncated GLM process $\{\bar{X}_t\}_{t \geq 0}$
from \eqref{eq:GLM_dynamics_truncated}.
Let $\Pxbar$ denote the joint distribution of $\{\bar{X}_t\}_{t=0}^{T-1}$.
Under \Cref{assumption:glm} and when $B \geq 1$,
we have that:
\begin{align*}
    \opnorm{\Gammadep(\Pxbar)} \leq \frac{22}{1-\rho} \log\left( \frac{B \sqrt{\dx}(B_{\bar{X}} + B_X)}{2\sigma_{\min}(H)}\right).
\end{align*}
\end{proposition}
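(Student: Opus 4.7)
The plan is to apply \Cref{prop:final_gamma_bound} to the truncated process, which reduces the task to obtaining exponentially decaying bounds on $\tvnorm{\sfP_{X_{t+k}}(\cdot \mid X_t = x) - \sfP_{X_{t+k}}}$ for the original (untruncated) Markov chain $\{X_t\}$, with $x$ ranging over the bounded set $\bar{\sfX}_t$. The full-rank assumption on $H$ from \Cref{assumption:glm} ensures the one-step Gaussian transition kernel is non-degenerate, so that all relevant conditional laws are absolutely continuous and \Cref{prop:wasserstein_bounds_tv_when_gaussian} applies.

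To bound each TV distance I would first use convexity of TV in its second argument to write
\[
\tvnorm{\sfP_{X_{t+k}}(\cdot \mid X_t = x) - \sfP_{X_{t+k}}} \leq \int \tvnorm{\sfP_{X_{t+k}}(\cdot \mid X_t = x) - \sfP_{X_{t+k}}(\cdot \mid X_t = x')}\,\mu_t(dx'),
\]
where $\mu_t$ is the marginal law of $X_t$. Applying \Cref{prop:wasserstein_bounds_tv_when_gaussian} to the final transition (with mean map $f = \sigma \circ A_\star$ having Euclidean Lipschitz constant at most $B$ since $\sigma$ is $1$-Lipschitz and $\|A_\star\|_F \leq B$, and covariance $\Sigma = HH^\T$) bounds the inner TV term by $\tfrac{B\sqrt{\dx}}{2\sigma_{\min}(H)}$ times $W_1$ of the laws of $X_{t+k-1}$ conditioned on $X_t = x$ versus $X_t = x'$. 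To control this Wasserstein distance, I would use a synchronous coupling driving both chains by a common Gaussian noise sequence: the iterated incremental stability inequality \eqref{eq:glm_incremental_stability} yields $\|X_{t+k-1} - Y_{t+k-1}\|_{P_\star} \leq \rho^{(k-1)/2} \|x - x'\|_{P_\star}$ almost surely, so $W_1 \leq \rho^{(k-1)/2} \|x - x'\|_{P_\star}$ using $P_\star \succcurlyeq I$.

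Integrating over $x' \sim \mu_t$, using $\|x\|_{P_\star} \leq B_{\bar X}$ for $x \in \bar{\sfX}_t$ from \Cref{prop:glm_truncated_state_bounds} and the $P_\star$-norm moment bound $(\E \| X_t \|_{P_\star}^4)^{1/4} \leq B_X$ (an intermediate estimate in the proof of \Cref{prop:glm_X_fourth_moment}, before converting to the Euclidean norm), I would arrive at
\[
\tvnorm{\sfP_{X_{t+k}}(\cdot \mid X_t = x) - \sfP_{X_{t+k}}} \leq c \, \rho^{(k-1)/2}, \qquad c \triangleq \frac{B\sqrt{\dx}(B_{\bar X} + B_X)}{2\sigma_{\min}(H)}.
\]
Plugging into \Cref{prop:final_gamma_bound} and splitting the resulting sum over $k$ at a threshold $k_0 = O(\log(c)/(1-\rho))$ (using $\sqrt{\tvnorm{\cdot}} \leq 1$ for $k \leq k_0$ and the geometric tail $\sqrt{c}\,\rho^{(k-1)/4}$, which sums to $O(1/(1-\rho))$, for $k > k_0$) yields a total of order $O(k_0 + 1/(1-\rho)) = O(\log(c)/(1-\rho))$, matching the claimed estimate.

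The main technical obstacle is the interplay between the Euclidean norm (in which \Cref{prop:wasserstein_bounds_tv_when_gaussian} and the Wasserstein distance naturally live) and the weighted $P_\star$-norm (in which \eqref{eq:glm_incremental_stability} exhibits contraction); using $P_\star \succcurlyeq I$ bridges the two cheaply, but one must keep careful track of which norm controls which quantity so that the constants $B_{\bar X}$ and $B_X$ (both of which are naturally $P_\star$-norm quantities) enter the final logarithm exactly as stated without introducing extraneous $\opnorm{P_\star}^{1/2}$ factors.
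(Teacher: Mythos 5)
Your proposal is correct and follows essentially the same route as the paper's proof: reduce to the untruncated chain via \Cref{prop:final_gamma_bound}, apply \Cref{prop:wasserstein_bounds_tv_when_gaussian} to the final Gaussian transition, control the resulting $W_1$ distance by a synchronous coupling together with the incremental stability inequality \eqref{eq:glm_incremental_stability}, and sum the geometric tail after a logarithmic threshold. The only cosmetic difference is that you decompose $\sfP_{X_{t+k}}$ as a mixture over $x' \sim \mu_t$ via convexity of TV before invoking the Wasserstein lemma, whereas the paper couples the conditional chain directly with the marginal chain; both yield the same constant $\frac{B\sqrt{\dx}(B_{\bar{X}}+B_X)}{2\sigma_{\min}(H)}$, and you correctly note that the $P_\star$-norm moment bound needed for $B_X$ is the intermediate estimate inside the proof of \Cref{prop:glm_X_fourth_moment}.
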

\begin{proof}
Let $\{X_t\}_{t \geq 0}$ denote the original GLM dynamics from \eqref{eq:GLM_dynamics}.
Fix indices $t \geq 0$ and $k \geq 1$.
We construct a coupling of $(\sfP_{X_{t+k}}(\cdot \mid X_t=x), \sfP_{X_{t+k}})$ as follows.
Let $\{V_t\}_{t \geq 0}$ be \iid\ $N(0, I)$.
Let $\{Z_s\}_{s \geq t}$ be the process such that
$Z_{t} = x$, and follows the GLM dynamics \eqref{eq:GLM_dynamics} using the noise $\{V_t\}_{t \geq 0}$
(we do not bother defining $Z_{t'}$ for $t' < t$ since we do not need it).
Similarly, let $\{Z'_s\}_{s \geq 0}$ be the process following the GLM dynamics \eqref{eq:GLM_dynamics}
using the same noise $\{V_t\}_{t \geq 0}$.
Now we have:
\begin{align*}
    \E\norm{Z_{t+k} - Z'_{t+k}}_{P_\star} &= \E\norm{\sigma(A_\star Z_{t+k-1}) - \sigma(A_\star Z'_{t+k-1})}_{P_\star} \\
    &\leq \rho^{1/2} \E\norm{Z_{t+k-1} - Z'_{t+k-1}}_{P_\star} &&\text{using \Cref{eq:glm_incremental_stability}}.
\end{align*}
We now unroll this recursion down to $t$:
\begin{align*}
    \E\norm{Z_{t+k} - Z'_{t+k}}_{P_\star} \leq \rho^{k/2} \E\norm{Z_t - Z'_t}_{P_\star} = \rho^{k/2} \E\norm{x - Z'_t}_{P_\star}.
\end{align*}
Since $P_\star \succcurlyeq I$, this shows that:
\begin{align*}
    W_1(\sfP_{X_{t+k}}(\cdot \mid X_t = x), \sfP_{X_{t+k}}) \leq \rho^{k/2} (\norm{x}_{P_\star} + \E\norm{X_{t}}_{P_\star}) \leq \rho^{k/2}(\norm{x}_{P_\star} + B_X),
\end{align*}
where the last inequality follows from \Cref{prop:glm_X_fourth_moment} and Jensen's inequality.

Next, it is easy to see the map $x \mapsto \sigma(A_\star x)$
is $\opnorm{A}$-Lipschitz.
Furthermore, since $H$ is full rank
by \Cref{assumption:glm}, then for any $t$ and $k \geq 1$ both
$\sfP_t$ and 
$\sfP_{X_{t+k}}(\cdot \mid X_t=x)$
are absolutely continuous
w.r.t.\ the Lebesgue measure in $\R^{\dx}$.
Using \Cref{prop:wasserstein_bounds_tv_when_gaussian}, we have for any 
$k \geq 2$:
\begin{align*}
    \tvnorm{\sfP_{X_{t+k}}(\cdot \mid X_t = x) - \sfP_{X_{t+k}}} &\leq \frac{\opnorm{A_\star} \sqrt{\tr((HH^\T)^{-1})}}{2}  W_1(\sfP_{X_{t+k-1}}(\cdot \mid X_t = x), \sfP_{X_{t+k-1}}) \\
    &\leq \frac{\opnorm{A_\star} \sqrt{\tr((HH^\T)^{-1})}}{2} \rho^{(k-1)/2} (\norm{x}_{P_\star} + B_X).
\end{align*}
Using \Cref{prop:final_gamma_bound} to bound $\opnorm{\Gammadep(\Pxbar)}$ (which is valid because we constrained $\beta \geq 2$),
and \Cref{prop:glm_truncated_state_bounds} to bound $x \in \bar{\sfX}_t$,
for any $\ell \geq 1$:
\begin{align*}
    \opnorm{\Gammadep(\sfP_{\bar{X}})} &\leq 3 + \sqrt{2} \sum_{k=1}^{T-1} \max_{t=0,\dots,T-1-k} \esssup_{x \in \bar{\sfX}_t} \sqrt{\tvnorm{\sfP_{X_{t+k}}(\cdot \mid X_t=x) - \sfP_{X_{t+k}}}} \\
    &\leq 3 + \sqrt{2}\ell + \left[ \frac{\opnorm{A_\star} \sqrt{\tr((HH^\T)^{-1})} (B_{\bar{X}} + B_X)}{2} \right]^{1/2} \sum_{k=\ell+1}^{T-1} \rho^{(k-1)/4} \\
    &\stackrel{(a)}{\leq} 5\ell + \left[ \frac{B \sqrt{\dx} (B_{\bar{X}} + B_X)}{2\sigma_{\min}(H)} \right]^{1/2} \frac{\rho^{\ell/4}}{1-\rho^{1/4}}.
\end{align*}
Above, (a) uses the bounds $\opnorm{A_\star} \leq B$ and 
$\tr(HH^{-1}) \leq \dx/\sigma_{\min}(H)^2$.
Now put $\psi \triangleq \frac{B \sqrt{\dx} (B_{\bar{X}} + B_X)}{2\sigma_{\min}(H)}$.
We choose $\ell = \bigceil{\frac{\log(\sqrt{\psi})}{1-\rho^{1/4}}}$ so that
$\rho^{\ell/4} \leq 1/\sqrt{\psi}$.
This yields:
\begin{align*}
    \opnorm{\Gammadep(\sfP_{\bar{X}})} &\leq \frac{11\log{\psi}}{2(1-\rho^{1/4})} \stackrel{(a)}{\leq} \frac{22 \log{\psi}}{1-\rho} = \frac{22}{1-\rho} \log\left( \frac{B \sqrt{\dx}(B_{\bar{X}} + B_X)}{2\sigma_{\min}(H)}\right).
\end{align*}
Above, (a) follows from 
$\inf_{x \in [0,1]} \frac{1-x^{1/4}}{1-x} = 1/4$.
\end{proof}

\subsubsection{Finishing the proof of \Cref{thm:glmthm}}

Below, we let $c_i$ be universal positive constants
that we do not track precisely.

For any $\e>0$ we now construct an $\e$-covering of $\scrF_\star \setminus B(r)$, with $\scrF_\star$ the offset class of $\scrF$
from \eqref{eq:GLM_function_class}. 
Note that we are not covering $\partial B(r)$ since the class
$\scrF_\star$ is not star-shaped. However, an inspection
of the proof of \Cref{thm:lucemthm} shows that 
one can remove the star-shaped assumption by instead
covering the set $\scrF_\star \setminus B(r)$.
To this end, we let $\{A_1,\dots,A_N\}$ be a $\delta$-cover of $\scrA \triangleq \{A \in \mathbb{R}^{\dx \times \dx} \mid \|A\|_F \leq B \}$,
for a $\delta$ to be specified.
By a volumetric argument we may choose $\{A_1,\dots,A_N\}$ such that $N \leq \left(1 +\frac{2B}{\delta} \right)^{\dx^2}$. 
Now, any realization of $\{\bar{X}_t\}$ will have norm less than
$B_{\bar{X}}$ from \eqref{eq:glm_B_Xbar},
where $B_{\bar{X}}$ is bounded by:
\begin{align*}
    B_{\bar{X}} \leq \frac{c_0\opnorm{P_\star}^{1/2} \opnorm{H} (\sqrt{\dx} + \sqrt{(1+\beta)\log{T}})}{1-\rho}.
\end{align*}
Now fix any $A \in \scrA$,
and let $A_i$ be an element in the $\delta$-cover satisfying $\norm{A - A_i}_F \leq \delta$.
We observe that for any $x$ satisfying $\norm{x}_2 \leq B_{\bar{X}}$:
\begin{align*}
    \norm{(\sigma(A_ix) - \sigma(A_\star x)) - (\sigma(A x) - \sigma(A_\star x))}_2 &= \norm{\sigma(A_i x) - \sigma(A x)}_2 \leq \norm{(A_i - A) x}_2 \\
    &\leq \norm{A_i-A}_F\norm{x}_2 \leq \delta B_{\bar{X}}.
\end{align*}
Thus, it suffices to take $\delta = \e /B_{\bar{X}}$
to construct the $\e$ cover of $\scrF_\star$,
i.e., $\calN_\infty(\scrF_\star, \e) \leq \left(1 +\frac{2BB_{\bar{X}}}{\e}\right)^{\dx^2}$. This then implies~\citep[see e.g.][Example 4.2.10]{vershynin2018}:
\begin{align*}
    \calN_\infty(\scrF_\star \setminus B(r), \e) \leq \calN_\infty(\scrF_\star, \e/2) \leq \left(1 +\frac{4BB_{\bar{X}}}{\e}\right)^{\dx^2}.
\end{align*}
Next, by \Cref{prop:glm_traj_hyp}, $(\scrF_\star, \Pxbar)$ is $(C_{\mathsf{GLM}},2)$-hypercontractive for all $T \geq 4$,
where 
\begin{align*}
    C_{\mathsf{GLM}} \leq \frac{4B_{\bar{X}}^4}{\sigma_{\min}(H)^4 \zeta^4} \leq \frac{c_1 \opnorm{P_\star}^2 \mathrm{cond}(H)^4 (\dx^2 + ((1+\beta)\log{T})^2) }{  \zeta^4 (1-\rho)^4 }.
\end{align*}
Furthermore, by \Cref{prop:glm_ergodicity}:
\begin{align*}
    \opnorm{\Gammadep(\Pxbar)}^2 \leq \frac{c_2}{(1-\rho)^2} \log^2\left( \frac{B \sqrt{\dx}(B_{\bar{X}} + B_X)}{2\sigma_{\min}(H)}\right) \triangleq \gamma^2.
\end{align*}
The class $\scrF_\star$ is $2BB_{\bar{X}}$-bounded on \eqref{eq:GLM_dynamics_truncated}.
Invoking \Cref{thm:themainthm}, for every $r>0$:
\begin{align}
    \E \|\bar{f}- f_\star\|_{L^2}^2 \leq 8 \E \bar{\sfM}_T(\mathscr{F}_\star) + r^2 + 4B^2B_{\bar{X}}^2 \left(1 +\frac{4\sqrt{8} B B_{\bar{X}}}{r} \right)^{\dx^2} \exp \left( \frac{-T  }{8C_{\mathsf{GLM}} \gamma^2 } \right). \label{eq:glm_truncated_LSE}
\end{align}
Here, the notation $\E \bar{\sfM}_T(\scrF_\star)$ is meant to emphasize
that the offset complexity is with respect to the truncated process
$\Pxbar$ and \emph{not} the original process $\Px$.
We now set $r^2 = \opnorm{H}^2 \dx^2/T$,
and compute a $T_0$ such that the third term in \eqref{eq:glm_truncated_LSE} is also bounded by $\opnorm{H}^2 \dx^2/T$.
To do this, it suffices to compute $T_0$ such that for all
$T \geq T_0$:
\begin{align*}
    T \geq c_3 C_{\mathsf{GLM}} \gamma^2 \dx^2 \log\left( \frac{T B B_{\bar{X}}}{\opnorm{H} \sqrt{\dx}} \right).
\end{align*}
It suffices to take (assuming $\beta$ is at most polylogarithmic in any problem constants):
\begin{align}
    T_0 \geq c_4
    \frac{\opnorm{P_\star}^{2} \mathrm{cond}(H)^4  \dx^4}{\zeta^4 (1-\rho)^6} \mathrm{polylog}\left( B, \dx, \opnorm{P_\star}, \mathrm{cond}(H), \frac{1}{\zeta}, \frac{1}{1-\rho} \right).
    \label{eq:glm_T0_bound}
\end{align}
Again, we do not attempt to compute the exact power of the 
polylog term, but note it can in principle be done via 
\citet[Lemma F.2]{du2021bilinear}.

Next, from \eqref{eq:glm_fhat_lse_error_term} we have that
the error term 
$\E \norm{\widehat{f} - f_\star}_{L^2}^2 \ind\{\calE^c\} \leq \frac{4B^2 B_X^2}{T^{\beta/2}}$.
Thus if we further constrain $\beta > 2$ and require $T_0 \geq c_5 \left[ \frac{B^2 \opnorm{P_\star}}{(1-\rho)^2} \right]^{\frac{1}{\beta/2-1}}$,
then 
$\E \norm{\widehat{f} - f_\star}_{L^2}^2 \ind\{\calE^c\} \leq \frac{\opnorm{H}^2 \dx^2}{T}$.
Note that setting $\beta = \max\{3, c_6 \log{B}\}$ suffices.

To finish the proof, it remains to bound $\E \bar{\sfM}_T(\scrF_\star)$.
Now, unlike the linear dynamical systems case,
there is no closed-form expression for $\E \bar{M}_T(\scrF_\star)$.
Hence, we will bound it via the chaining bound \eqref{eq:theformofthebound}. Before we can use the result, however, we need to verify that
the truncated noise process $\{H \bar{V}_t\}_{t \geq 0}$ is 
a sub-Gaussian MDS. The MDS part is clear since $\bar{V}_{t} \perp \bar{V}_{t'}$ for $t \neq t'$, and $\bar{V}_t$ is zero-mean.
Furthermore, \Cref{prop:truncated_sub_gaussian_bound}
yields that $H \bar{V}_t$ is a $4\opnorm{H}^2$-sub-Gaussian random vector. To finish the proof we need to bound the martingale offset complexity.

\paragraph{Bounding the Martingale Offset Complexity}
\label{sec:boundoffsetglm}

We will use the chaining bound \eqref{eq:theformofthebound} in particular. Thus,  we need to bound the covering number of $\scrF_\star$. Define
\begin{align*}
\mathbb{M}_{\dx,\dx}(B) = \{A \in \mathbb{R}^{\dx\times \dx} \mid \| A \|_F \leq B \}.
\end{align*}
By a standard volumetric argument we have that $\log \mathcal{N}(\mathbb{M}_{\dx,\dx}, \| \cdot \|_F , \e) \leq    \dx^2 \log (1+2B/\e)$. Let now $\{A_1,\dots, A_N\}$ be an optimal $\e$-cover of $\mathbb{M}_{\dx,\dx}$. Then for every $A \in \mathbb{M}_{\dx,\dx}$  we can find $A_i \in \{A_1,\dots A_N\}$ such that
\begin{align*}
\| \sigma (A \: \cdot \:) -\sigma(A_i \: \cdot \:)\|_\infty \leq  \sup_{x\in \sfX} \|(A - A_i)x\|_2 \leq  \|A - A_i\|_F \sup_{x\in \sfX} \| x\|_2 \leq  \e B_{\bar X}
\end{align*}
where we used the estimate $B_{\bar X}$ of \eqref{eq:glm_B_Xbar} and the fact that $\sigma$ is $1$-Lipschitz.

Hence any $\e$-covering of $\mathbb{M}_{\dx,\dx}(B)$ induces a $B_{\bar X}\e$-covering of $\scrF_\star$ and we  have established the upper bound
\begin{equation}
\begin{aligned}
\label{eq:glm_coverest}
 \log \mathcal{N}(\scrF_\star, \|\cdot\|_\infty, \e) &\leq \log \mathcal{N}(\mathbb{M}_{\dx,\dx},  \| \cdot \|_F , \e/B_{\bar X}B)\\
 &\leq    \dx^2 \log (1+2B_{\bar X}B/\e).
\end{aligned}
\end{equation}

Since the truncated process $H\bar V_t$ is $4\opnorm{H}$ sub-Gaussian, we may invoke \eqref{eq:theformofthebound} with $\delta=\gamma$. For any $\gamma>0$ we have for some constant $c'>0$:

\begin{equation}
    \begin{aligned}\label{eq:glm_martest}
      \E \bar{\sfM}_T(\scrF_\star)& \leq c' \left(\frac{ \opnorm{H}\log \mathcal{N}_\infty(\scrF_\star,\gamma)}{T}  + \gamma \sqrt{\dx\opnorm{H}}+\gamma^2\right)\\
      &\leq  c' \left(\frac{ \opnorm{H} \dx^2 \log (1+2B_{\bar X}B/\gamma)}{T}  + \gamma\sqrt{\dx\opnorm{H}} +\gamma^2\right)
    \end{aligned}
\end{equation}
where we substituted the estimate \eqref{eq:glm_coverest} into \eqref{eq:theformofthebound}. To arrive at the desired conclusion \eqref{eq:GLM_final_rate}, it suffices to take $\gamma^2 =  \frac{\dx \opnorm{H}}{T}$ and impose $T\geq T_0$.

\end{document}